\documentclass{article}

\usepackage[utf8]{inputenc} 
\usepackage[T1]{fontenc}    
\usepackage[backref=page]{hyperref}       
\usepackage{url}            
\usepackage{booktabs}       
\usepackage{amsfonts}       
\usepackage{nicefrac}       
\usepackage{microtype}      
\usepackage[dvipsnames]{xcolor}         
\usepackage{pifont}
\usepackage{subcaption}
\usepackage{fullpage}
\usepackage{natbib}
\usepackage{subcaption}
\usepackage{enumitem}
\usepackage{amsmath}
\usepackage{amsthm}
\usepackage{wrapfig}
\usepackage{diagbox}

\usepackage[capitalize,noabbrev]{cleveref}
\usepackage{mleftright}

\usepackage{graphicx}       

\usepackage{tikz}
\usepgflibrary{shapes.geometric}
\definecolor{gold(metallic)}{rgb}{0.83, 0.69, 0.22}

\usepackage{dsfont}
\theoremstyle{plain}
\newtheorem{theorem}{Theorem}[section]
\newtheorem{proposition}[theorem]{Proposition}
\newtheorem{lemma}[theorem]{Lemma}
\newtheorem{corollary}[theorem]{Corollary}
\theoremstyle{definition}
\newtheorem{definition}[theorem]{Definition}
\theoremstyle{remark}
\newtheorem{remark}[theorem]{Remark}

\newcommand{\w}{\mathbf{w}}

\newcommand{\Lw}{\Tilde{L}_S}
\newcommand{\xb}{\mathbf{x}}

\newcommand{\innerprod}[2]{\left\langle #1,#2 \right\rangle}
\DeclareMathOperator{\Bpe}{\mathcal{B}_\epsilon^p (x)}
\DeclareMathOperator{\rad}{\hat{\mathfrak{R}}_S}
\DeclareMathOperator*{\argmax}{\arg\!\max}
\DeclareMathOperator*{\argmin}{\arg\!\min}

\usepackage{bm}
\usepackage{dsfont}
\usepackage{nicematrix}

\renewcommand*{\backref}[1]{}
\renewcommand*{\backrefalt}[4]{%
    \ifcase #1 (Not cited.)%
    \or        (Cited on page~#2.)%
    \else      (Cited on pages~#2.)%
    \fi}

\title{The Price of Implicit Bias in \\ Adversarially Robust Generalization}
\date{}
\author{
\begin{tabular}{c}
Nikolaos Tsilivis$^1$\thanks{\texttt{nt2231@nyu.edu}. Part of this work was done while author was visiting Toyota Technological Institute of Chicago (TTIC).},
Natalie Frank$^{1}$,
Nathan Srebro$^{2}$,
Julia Kempe$^{1, 3}$
\\
\\
\normalsize{$^1$New York University, $^2$TTI-Chicago, $^3$Meta FAIR}\\
\end{tabular}
}

\begin{document}

\maketitle

\def\doublecolumn{0}

\begin{abstract}
We study the implicit bias of optimization in robust empirical risk minimization (robust ERM) and its connection with robust generalization.  In classification settings under adversarial perturbations with linear models, we study what type of regularization should ideally  be applied for a given perturbation set to improve (robust) generalization. We then show that the implicit bias of optimization in robust ERM can significantly affect the robustness of the model and identify two ways this can happen; either through the optimization algorithm or the architecture. We verify our predictions in simulations with synthetic data and experimentally study the importance of implicit bias in robust ERM with deep neural networks.
\end{abstract}

\section{Introduction}

Robustness is a highly desired property of any machine learning system. Since the discovery of adversarial examples in deep neural networks~\citep{Sze+14,Big+13}, adversarial robustness - the ability of a model to withstand small, adversarial, perturbations of the input at test time - has received significant attention. 
A canonical way to obtain a robust model $f$, parameterized by $\w$, is to optimize it for robustness during training, i.e. given a set of training examples $(\xb_i, y_i)_{i = 1}^m$, optimize the empirical, worst-case, loss $l$, where worst-case refers to a predefined threat model $\Delta(\cdot)$ which encodes our notion of proximity for the task:
\begin{equation}\label{eq:RERM}
\min_\w \frac{1}{m} \sum_{i = 1}^m \max_{\xb^\prime _i\in \Delta(\xb_i)} l\left(f(\xb_i^\prime;\w), y_i\right).
\end{equation}
This method of \textit{robust Empirical Risk Minimization} (robust ERM aka \textit{adversarial training} \citep{Mad+18}) has been the workhorse in deep learning for optimizing robust models in the past few years. However, despite the outstanding performance of deep networks in “standard” classification settings, the same networks under robust ERM lag behind; progress, in terms of absolute performance, has stagnated as measured on relevant benchmarks \citep{Cro+21} and predicted by experimental scaling laws for robustness \citep{Deb+23}, and any advances mainly rely on extreme amounts of synthetic data (see, e.g.,  \citep{Wan+23}).
Additionally, the (robust) generalization gap of neural networks obtained with robust ERM is large and, during training, networks typically exhibit overfitting \citep{RWK20}; (robust) test error goes up after initially going down, even though (robust) train error continues to decrease. How can we reconcile all this with the modern paradigm of deep learning, where overparameterized models interpolate their (even noisy) training data and seamlessly generalize to new inputs \citep{Bel21}? What is different in robust ERM?

In ``standard'' classification, it is now understood that the optimization procedure is responsible for \textit{capacity control} during ERM \citep{NTS15} and this in turn permits generalization. We use the term capacity control to refer to the way that our algorithm imposes constraints on the hypotheses considered during learning; this can be achieved by means of either explicit (e.g. weight decay \citep{KrHe91}) or implicit regularization \citep{NTS15} induced by the optimization algorithm \citep{Sou+18,Gun+18a}, the loss function \citep{Gun+18a}, the architecture \citep{Gun+18b} and more. This implicit bias of optimization towards empirical risk minimizers with small capacity (some kind of ``norm’’) is what allows them to generalize, even in the absence of explicit regularization \citep{Zha+17}, and can, at least partially, explain why gradient descent returns well-generalizing solutions \citep{Sou+18}. 

\begin{figure}
    \centering
    \begin{tikzpicture}[scale=1.05]

        \coordinate (A1) at (1,2.5);
        \coordinate (A2) at (1.8,2.5);
        \coordinate (A3) at (0.7,1.8);
        \coordinate (A4) at (1.5,1.7);
        \coordinate (A5) at (1.3,1.95);
        \coordinate (B1) at (1.2,4.2);
        \coordinate (B2) at (0.5,3.5);
        \coordinate (B3) at (0.4,4.3);
        \coordinate (B4) at (0.8,4.0);

        \coordinate (T1) at (0.9, 3.5);
        \coordinate (T4) at (1.1, 4.7);
        \coordinate (T2) at (0.5, 2.4);
        \coordinate (T3) at (0.3, 2.2);

        \draw[gray!30, fill=gold(metallic)!20, line width=2pt, rounded corners] (-1.3,1.2) rectangle (3.25,5.25);
        
        \foreach \p in {(T1),(T4)}{
            \filldraw[blue] \p node[minimum height=0.8cm,minimum width=0.8cm,draw,thick,fill=blue!10] {};
        }

        \foreach \p in {(T2),(T3)}{
            \filldraw[red] \p node[minimum height=0.8cm,minimum width=0.8cm,draw,thick,fill=red!10] {};   
        }

        \foreach \p in {(T1),(T4)} {
            \filldraw[blue] \p node[star, draw, minimum size=6pt,thick,fill=blue,inner sep=0pt] {};
        }

        \foreach \p in {(T2),(T3)} {
            \filldraw[red] \p node[star, draw, minimum size=6pt,thick,fill=red,inner sep=0pt] {};
        }
        \foreach \p in {(A1),(A2),(A3),(A4),(A5)} {
            \filldraw[red] \p circle (2pt);
        }
        \foreach \p in {(B1),(B2),(B3),(B4)} {
            \filldraw[blue] \p circle (2pt);
        }
        
        \draw[black, line width=1.5pt] (-0.7,1.55) -- (2.5,4.75);
        \node[black] at (-0.8,2.15) {\textit{min} $\ell_2$};
        
        
        \node[red] at (2.4,2.3) {Class A};
        \node[blue] at (-.1,4.) {Class B};
        
    \end{tikzpicture}
    \begin{tikzpicture}[scale=1.05]

        \coordinate (A1) at (1,2.5);
        \coordinate (A2) at (1.8,2.5);
        \coordinate (A3) at (0.7,1.8);
        \coordinate (A4) at (1.5,1.7);
        \coordinate (A5) at (1.3,1.95);
        \coordinate (B1) at (1.2,4.2);
        \coordinate (B2) at (0.5,3.5);
        \coordinate (B3) at (0.4,4.3);
        \coordinate (B4) at (0.8,4.0);

        \coordinate (T1) at (0.9, 3.5);
        \coordinate (T4) at (1.1, 4.7);
        \coordinate (T2) at (0.5, 2.4);
        \coordinate (T3) at (0.3, 2.2);

        \draw[gray!30,fill=gold(metallic)!20, line width=2pt, rounded corners] (-1.3,1.2) rectangle (3.25,5.25);
        
        \foreach \p in {(T1),(T4)}{
            \filldraw[blue] \p node[minimum height=0.8cm,minimum width=0.8cm,draw,thick,fill=blue!10] {};   
        }

        \foreach \p in {(T2),(T3)}{
            \filldraw[red] \p node[minimum height=0.8cm,minimum width=0.8cm,draw,thick,fill=red!10] {};   
        }

        \foreach \p in {(T1),(T4)} {
            \filldraw[blue] \p node[star, draw, minimum size=6pt,thick,fill=blue,inner sep=0pt] {};
        }

        \foreach \p in {(T2),(T3)} {
            \filldraw[red] \p node[star, draw, minimum size=6pt,thick,fill=red,inner sep=0pt] {};
        }

        \foreach \p in {(A1),(A2),(A3),(A4),(A5)} {
            \filldraw[red] \p circle (2pt);
        }
        \foreach \p in {(B1),(B2),(B3),(B4)} {
            \filldraw[blue] \p circle (2pt);
        }
        
        
        \draw[black, line width=1.5pt] (-0.9,3) -- (2.5,3);
        \node[black] at (2.5,3.2) {\textit{min} $\ell_1$};
        
        \node[red] at (2.4,2.3) {Class A};
        \node[blue] at (-.1,4.) {Class B};
        
    \end{tikzpicture}
    \includegraphics[scale=0.55]{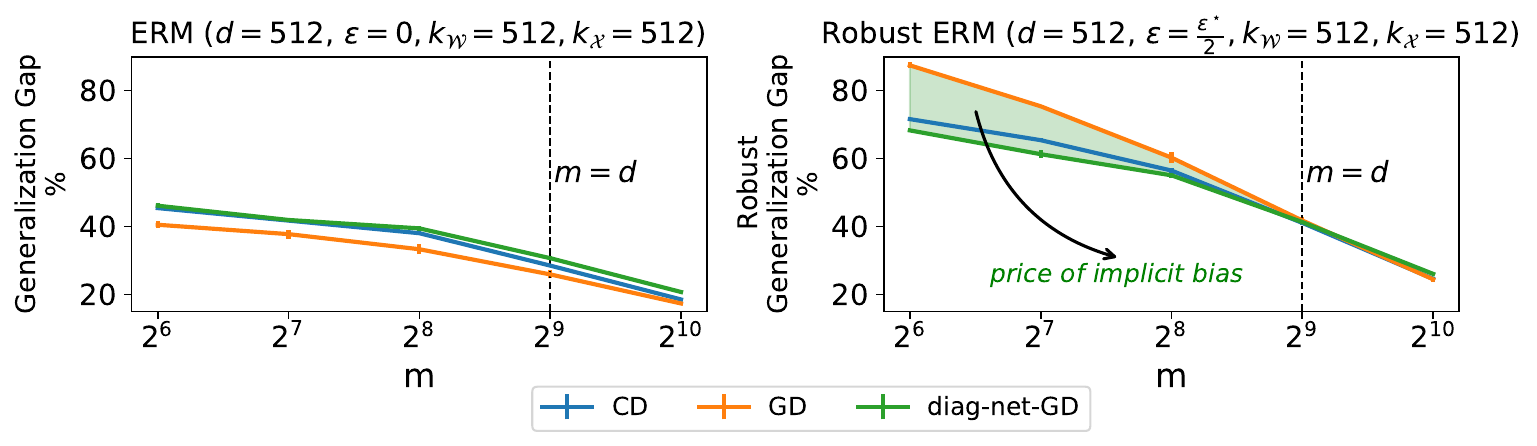}
    \caption{\textit{The price of implicit bias} in adversarially robust generalization. \textbf{Top}: An illustration of the role of geometry in robust generalization: a separator that maximizes the $\ell_2$ distance between the training points (circles) might suffer a large error for test points (stars) perturbed within $\ell_\infty$ balls, while a separator that maximizes the $\ell_\infty$ 
    distance might generalize better.
    \textbf{Bottom}: Binary classification of  Gaussian data with (\textit{right}) or without (\textit{left}) $\ell_\infty$ perturbations of the input in $\mathbb{R}^d$ using linear models. We plot the (robust) generalization gap, i.e., (robust) train minus (robust) test accuracy, of different learning algorithms versus the training size $m$. In standard ERM ($\epsilon=0$), the algorithms generalize similarly. In robust ERM, however, the implicit bias of gradient descent is hurting the robust generalization of the models, while the implicit bias of coordinate descent/gradient descent with diagonal linear networks aids it.
    See Section \ref{sec:experiments} for details.}
    \label{fig:ad-figure}
\end{figure}

\paragraph{Our contributions} In this work, we explore the implicit bias of optimization in \textbf{robust} ERM and study carefully how it affects the \textbf{robust generalization} of a model. In order to overcome the hurdles of the bilevel optimization in the definition of robust ERM (eq.~\eqref{eq:RERM}), we seek to first understand the situation in linear models, where the inner minimization problem admits a closed form solution. Prior work \citep{YRB19,AFM20} that studied generalization bounds for this class of models for $\ell_p$ norm-constrained perturbations observed that the hypothesis class (class of linear predictors) should better
be constrained in its $\ell_r$ norm with $r$ smaller than or equal to $p^\star$, where $p^\star$ is the dual of the perturbation norm $p$, i.e $\frac{1}{p} + \frac{1}{p^\star} = 1$. For instance, in the case of $\ell_\infty$ perturbations, these works postulated that searching for robust empirical risk minimizers with small $\ell_1$ norm is beneficial for robust generalization. In Section~\ref{sec:gen_bounds}, we further refine these arguments and demonstrate that there are also other factors, namely the sparsity of the data and the magnitude of the perturbation, which can influence the choice of the regularizer norm $r$. Nevertheless, in accordance with \citep{YRB19,AFM20}, we do identify cases where insisting on a suboptimal type of regularization makes generalization more difficult - much more difficult than in ``standard’’ classification.

This observation has significant implications for training robust models. The hidden gift of
optimization that allowed generalization in the context of ERM can now become a punishment in robust ERM, if implicit bias and threat model happen to be ``misaligned'' with each other. 
We call this the \textit{price of implicit bias} in adversarially robust generalization and demonstrate two ways this price can appear; either by varying the optimization algorithm or the architecture. In particular, we first focus on robust ERM over the class of linear functions $f_{\mathrm{lin}}(\xb; \w) = \innerprod{\w}{\xb}$ with \textit{steepest descent} with respect to an $\ell_r$ norm, a class of algorithms which generalizes gradient descent to other geometries besides the Euclidean (Section~\ref{ssec:steep_desc}). In the case of separable data, we prove that robust ERM with infinitesimal step size with the exponential loss asymptotically reaches a solution with minimum $\ell_r$ norm that classifies the training points robustly (Theorem~\ref{thm:imp-bias-sd}). Although this result is to be expected, given that standard ERM with steepest descent also converges to a minimum norm solution (without the robustness constraint, however) \citep{Gun+18a}, it lets us argue that, in certain cases, gradient descent-based robust ERM will generalize poorly \textit{despite} the existence of better alternatives - see Figure~\ref{fig:ad-figure} (bottom). 
We then turn our attention to study the role of architecture in robust ERM. In Section~\ref{ssec:diagonal}, we study the implicit bias of gradient descent-based robust ERM in models of the form $f_{\mathrm{diag}}(\xb; \mathbf{u}_+, \mathbf{u}_-) = \innerprod{\mathbf{u}_+^2 - \mathbf{u}_-^2}{\xb}$, commonly referred to as \textit{diagonal neural networks} \citep{Woo+20}. These are just reparameterized linear models $f_{\mathrm{lin}}$, and thus their expressive power does not change. Yet, as we show, robust ERM drives them to solutions with very different properties (Proposition~\ref{prop:equivalence}) than those of $f_{\mathrm{lin}}$, which can generalize robustly much better - see Figure~\ref{fig:ad-figure} (bottom). 

Finally, in Section~\ref{sec:experiments}, we perform extensive simulations with linear models over synthetic data which illustrate the theoretical predictions and, then, investigate the importance of implicit bias in robust ERM with deep neural networks over image classification problems. In analogy to situations we encountered in linear models, we find evidence that the choice of the algorithm and the induced implicit bias affect the final robustness of the model more and more as the magnitude of the perturbation increases.
\paragraph{Notation} Let $[m] = \{1, \ldots, m\}$.
The dual norm of a vector $\mathbf{z}$ is defined as $\|\mathbf{z}\|_{\star} = \sup_{\|\mathbf{x}\| \leq 1} \innerprod{\mathbf{z}}{\mathbf{x}}$.  The dual of an $\ell_p$ norm is the $\ell_{p^\star}$ with $\frac{1}{p} + \frac{1}{p^\star} = 1$. For a function $f: \mathbb{R}^d \to \mathbb{R}$, we use $\partial f(\mathbf{x})$ to denote the set of subgradients of $f$ at $\mathbf{x}$: $\partial f(\mathbf{x}) = \left\{ \mathbf{g} \in \mathbb{R}^d: f(\mathbf{z}) \geq f(\mathbf{x}) + \innerprod{\mathbf{g}}{\mathbf{z} - \mathbf{x}} \right\}$. We denote by $\xb^2 \in \mathbb{R}^d$ the element-wise square of $\mathbf{x}$. 

\section{Related work}\label{sec:rel-work}

\citet{YRB19,AFM20} derived generalization bounds for adversarially robust classification for linear models and simple neural networks, based on the notion of Rademacher Complexity \citep{KoPa02}, and form the starting point of our work (Section~\ref{sec:gen_bounds}). \citet{Gun+18a} studied the implicit bias of steepest descent in ERM for linear models, while \citet{Li+20} analyzed the implicit bias of gradient descent in robust ERM. Theorem~\ref{thm:imp-bias-sd} can be seen as a generalization of these results. In Section~\ref{ssec:diagonal}, we analyze robust ERM with gradient descent in diagonal neural networks, which were introduced in \citep{Woo+20} as a model of feature learning in deep neural networks.
The work of \citet{Fag+21} discusses the connection between optimization bias and adversarial robustness, yet with a very different focus than ours; the authors identify conditions where ``standard'' ERM produces maximally robust classifiers, and, leveraging results on the implicit bias of CNNs \citep{Gun+18b}, they design a new adversarial attack that operates in the frequency domain. We defer a full discussion of related work to Appendix~\ref{sec:ext-rel-work}.

\section{Capacity Control in Adversarially Robust Classification}\label{sec:gen_bounds}

We begin by studying the connection between \textit{explicit} regularization and \textit{robust generalization} error in linear models. In particular, we set out to understand how constraining the $\ell_r$ norm of a model affects its robustness with respect to $\ell_p$ norm perturbations, i.e., how $r$ interacts with $p$.

\subsection{Generalization Bounds for Adversarially Robust Classification}
We focus on binary classification with linear models over examples $\xb \in \mathcal{X} \subseteq \mathbb{R}^d$ 
and labels $y \in \{\pm 1\}$. We denote by $\mathcal{D}$ an unknown distribution over $\mathcal{X} \times \{\pm 1\}$. We assume access to $m$ pairs from $\mathcal{D}$, $S = \{(\xb_1, y_1), \ldots, (\xb_m, y_m)\}$. Let $\mathcal{H}_r$ be the class of linear hypotheses with a restricted $\ell_r$ norm:
\begin{equation}\label{eq:H}
    \mathcal{H}_r = \{\xb \mapsto \innerprod{\w}{\xb} : \|\w\|_r \leq \mathcal{W}_r\},
\end{equation}
where $\mathcal{W}_r > 0$ is an arbitrary upper bound.
We consider loss functions of the form $l(h(\xb), y) = l(yh(\xb))$, by explicitly overloading the notation with $l: \mathbb{R} \to [0, 1]$. The quantity $y h(\xb)$ is sometimes referred to as the \textit{confidence margin} of $h$ on $(\xb, y)$. We assume a threat model of $\ell_p$ balls of radius $\epsilon$ centered around the original samples and we define $\mathcal{G}_r$ to be the class of functions that map samples to their worst-case loss value, i.e. $\mathcal{G}_r = \{ (\xb, y) \mapsto \max_{\|\xb^\prime - \xb\|_p \leq \epsilon} l(y h(\xb^\prime)): h \in \mathcal{H}_r \}$. We define the (expected) risk and empirical risk of a hypothesis with respect to the worst-case loss as:
\begin{equation}\label{eq:loss_def}
    \begin{split}
        \Tilde{L}_{\mathcal{D}}(h) = \mathbb{E}_{(\xb, y) \sim \mathcal{D}} \left[ \max_{\|\xb^\prime - \xb\|_p \leq \epsilon} l(y h(\xb^\prime)) \right] \text{ and }
        \Tilde{L}_{S}(h) = \frac{1}{m} \sum_{i = 1}^m \max_{\|\xb^\prime_i - \xb_i\|_p \leq \epsilon} l\left(y_i h(\xb_i^\prime)\right),
    \end{split}
\end{equation}
respectively. Let us also define the robust 0-1 risk as: $\Tilde{L}_{\mathcal{D},01}(h) = \mathbb{E}_{(\xb, y) \sim \mathcal{D}} \left[ \max_{\|\xb^\prime - \xb\|_p \leq \epsilon} \mathds{1}\{ yh(\xb^\prime) \leq 0\} \right]$. 
Central to the analysis of the robust generalization error is the notion of the (empirical) Rademacher Complexity of the function class $\mathcal{G}_r$:
\begin{equation}
    \begin{split}
        \rad (\mathcal{G}_r) & = \mathbb{E}_{\sigma} \left[ \frac{1}{m} \sup_{g \in \mathcal{G}_r} \sum_{i = 1}^m \sigma_i g(\left( x_i, y_i \right)) \right] = \mathbb{E}_{\sigma} \left[ \frac{1}{m} \sup_{h \in \mathcal{H}_r} \sum_{i = 1}^m \sigma_i \max_{\|\xb^\prime_i - \xb_i\|_p \leq \epsilon} l(y_i h(\xb^\prime_i) ) \right],
    \end{split}
\end{equation}
where the $\sigma_i$'s
are Rademacher random variables.
If, additionally, we consider decreasing, Lipschitz, losses $l(\cdot)$, then, as observed by \citet{YRB19,AFM20},  we can equivalently analyse the following Rademacher Complexity $\rad (\Tilde{\mathcal{H}}_r) = \mathbb{E}_{\sigma} \left[ \frac{1}{m} \sup_{h \in \mathcal{H}_r} \sum_{i = 1}^m \sigma_i \min_{\|\xb^\prime_i - \xb_i\|_p \leq \epsilon}  y_i h(\xb^\prime_i) \right]$, and by taking the loss in $\Tilde{L}_{\mathcal{D}}(\cdot), \Lw(\cdot)$ in eq.~\eqref{eq:loss_def} to be the \textit{ramp loss}: $l (u) = \min\left(1, \max\left(0, 1 - \frac{u}{\rho}\right)\right), \; \rho > 0$,
we arrive at the following margin-based generalization bound.
\begin{theorem}\label{thm:margin_bound}\citep{MRT12,AFM20}
    Fix $\rho > 0$. For any $\delta > 0$, with probability at least $1 - \delta$ over the draw of the dataset $S$, for all $h \in \mathcal{H}_r$ with $\mathcal{H}_r$ defined as in eq.~\eqref{eq:H}, it holds:
    \begin{equation}\label{eq:margin_bound_ineq}
        \begin{split}
            \Tilde{L}_{\mathcal{D}}(h) & \leq \Tilde{L}_{S}(h) + \frac{2}{\rho} \rad (\Tilde{\mathcal{H}}_r) + 3 \sqrt{\frac{\log 2 / \delta}{2 m}}.
        \end{split}
    \end{equation}
\end{theorem}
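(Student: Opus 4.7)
The plan is to combine two standard ingredients, both of which are foreshadowed in the paragraph preceding the theorem statement. First, I would invoke the classical Rademacher-complexity generalization bound for uniformly bounded function classes: because the ramp loss takes values in $[0,1]$, every $g \in \mathcal{G}_r$ is $[0,1]$-valued, so McDiarmid's inequality followed by symmetrization (see, e.g., \citet{MRT12}, Theorem~3.3) yields, with probability at least $1 - \delta$ over the draw of $S$,
\begin{equation*}
\Tilde{L}_{\mathcal{D}}(h) \leq \Tilde{L}_S(h) + 2\, \rad(\mathcal{G}_r) + 3\sqrt{\frac{\log(2/\delta)}{2m}}, \qquad \forall h \in \mathcal{H}_r,
\end{equation*}
where the additive constant $3$ (rather than $1$) comes from a second application of bounded differences needed to replace the expected Rademacher complexity by its empirical counterpart.

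The second and more substantive step is to reduce $\rad(\mathcal{G}_r)$ to $\rad(\Tilde{\mathcal{H}}_r)$. Since the ramp loss $l$ is non-increasing,
\begin{equation*}
\max_{\|\xb'_i - \xb_i\|_p \leq \epsilon} l(y_i h(\xb'_i)) = l\!\left( \min_{\|\xb'_i - \xb_i\|_p \leq \epsilon} y_i h(\xb'_i) \right),
\end{equation*}
so every summand inside $\rad(\mathcal{G}_r)$ becomes the ramp loss applied to the worst-case confidence margin that defines $\rad(\Tilde{\mathcal{H}}_r)$. Because $l$ is $(1/\rho)$-Lipschitz, Talagrand's contraction lemma (applied, if needed, to $l - l(0)$, which leaves the Rademacher average unchanged since $\mathbb{E}[\sigma_i] = 0$) gives
\begin{equation*}
\rad(\mathcal{G}_r) \leq \tfrac{1}{\rho}\, \rad(\Tilde{\mathcal{H}}_r).
\end{equation*}
Substituting this bound into the display from Step~1 produces exactly eq.~\eqref{eq:margin_bound_ineq}.

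The argument is essentially routine and I do not anticipate a genuine obstacle. The only mildly delicate point is the $\max$--$l$ swap, which relies crucially on monotonicity of the loss; this is precisely why the ramp loss (rather than, say, the hinge or logistic loss) is the right choice for this style of bound. The contraction step is unproblematic even though the inner $\arg\min$ depends on $h$, because Talagrand's lemma controls $\mathbb{E}_\sigma \sup_h \sum_i \sigma_i \phi(f_h(z_i))$ by the Rademacher average of the class $\{f_h\}$ itself, and here $f_h(\xb_i, y_i) = \min_{\|\xb'_i - \xb_i\|_p \leq \epsilon} y_i h(\xb'_i)$, whose Rademacher average is $\rad(\Tilde{\mathcal{H}}_r)$ by definition.
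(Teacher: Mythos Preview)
Your proposal is correct and matches the approach the paper itself sketches: the theorem is stated with citation to \citet{MRT12,AFM20} rather than proved in the paper, and the preceding paragraph explicitly notes that for decreasing, Lipschitz losses one can ``equivalently analyse'' $\rad(\Tilde{\mathcal{H}}_r)$, which is exactly your two-step reduction (standard bounded-loss Rademacher bound, then monotonicity swap plus Talagrand contraction). One small quibble: your aside that the ramp loss is preferred over hinge or logistic because of the $\max$--$l$ swap is off---those losses are also non-increasing; the ramp loss is singled out because it is $[0,1]$-valued, which is what the McDiarmid step in Step~1 needs.
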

Margin bounds of this kind are attractive, since they promise that, if the empirical margin risk is small for a large $\rho$ then the second term in the RHS will shrink, and expected and empirical risk will be close. 
As shown in \citep{AFM20}, the above Rademacher complexity admits an upper bound (and a matching lower bound) of the form: 
\begin{equation}\label{eq:rad_dim_dependent}
    \rad (\Tilde{\mathcal{H}}_r) \leq \rad (\mathcal{H}_r) + \epsilon \frac{\mathcal{W}_r}{2 \sqrt{m}} \max\left( d^{\frac{1}{p^\star} - \frac{1}{r}}, 1\right),
\end{equation}
where $\rad (\mathcal{H}_r)$ is the ``standard'' Rademacher complexity.
As pointed out in \citep{AFM20}, there is a dimension dependence appearing in this bound, that is not present in the ``standard" case of $\epsilon=0$, and, thus, it makes sense to choose $r$ so that we eliminate that term. One such choice is of course $r = p^\star$, the dual of $p$. This made the works of \citet{YRB19,AFM20} to advocate for an $\ell_{p^\star}$ regularization during training, in order to minimize the complexity term and, hence, the robust generalization error. However, the factor $\mathcal{W}_r$ that appears in the RHS of eq.~\eqref{eq:rad_dim_dependent} might also depend on $r$ (and potentially $d$) so it is not entirely clear what the optimal choice of $r$ is for a problem at hand. 

\subsection{Optimal Regularization Depends on Sparsity of Data}\label{ssec:cases}
To illustrate the previous point, we place ourselves in the realizable setting, where there exists a linear ``teacher'' which labels the samples \textit{robustly}. That is, there is a vector $\w^\star \in \mathbb{R}^d$ which labels points and their neighbors with the same label: $y = \mathrm{sgn}\left(\innerprod{\w^\star}{\xb^\prime}\right)$ for all $\xb^\prime \in \{\mathbf{z} \in \mathbb{R}^d: \|\mathbf{z} - \xb\|_p \leq \epsilon\}$.
Let us specialize to hypothesis classes with bounded $\ell_1$ or $\ell_2$ norm, i.e. $\mathcal{H}_1,\mathcal{H}_2$.
Since the data are assumed to be labeled by a robust ``teacher'', the robust empirical risk that corresponds to the ramp loss can be driven to zero with a sufficiently large hypothesis class. The next Proposition provides a bound on the robust generalization of predictors who belong to such a class.
\sloppy
\begin{proposition}{(Generalization bound for robust interpolators)}\label{prop:rob_bounds}
    Consider a distribution $\mathcal{D}$ over $\mathbb{R}^d \times \{\pm 1\}$ with $\mathbb{P}_{(\xb, y) \sim \mathcal{D}} \left[ y = \mathrm{sgn}(\innerprod{\w^\star}{\xb}), \; \forall \xb^\prime: \|\xb^\prime - \xb\|_p \leq \epsilon \right] = 1$ for some $\w^\star \in \mathbb{R}^d$. Let $S \sim \mathcal{D}^m$ be a draw of a random dataset $S = \{(\xb_1, y_1), \ldots, (\xb_m, y_m)\}$ and let $\mathcal{H}_r^\prime = \left\{ \xb \mapsto \innerprod{\w}{\xb} : \|\w\|_r \leq \| \w^\star \|_r \; \land \; \w \in \argmax_{\|\mathbf{u}\|_r \leq 1} \min_{i \in [m]} \min_{\|\xb_i^\prime - \xb\|_p \leq \epsilon} y_i \innerprod{\mathbf{u}}{\xb_i^\prime} \right\}$ be a hypothesis class of maximizers of the robust margin. Then, for any $\delta > 0$, with probability at least $1 - \delta$ over the draw of the random dataset $S$, for all $h \in \mathcal{H}_r^\prime$, it holds:
    \begin{equation}\label{eq:margin_bound_ineq_explicit}
        \begin{split}
            \Tilde{L}_{\mathcal{D},01}(h) \leq 
            \begin{cases}
            \frac{1}{2\sqrt{m}} \left( \max_{i} \|\xb_i\|_\infty \|\w^\star\|_1 \sqrt{2 \log (2d)} + \epsilon \|\w^\star\|_1 \right) + 3 \sqrt{\frac{\log 2 / \delta}{2 m}}, \; r=1 \\
            \frac{1}{2\sqrt{m}} \left( \max_{i} \|\xb_i\|_2 \|\w^\star\|_2 + \epsilon \|\w^\star\|_2 d^{\max\left(\frac{1}{p^\star} - \frac{1}{2}, 0\right)}\right) + 3 \sqrt{\frac{\log 2 / \delta}{2 m}}, \; r=2.
            \end{cases}
        \end{split}
    \end{equation}
\end{proposition}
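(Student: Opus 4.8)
The plan is to apply the margin bound of Theorem~\ref{thm:margin_bound} with the ramp loss at a fixed scale $\rho$, and to show that for every $h\in\mathcal{H}_r'$ the empirical term $\widetilde L_S(h)$ vanishes while the complexity term $\rad(\widetilde{\mathcal H}_r)$ is controlled by eq.~\eqref{eq:rad_dim_dependent}. Since the ramp loss upper bounds the $0$--$1$ loss, $\widetilde L_{\mathcal D,01}(h)\le\widetilde L_{\mathcal D}(h)$, so once these two facts are in place the two cases of eq.~\eqref{eq:margin_bound_ineq_explicit} follow by inserting the explicit (non-robust) Rademacher complexities for $r=1$ and $r=2$.

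The first step is to fix the scale of $\w^\star$. Rescaling $\w^\star$ changes neither the $0$--$1$ risks nor the scale-invariant ratios appearing in the bound, so we may assume $\w^\star$ robustly separates the support of $\mathcal D$ with margin at least $1$: $y\innerprod{\w^\star}{\xb'}\ge1$ for all $(\xb,y)$ in the support and all $\xb'$ with $\|\xb'-\xb\|_p\le\epsilon$, equivalently $y\innerprod{\w^\star}{\xb}-\epsilon\|\w^\star\|_{p^\star}\ge1$. On the probability-one event that $S$ lies in the support, $\mathbf u=\w^\star/\|\w^\star\|_r$ is then feasible for the inner maximization defining $\mathcal H_r'$ and attains robust margin at least $1/\|\w^\star\|_r$ on $S$. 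Since the robust-margin functional $\mathbf u\mapsto\min_{i\in[m]}\min_{\|\xb_i'-\xb_i\|_p\le\epsilon}y_i\innerprod{\mathbf u}{\xb_i'}=\min_{i\in[m]}\bigl(y_i\innerprod{\mathbf u}{\xb_i}-\epsilon\|\mathbf u\|_{p^\star}\bigr)$ is positively homogeneous, its maximizer over the unit $\ell_r$-ball also has robust margin at least $1/\|\w^\star\|_r$; hence the corresponding $h\in\mathcal H_r'$, whose weight vector has $\ell_r$-norm $\|\w^\star\|_r$, satisfies $y_ih(\xb_i')\ge1$ for every $i$ and every $\xb_i'$ with $\|\xb_i'-\xb_i\|_p\le\epsilon$. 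Taking $\rho=1$ therefore gives $\widetilde L_S(h)=0$ for all $h\in\mathcal H_r'$, and since $\|\w\|_r\le\|\w^\star\|_r$ for all such $h$ we have $\mathcal H_r'\subseteq\mathcal H_r$ with $\mathcal W_r:=\|\w^\star\|_r$, so Theorem~\ref{thm:margin_bound} applies uniformly over $\mathcal H_r'$.

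It remains to bound $\rad(\widetilde{\mathcal H}_r)$ via eq.~\eqref{eq:rad_dim_dependent}, $\rad(\widetilde{\mathcal H}_r)\le\rad(\mathcal H_r)+\epsilon\frac{\mathcal W_r}{2\sqrt m}\maxd$, by substituting the standard estimate of the first (non-robust) term. For $r=1$, the dual/Massart inequality gives $\rad(\mathcal H_1)\le\frac{\mathcal W_1}{\sqrt m}\max_i\|\xb_i\|_\infty\sqrt{2\log(2d)}$ (from $\mathbb E_\sigma\|\sum_i\sigma_i\xb_i\|_\infty\le\sqrt{2\log(2d)}\max_i\|\xb_i\|_\infty\sqrt m$), and the dimension factor is $\max(d^{1/p^\star-1},1)=1$ since $p^\star\ge1$; for $r=2$, Jensen's inequality gives $\rad(\mathcal H_2)\le\frac{\mathcal W_2}{\sqrt m}\max_i\|\xb_i\|_2$, and the dimension factor is $\max(d^{1/p^\star-1/2},1)=d^{\max(1/p^\star-1/2,\,0)}$. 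Plugging these into eq.~\eqref{eq:margin_bound_ineq} together with $\mathcal W_r=\|\w^\star\|_r$, $\rho=1$, and $\widetilde L_S(h)=0$ yields the two displayed bounds in eq.~\eqref{eq:margin_bound_ineq_explicit} (the universal constants being absorbed into the normalization of $\w^\star$).

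I expect the real obstacle to be the middle step --- showing that the robust-margin maximizer makes the empirical robust ramp loss vanish at a scale $\rho$ that is an absolute constant. This hinges on (a) transferring robust realizability from $\mathcal D$ to $S$ (almost surely); (b) normalizing $\w^\star$ so that $\rho$ does not depend on the data-dependent robust margin of the teacher on $S$ --- otherwise one would need a union bound over a grid of scales, incurring extra logarithmic factors; and (c) the positive homogeneity and attainment on the $\ell_r$-sphere of the robust-margin functional, which is what lets one pass the margin guarantee from $\w^\star$ to an arbitrary element of $\mathcal H_r'$. The two Rademacher complexity estimates in the final step are routine.
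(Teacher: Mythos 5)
Your proposal is correct and follows essentially the same route as the paper's proof: apply Theorem~\ref{thm:margin_bound} with the ramp loss at $\rho=1$, use $\mathcal{H}_r^\prime\subseteq\mathcal{H}_r$ and the positive homogeneity of the robust-margin functional together with the feasibility of $\w^\star$ to make the empirical term vanish, and then insert the bound of eq.~\eqref{eq:rad_dim_dependent} with the standard $r=1,2$ Rademacher complexity estimates. Your identification of the normalization of $\w^\star$ as the delicate point is accurate (the paper handles it by rewriting the unit-ball argmax as a min-norm problem with margin constraint $\geq 1$, which is the same homogeneity argument in different clothing), and the remaining discrepancy in absolute constants is already present between the paper's main-text and appendix statements.
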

The proof appears in Appendix~\ref{sec:gen_bounds_proof} and follows from standard arguments based on the properties of the ramp loss and standard Rademacher complexity bounds.
Notice that eq.~\eqref{eq:margin_bound_ineq_explicit} depends on the various norms of $\w^\star$ and $\xb$, so we can consider specific cases in order to probe its behaviour in different regimes. In particular, we assume that all the entries of the vectors are normalized to be $\mathcal{O}(1)$. We call a vector $\mathbf{z} \in \mathbb{R}^d$ ``dense'' when it satisfies $\|\mathbf{z}\|_1 = \Theta(d)$ and $\|\mathbf{z}\|_2 = \Theta(\sqrt{d})$, while we call it ``$k$-sparse'' if  $\|\mathbf{z}\|_1 = \Theta(k)$ and $\|\mathbf{z}\|_2 = \Theta(\sqrt{k})$ for $k < d$. Let us also specialize to $p=\infty$. We enumerate the cases:
\begin{enumerate}
    \item \textit{Dense, Dense}: If both the ground truth vector $\w^\star$ and the samples $\xb$ (with probability 1) are dense, then the bounds evaluate to $\Theta\left(\frac{1}{\sqrt{m}} (d \sqrt{\log d} + \epsilon d)\right)$ and $\Theta\left(\frac{1}{\sqrt{m}} (d + \epsilon d)\right)$ for $r=1$ and $r=2$, respectively. In particular, for $\epsilon=0$, the $r=2$ bound is smaller only by a logarithmic factor, and as $\epsilon$ increases the bounds should behave the same. So, we expect an $\ell_2$ regularization to yield smaller generalization error for $\epsilon=0$, while for larger $\epsilon$, $\ell_2$ and $\ell_1$ regularization should perform roughly similarly.
    \item $k$-\textit{Sparse, Dense}: If the ground truth vector is $k$-sparse and the samples are dense, then the bounds yield $\Theta\left(\frac{1}{\sqrt{m}} (k \sqrt{\log d} + \epsilon k)\right)$ and $\Theta\left(\frac{1}{\sqrt{m}} (\sqrt{d} \sqrt{k} + \epsilon \sqrt{k} \sqrt{d})\right)$ for $r=1$ and $r=2$, respectively. For $k=\mathcal{O}(1)$, $\ell_1$ regularization is expected to generalize better than $\ell_2$ already for $\epsilon=0$. As $\epsilon$ increases, $\ell_2$ regularized solutions should continue generalizing worse, as the ``worst-case'' dimension-dependent term makes its appearance.
    \item \textit{Dense}, $k$-\textit{Sparse}: If $\w^\star$ is dense and the samples $\xb$ are $k$-sparse, then we get $\Theta\left(\frac{1}{\sqrt{m}} (d \sqrt{\log d} + \epsilon d)\right)$ and $\Theta\left(\frac{1}{\sqrt{m}} (\sqrt{k} \sqrt{d} + \epsilon d)\right)$ for $r=1$ and $r=2$, respectively. The $r=2$ bounds provides more favorable guarantees in this case, even for $\epsilon > 0$. 
    \item $k$-\textit{Sparse}, $k$-\textit{Sparse}: If both $\w^\star$ and $\xb$ are $k$-sparse, then we have $\Theta\left(\frac{1}{\sqrt{m}} (k \sqrt{\log d} + \epsilon k)\right)$ and $\Theta\left(\frac{1}{\sqrt{m}} (k + \epsilon \sqrt{k} \sqrt{d})\right)$ for $r=1$ and $r=2$, respectively. For $\epsilon=0$, $\ell_1$ and $\ell_2$ regularization should behave similarly, but, as $\epsilon$ increases, $\ell_2$ regularization starts ``paying'' the ``worst-case'' dimension-dependent term, making the $\ell_1$ solution more appealing.
\end{enumerate}
\begin{table}[h]
\centering
\begin{tabular}{c!{\vrule width 0.75pt}p{5cm}|p{5cm}} 
\toprule
\diagbox[width=4em, height=2em]{$\w$}{$\xb$} & \textit{Sparse} & \textit{Dense} \\
\toprule
\textit{Sparse} & \rule{0pt}{4ex} $\ell_1, \ell_2$ similar as $\epsilon \to 0$, $\ell_1$ better as $\epsilon \uparrow 0$ & \rule{0pt}{4ex} $\ell_1$ better as $\epsilon \to 0, \epsilon \uparrow 0$ \\ 
\hline
\textit{Dense} & \rule{0pt}{4ex} $\ell_2$ better as $\epsilon \to 0, \epsilon \uparrow 0$ & \rule{0pt}{4ex} $\ell_1, \ell_2$ similar as $\epsilon \to 0, \epsilon \uparrow 0$ \\
\bottomrule
\end{tabular}
\vspace{0.3cm}
\caption{A summary of the expected generalization behavior for the various distributions of Section~\ref{ssec:cases}. $\epsilon$ denotes the strength of $\ell_\infty$ perturbations and $\ell_1, \ell_2$ denote the type of regularization applied to the solution.}
\label{tab:cases}
\end{table}

Notice how the ``price'' of robustness especially manifests itself in Case 4, where our input is ``embedded'' in a $k$-dimensional space: the bounds are very similar for $\epsilon=0$, but as soon as $\epsilon$ becomes positive, the extra penalty of $\ell_2$ solutions over $\ell_1$ grows with dimension. Moreover, Case 3 highlights that an $\ell_1$ regularization is not always optimal for $\ell_\infty$ perturbations.
To summarize, we see that the optimal choice of regularization depends not only on the choice of norm $p$ and the value of $\epsilon$, but also on the sparsity of the data-generating process (see also Table~\ref{tab:cases} for a summary). In particular, in order for the dimension-dependent term to appear in the $r=2$ bound, the model $\w^\star$ itself needs to be sparse. We provide extensive simulations with such distributions in Section~\ref{ssec:expr_linear}.

\section{Implicit Biases in Robust ERM} \label{sec:theory}

In the previous section, we saw that the way we choose to constrain our hypothesis class can significantly affect the robust generalization error. In this section, we connect this with the implicit bias of optimization during robust ERM and 
demonstrate cases where the implicit regularization is either working in favor of robust generalization or against it. The term implicit bias refers to the tendency of optimization methods to infuse their solutions with properties that were not explicitly ``encoded'' in the loss function. It usually describes the asymptotic behavior of the algorithm.
We study two ways that an implicit bias can affect robustness in robust ERM: through the optimization algorithm and through the parameterization of the model.

\subsection{Price of Implicit Bias from the Optimization Algorithm}\label{ssec:steep_desc}

In this section, we study the implicit bias of robust ERM in linear models with \textit{steepest descent}, a family of algorithms which generalizes gradient descent to other than the Euclidean geometries. We focus on minimizing the worst-case exponential loss, which has the same asymptotic properties as the logistic or cross-entropy loss (see e.g. \citep{Tel13,Sou+18,LyLi20}): 
\begin{equation}\label{eq:loss_thm}
    \Tilde{L}_S(h) : = \Tilde{L}_S(\w) = \sum_{i = 1}^m \max_{\|\xb_i^\prime - \xb_i\|_p \leq \epsilon}\exp(-y_i \innerprod{\w}{\xb_i^\prime}) = \sum_{i = 1}^m \exp(-y_i \innerprod{\w}{\xb_i} + \epsilon \|\w\|_{p^\star}).
\end{equation}
The above corresponds to choosing $l(u) = \exp{(-u)}$ in the definition of eq.~\eqref{eq:loss_def}.
We first proceed with some definitions about the margin and the separability of a dataset.
\begin{definition}
    We call $\ell_p$-\textit{margin} of a dataset $\{ \xb_i, y_i \}_{i = 1}^m$ the quantity $\max_{\w \neq 0} \min_{i \in [m]} \frac{y_i  \innerprod{\w}{\xb_i}}{\|\w\|_{p^\star}}$.
\end{definition}
\begin{definition}\label{def:lin-sep}
    A dataset $\{ \xb_i, y_i \}_{i = 1}^m$ is \emph{$(\epsilon, p)$-linearly separable} if $\max_{\w \neq 0} \min_{i \in [m]} \frac{y_i \innerprod{\w}{\xb_i}}{\|\w\|_{p^\star}} \geq \epsilon$.
\end{definition}
Geometrically, the $\ell_p$-margin of a dataset captures the largest possible $\ell_p$-distance of a decision boundary to their
closest data point $\xb_i$ (see Lemma \ref{lemma:calculate_margin} for completeness).
Requiring separability is a natural starting point for understanding training methods that succeed in fitting their training data and has been widely adopted in prior work \citep{Sou+18,Li+20,LyLi20}).

\paragraph{Steepest Descent}
\textit{(Normalized)} steepest descent is an optimization method which updates the variables with a vector which has unit norm, for some choice of norm, and aligns maximally with minus the gradient of the objective function \citep{BoVa14}. Formally, the update for normalized steepest descent with respect to a norm $\lVert \cdot \rVert$ for a loss $L_S(\w)$ is given by:
\begin{equation}\label{eq:steep_desc}
    \begin{split}
        \w_{t+1} & = \w_{t}+\eta_t \Delta \w_t,\text{ where }\Delta \w_{t}\text{ satisfies }  \\
        & \Delta \w_t = \argmin_{\| \mathbf{u} \| \leq 1} \innerprod{\mathbf{u}}{\nabla L_S(\w_t)}.
    \end{split}
\end{equation}
Unnormalized steepest descent, or simply steepest descent, further scales the magnitude of the update by $\lVert \nabla L_S (\w_t) \rVert_\star$, where $\lVert \cdot \rVert_\star$ denotes the dual norm of $\lVert \cdot \rVert$. The case $\lVert \cdot \rVert = \lVert \cdot \rVert_2$ corresponds to familiar gradient descent. We will be interested in understanding the steepest descent trajectory, when minimizing $\Tilde{L}_S$ from eq.~\eqref{eq:loss_thm}, in the limit of infinitesimal stepsize, i.e. steepest flow dynamics:
\begin{equation}
    \frac{d \w}{dt} \in \left\{\mathbf{v} \in \mathbb{R}^d: \mathbf{v} \in \argmin_{\mathbf{u} \in \mathbb{R}^d: \| \mathbf{u} \| \leq \|\mathbf{g}\|_\star} \innerprod{\mathbf{u}}{\mathbf{g}}, \mathbf{g} \in \partial \Lw \right\}.
\end{equation}
Notice that loss $\Lw$ is not differentiable everywhere (due to the norm in the exponent), but we can consider subgradients in our analysis.
We are ready to state our result for the asymptotic behavior of steepest flow in minimizing the worst-case exponential loss.
\begin{theorem}\label{thm:imp-bias-sd}
    For any $(\epsilon, p)$-linearly separable dataset and any initialization $\w_0$, consider steepest flow with respect to the $\ell_r$ norm, $r \geq 1$, on the worst-case exponential loss $\Lw(\w) = \sum_{i = 1}^m \max_{\|\xb_i^\prime - \xb_i\|_p \leq \epsilon}\exp(-y_i \innerprod{\w}{\xb_i^\prime})$. Then, the iterates $\w_t$ satisfy:
    \begin{equation}\label{eq:thm_eq}
        \begin{split}
            \lim_{t\to\infty} \min_i \min_{\| \xb_i^\prime - \xb_i\|_p \leq \epsilon} & \frac{y_{i}\innerprod{\w_{t}}{\xb_i^\prime}}{\|\w_{t}\|_{r}} = \max_{\w \neq 0} \min_{i} \min_{\| \xb_i^\prime - \xb_i\|_p \leq \epsilon} \frac{{y_{i}\innerprod{\w}{\xb_i^\prime}}}{\|\w\|_{r}}.
        \end{split}
    \end{equation}
\end{theorem}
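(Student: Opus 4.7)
The plan is to mimic and adapt the implicit bias analysis of \citet{Gun+18a} for steepest descent on the exponential loss to the robust setting. The crucial observation (already recorded in eq.~\eqref{eq:loss_thm}) is that the inner max over the perturbation set is explicit, so
$$\Lw(\w) = \sum_{i=1}^m \exp(-q_i(\w)),\qquad q_i(\w) := y_i\innerprod{\w}{\xb_i}-\epsilon\|\w\|_{p^\star}.$$
Each $q_i$ is positively homogeneous of degree $1$ in $\w$, so the quantity whose limit appears on the left-hand side of~\eqref{eq:thm_eq} is exactly $\mu_r(\w):=\min_i q_i(\w)/\|\w\|_r$, and its supremum over $\w\neq 0$ equals the right-hand side $\gamma_r^\star$. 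The target becomes $\mu_r(\w_t)\to\gamma_r^\star$.

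First I would establish two basic facts. (i) $\Lw(\w_t)$ is non-increasing along the steepest flow: by definition $\innerprod{\Delta\w_t}{\mathbf{g}}\leq 0$ for the selected subgradient $\mathbf{g}\in\partial\Lw(\w_t)$. (ii) $(\epsilon,p)$-separability with positive slack $\gamma>0$ forces $\Lw(\w_t)\to 0$. For (ii), pick a separator $\hat\w$ with $\|\hat\w\|_r=1$ and $q_i(\hat\w)\geq\gamma$; duality gives
$$\|\nabla\Lw(\w_t)\|_{r^\star}\;\geq\;\innerprod{\hat\w}{-\nabla\Lw(\w_t)}\;=\;\sum_i e^{-q_i(\w_t)}\,q_i(\hat\w)\;\geq\;\gamma\,\Lw(\w_t),$$
where I use $-\nabla\Lw(\w)=\sum_i e^{-q_i(\w)}\nabla q_i(\w)$ together with the sub-homogeneity identity $\innerprod{\hat\w}{\nabla q_i(\w)}\geq q_i(\hat\w)$ (Euler-type inequality for the concave function $q_i$, following from $\partial(-\|\cdot\|_{p^\star})$ at $\w$ being supported in the dual unit ball). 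Steepest descent then reduces $\Lw$ at a rate proportional to $\|\nabla\Lw\|_{r^\star}^2\geq\gamma^2\Lw^2$, which integrates to $\Lw(\w_t)=O(1/t)$ and hence $\min_i q_i(\w_t)\to+\infty$, $\|\w_t\|_r\to\infty$.

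Next I would convert the loss decay into the claimed margin guarantee. The inequality $\Lw(\w)\geq\exp(-\min_i q_i(\w))$ gives at once
$$\min_i q_i(\w_t)\;\geq\;\log(1/\Lw(\w_t))\;\geq\;\log t+O(1),$$
so I need a matching upper bound on $\|\w_t\|_r$. The same Euler argument, applied to an optimal max-margin direction $\w^\star$ with $\|\w^\star\|_r=1$ and $\min_i q_i(\w^\star)=\gamma_r^\star$, yields $\frac{d}{dt}\innerprod{\w^\star}{\w_t}\leq\|\Delta\w_t\|_r\cdot 1$ on the one hand and a sharper lower bound $\frac{d}{dt}\innerprod{\w^\star}{\w_t}\geq \gamma_r^\star\Lw(\w_t)^{-1}\cdot\|\nabla\Lw(\w_t)\|_{r^\star}/C$ via the steepest descent rule on the other; combining these controls $\|\w_t\|_r\lesssim \log t/\gamma_r^\star+o(\log t)$. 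Dividing the two bounds and taking $t\to\infty$ yields $\liminf_{t\to\infty}\mu_r(\w_t)\geq\gamma_r^\star$; the reverse inequality $\mu_r(\w_t)\leq\gamma_r^\star$ is trivial, finishing the proof.

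The main obstacle will be handling the non-smoothness. The norm $\|\cdot\|_{p^\star}$ is non-differentiable for $p^\star\in\{1,\infty\}$, and the steepest descent direction is not unique when $r\in\{1,\infty\}$ either, so the entire argument must be carried out with Clarke subdifferentials (as in \citet{LyLi20}). A second subtlety is that the update direction $\Delta\w_t$ should asymptotically concentrate on the KKT direction of the max-$\ell_r$-robust-margin program $\max_{\|\w\|_r\leq 1}\min_i q_i(\w)$, because as $t\to\infty$ the softmax weights $e^{-q_i(\w_t)}/\Lw(\w_t)$ become supported on the robust "support vectors" $\argmin_i q_i(\w_t)$. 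This concentration is what lets the integration in the previous paragraph be tight. Importantly, I do not need to show that $\w_t/\|\w_t\|_r$ itself converges to a unique direction—only that the scalar quantity $\mu_r(\w_t)$ converges to $\gamma_r^\star$—which is why the $\limsup/\liminf$ sandwich suffices and bypasses uniqueness questions left open by non-strict convexity of $\|\cdot\|_r$.
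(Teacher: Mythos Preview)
Your core ingredient is exactly the paper's: the duality inequality $\|\partial\Lw(\w)\|_{r^\star}\ge\gamma_r^\star\,\Lw(\w)$, which you derive via the ``Euler-type'' bound $\innerprod{\hat\w}{\nabla q_i(\w)}\ge q_i(\hat\w)$. That part is correct and matches the paper's Lemma~\ref{lem:duality}.

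Where you diverge from the paper, and where there is a genuine gap, is the final step. You try to obtain \emph{separate} asymptotics $\min_i q_i(\w_t)\gtrsim\log t$ and $\|\w_t\|_r\lesssim(\log t)/\gamma_r^\star$ and then divide. The second of these is not established: your upper bound $\frac{d}{dt}\innerprod{\w^\star}{\w_t}\le\|\Delta\w_t\|_r$ uses H\"older in the wrong pairing (you know $\|\w^\star\|_r=1$, not $\|\w^\star\|_{r^\star}=1$), and your lower bound on $\frac{d}{dt}\innerprod{\w^\star}{\w_t}$ implicitly identifies $\Delta\w_t$ with $-\nabla\Lw(\w_t)$, which is only true for $r=2$. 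Even granting both bounds, they control $\innerprod{\w^\star}{\w_t}$, not $\|\w_t\|_r$, and converting between the two again runs into the $r$ vs.\ $r^\star$ mismatch. The concentration-on-support-vectors argument you sketch is also unnecessary.

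The paper's finish is much more direct and avoids all of this. From steepest flow one has $\innerprod{\partial\Lw}{\dot\w}=-\|\partial\Lw\|_{r^\star}\|\dot\w\|_r$, hence
\[
\frac{d}{dt}\log\frac{1}{\Lw}
=\frac{\|\partial\Lw\|_{r^\star}\,\|\dot\w\|_r}{\Lw}
\;\ge\;\gamma_r^\star\,\|\dot\w\|_r,
\]
using only the duality lemma you already proved. Integrating and applying the triangle inequality to the path,
\[
\min_i q_i(\w_t)\;\ge\;\log\frac{1}{\Lw(\w_t)}\;\ge\;\gamma_r^\star\int_0^t\|\dot\w\|_r\,ds+C_0\;\ge\;\gamma_r^\star\,\|\w_t-\w_0\|_r+C_0.
\]
Dividing by $\|\w_t\|_r$ and using $\|\w_t\|_r\to\infty$ gives $\liminf\mu_r(\w_t)\ge\gamma_r^\star$ immediately. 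No separate rate estimates, no KKT or softmax-concentration argument, and no need to track $\innerprod{\w^\star}{\w_t}$.
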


Theorem~\ref{thm:imp-bias-sd} can be seen as a generalization of the results of \citet{Gun+18a} to robust ERM (for any $\ell_p$ perturbation norm), modulo our continuous time analysis. The choice of analyzing continuous-time dynamics was made to avoid many technical issues related to the non-differentiability of the norm, which do not affect the asymptotic behavior of the algorithm. \citet{Li+20} studied the implicit bias of gradient descent in robust ERM, and showed that it converges to the minimum $\ell_2$ solution that classifies the training points robustly, which agrees with the special case of $r=2$ in Theorem~\ref{thm:imp-bias-sd}.
For the proof, we need to lower bound the margin at all times $t$ with a quantity that asymptotically goes to the maximum margin. This requires a \textit{duality} lemma that relates the (sub) gradient of the loss with the maximum margin, and generalizes previous results that only apply to either gradient descent, or the unperturbed loss, but not to both. The proof appears in Appendix \ref{sec:proof_thm}. 

\begin{remark}\label{rmk:thm}
    The right hand side of eq.~\eqref{eq:thm_eq} is equivalent to:
    \begin{align}\label{eq:min_norm_steep}
        \begin{split}
            \min_{\w} \|\w\|_r \;\;\;
            \text{s.t. } \min_{\|\xb^\prime_i - \xb_i\|_p \leq \epsilon} y_i\innerprod{\w}{\xb_i^\prime} \geq 1, \; \forall i \in [m].
        \end{split}
    \end{align}
    Thus, the solution converges, in direction, to the hyperplane with the smallest $\ell_r$ norm which classifies the training points correctly (and robustly). As a result, we can leverage Proposition~\ref{prop:rob_bounds} to reason about the robust generalization of the solution returned by steepest descent.
    An equivalent viewpoint of~\eqref{eq:min_norm_steep}, first observed by \cite{Li+20} about a version of this result for gradient descent ($r=2$), is the following:
    \begin{align}
        \begin{split}
            \min_{\w} \|\w\|_r + \lambda(\epsilon, m) \|\w\|_{p^\star} \;\;\;
            \text{s.t. }  y_i\innerprod{\w}{\xb_i} \geq 1, \; \forall i \in [m],
        \end{split}
    \end{align}
    for some $\lambda(\epsilon, m) > 0$. Thus, the problem can be understood as performing norm minimization for a norm which is a linear combination of the algorithm norm $r$ and the dual of the perturbation norm $p^\star$. The coefficient of the latter increases with $\epsilon$, which, hereby, means that the bias induced from the perturbation starts to dominate over the bias of the algorithm with increasing $\epsilon$.
\end{remark}


In light of Section \ref{sec:gen_bounds} and Proposition~\ref{prop:rob_bounds}, we see that the implications of this result are twofold. First, on the negative side, Theorem~\ref{thm:imp-bias-sd} implies that robust ERM with gradient descent ($\ell_2$) can harm the robust generalization error if $p=\infty$. For instance, as we saw in Cases 2 and 4 in Section~\ref{ssec:cases}, gradient descent will suffer dimension dependent statistical overheads. On the positive side, Theorem~\ref{thm:imp-bias-sd} supplies us with an algorithm that can achieve the desired regularization. In Cases 2 and 4 this would correspond to steepest descent with respect to $r=1$. In general, we have the following corollary:
\begin{corollary}\label{cor:p_pstar}
    Minimizing the loss of eq.~\eqref{eq:loss_thm} with steepest flow with respect to the $\ell_{p^\star}$ norm (on $(\epsilon, p)$ separable data) convergences to a minimum $\ell_{p^\star}$ norm solution that classifies all the points correctly.
\end{corollary}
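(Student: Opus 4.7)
The plan is to specialize Theorem~\ref{thm:imp-bias-sd} to the case $r = p^\star$ and simplify the resulting optimization problem using the dual-norm identity that appears already in eq.~\eqref{eq:loss_thm}.

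First, I would apply Theorem~\ref{thm:imp-bias-sd} with $r = p^\star$ to conclude that, under $(\epsilon, p)$-separability, the steepest-flow iterates $\w_t$ converge in direction to a maximizer of the robust $\ell_{p^\star}$-margin
\[
\max_{\w \neq 0} \min_{i} \min_{\|\xb_i^\prime - \xb_i\|_p \leq \epsilon} \frac{y_i \innerprod{\w}{\xb_i^\prime}}{\|\w\|_{p^\star}}.
\]
By the dual characterization $\max_{\|\mathbf{u}\|_p \leq \epsilon} \innerprod{\w}{\mathbf{u}} = \epsilon\|\w\|_{p^\star}$, the inner minimum evaluates to $y_i \innerprod{\w}{\xb_i} - \epsilon \|\w\|_{p^\star}$, so the expression becomes
\[
\max_{\w \neq 0} \left[ \min_{i} \frac{y_i \innerprod{\w}{\xb_i}}{\|\w\|_{p^\star}} \; - \; \epsilon \right].
\]
Because the $-\epsilon$ term is constant in $\w$, the maximizing direction coincides with the maximizer of the (non-robust) $\ell_p$-margin.

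Second, I would invoke Remark~\ref{rmk:thm} to translate this into the min-norm viewpoint: the asymptotic direction equals that of the minimizer of
\[
\min_{\w} \|\w\|_{p^\star} \quad \text{s.t.} \quad y_i \innerprod{\w}{\xb_i} - \epsilon\|\w\|_{p^\star} \geq 1, \; \forall i \in [m].
\]
A brief rescaling argument closes the loop with the standard min-norm classifier: if $\w^\circ$ solves $\min \|\w\|_{p^\star}$ s.t.\ $y_i \innerprod{\w}{\xb_i} \geq 1$, with $\|\w^\circ\|_{p^\star} = 1/\gamma$ for the $\ell_p$-margin $\gamma$, then $(\epsilon, p)$-separability ensures $\gamma \geq \epsilon$ (strict, in fact, since otherwise there is no robust classifier with finite norm, a case I would briefly rule out), and $c\,\w^\circ$ with $c = 1/(1-\epsilon/\gamma)$ satisfies the robust constraints with equality on the active samples. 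Conversely, any feasible $\w$ of the robust program with norm $N$ yields $\w/(1+\epsilon N)$ feasible for the standard program, giving $N \geq 1/(\gamma - \epsilon)$; hence the two problems share the same optimal direction.

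Combining the two steps shows that $\w_t / \|\w_t\|_{p^\star}$ converges to the direction of the minimum $\ell_{p^\star}$ norm linear separator of the training data, as claimed. The argument is essentially bookkeeping on top of Theorem~\ref{thm:imp-bias-sd}; there is no genuine obstacle, the only subtlety being the mild check that $(\epsilon, p)$-separability guarantees $\gamma > \epsilon$ so that the rescaling $c$ is finite and positive.
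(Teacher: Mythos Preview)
Your proposal is correct and mirrors the paper's intended reasoning: the corollary is presented there as an immediate consequence of Theorem~\ref{thm:imp-bias-sd} with $r=p^\star$, the point being that the perturbation contributes $-\epsilon\|\w\|_{p^\star}$ in the numerator while the same norm sits in the denominator, so the robust margin is just the standard $\ell_p$-margin shifted by $-\epsilon$ and the maximizing direction is unchanged. Your second step (the rescaling between the robust and standard min-norm programs) is fine but redundant once step one is in hand, and the edge case $\gamma=\epsilon$ need not be excluded since the directional conclusion still holds there.
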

The notable case of steepest descent w.r.t. the $\ell_1$ norm is called \textit{coordinate descent}. It amounts to updating at each step only the coordinate that corresponds to the largest absolute value of the gradient (Appendix~\ref{sec:steepest_descent_details}). 
In Section \ref{ssec:expr_linear}, we demonstrate how robust ERM w.r.t. $\ell_\infty$ perturbations with coordinate descent, can enjoy much smaller robust generalization error than gradient descent.

Finally, although the perturbation magnitude $\epsilon$ did not influence the conversation so far in terms of the choice of the algorithm, it is important to note that, as $\epsilon$ increases, the max-margin solution will look similar for any choice of norm.
In fact, in the limiting case of the largest possible $\epsilon$ that does not violate the separability assumption, all max-margin separators are the same - see Lemma~\ref{lemma:all_hyperplanes_ell_p} - so the implicit bias will cease to be important for generalization.

\subsection{Price of Implicit Bias from Parameterization}\label{ssec:diagonal}

We reasoned in the previous section that robust 
ERM with gradient descent over the class of linear functions of the form $f_{\mathrm{lin}}(\xb;\w) = \innerprod{\w}{\xb}$ can result in excessive (robust) test error for $\ell_\infty$ perturbations. We now demonstrate how the same algorithm, but applied to a \textit{different architecture}, can induce much more robust models. In particular, consider the following architecture:
\begin{equation}\label{eq:diag_def}
    f_{\mathrm{diag}}(\xb;\mathbf{u}) = \innerprod{\mathbf{u}_+^{2} - \mathbf{u}_-^{2}}{\xb}, \mathbf{u} = [\mathbf{u}_+, \mathbf{u}_-] \in \mathbb{R}^{2d}, \mathbf{x} \in \mathbb{R}^d,
\end{equation}
which consists of a reparameterization of $f_{\mathrm{lin}}$. In terms of expressive power, the two architectures are the same.
However, optimizing them can result in very different predictors.
In fact, this class of homogeneous models, known as \textit{diagonal linear networks}, have been the subject of case studies before for understanding feature learning in deep networks, because, whilst linear in the input, they can exhibit non-trivial behaviors of feature learning \citep{Woo+20}. In order to study the implicit bias of robust ERM with gradient descent on $f_{\mathrm{diag}}$, we leverage a result by \citep{LyZh22} which shows that, under certain conditions, the implicit bias of gradient flow based robust ERM for homogeneous networks, is towards solutions with small $\ell_2$ norm.
\begin{theorem}[Paraphrased Theorem 5 in \citep{LyZh22}]\label{thm:adv_train_homog}
Consider gradient flow minimizing a worst-case exponential loss $\Lw(\mathbf{u}) = \frac{1}{m} \sum_{i = 1}^m \max_{\|\xb_i - \xb_i^\prime\|_p \leq \epsilon} e^{- y_i f(\xb_i^\prime; \mathbf{u})}$, for a homogeneous, locally Lipschitz, network $f(\mathbf{x; \cdot}) : \mathbb{R}^p \to \mathbb{R}$, and assume that for all times $t>0$ and for each point $\xb_i$ the perturbation $\argmax_{\|\xb_i - \xb_i^\prime\|_p \leq \epsilon} e^{- y_i f(\xb_i^\prime; \mathbf{u})}$ is scale invariant and that the loss gets minimized, i.e. $\Lw(\mathbf{u}) \stackrel{t \to \infty}{\to} 0$. Then, $\mathbf{u}$ converges in direction to a KKT point of the following optimization problem:
\begin{align}
    \begin{split}
        \min_{\mathbf{u}} \frac{1}{2} \|\mathbf{u}\|_2^2
        \;\;\; \text{s.t.} \min_{\|\xb^\prime_i - \xb_i\|_p \leq \epsilon} y_i f(\xb_i^\prime;\mathbf{u}) \geq 1, \; \forall i \in [m].
    \end{split}
\end{align}
\end{theorem}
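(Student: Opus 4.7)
The plan is to reduce this robust problem to the existing framework of implicit bias for gradient flow on homogeneous networks (Lyu--Li for the standard case), by exploiting the structure that worst-case exponential loss inherits when the adversarial maximizer is scale invariant. Since $\exp$ is monotone, one has
\begin{equation*}
\Lw(\mathbf{u}) = \frac{1}{m}\sum_{i=1}^m \exp(-q_i(\mathbf{u})),\qquad q_i(\mathbf{u}) := \min_{\|\xb_i'-\xb_i\|_p \leq \epsilon} y_i f(\xb_i';\mathbf{u}).
\end{equation*}
The scale-invariance hypothesis, combined with $L$-positive homogeneity of $f$ in $\mathbf{u}$, then forces each robust margin $q_i$ to be itself $L$-positively homogeneous: if $\xb_i^\star$ is the optimal perturbation at $\mathbf{u}$ it remains optimal at $c\mathbf{u}$, so $q_i(c\mathbf{u}) = c^L q_i(\mathbf{u})$ for $c>0$. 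This places us exactly in the standard homogeneous-network setting, with the $q_i$ playing the role of per-example classification margins.

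First I would define the smoothed margin $\tilde\gamma(\mathbf{u}) := -\log\Lw(\mathbf{u})$ and its normalized counterpart $\bar\gamma(\mathbf{u}) := \tilde\gamma(\mathbf{u})/\|\mathbf{u}\|_2^L$, and prove the core monotonicity lemma: $\bar\gamma$ is non-decreasing along gradient flow. This is the robust analog of the Lyu--Li smoothed-margin lemma, and the argument is identical in spirit. Differentiating $\bar\gamma$ along the trajectory $\dot{\mathbf{u}} = -\nabla\Lw$, the numerator evolves as $\|\nabla\Lw\|_2^2/\Lw$, while the denominator evolves proportionally to $\innerprod{\mathbf{u}}{\nabla\Lw}/\|\mathbf{u}\|_2^{2}$. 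Euler's identity $\innerprod{\mathbf{u}}{\mathbf{g}_i} = L\,q_i(\mathbf{u})$ for $\mathbf{g}_i\in\partial q_i$ converts the latter quantity into a weighted average of margins, and Cauchy--Schwarz closes the inequality. Combined with the assumption $\Lw\to 0$, homogeneity forces $\|\mathbf{u}(t)\|_2\to\infty$, since $\Lw$ would otherwise be bounded below on a bounded trajectory.

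Next I would establish directional convergence and the KKT certification. Monotonicity of $\bar\gamma$ together with its upper bound (finiteness of the max-margin problem, guaranteed by separability implied by $\Lw\to 0$) yields a limit $\bar\gamma^\star$; invoking a Kurdyka--Lojasiewicz / desingularization argument on a definable extension of the dynamics promotes this to convergence of $\mathbf{u}(t)/\|\mathbf{u}(t)\|_2$ to a single limit direction $\bar{\mathbf{u}}^\star$. To identify $\bar{\mathbf{u}}^\star$ as a KKT point, I would examine $\dot{\mathbf{u}}= -\nabla\Lw = \frac{1}{m}\sum_i e^{-q_i(\mathbf{u})}\,\mathbf{g}_i(\mathbf{u})$ with $\mathbf{g}_i\in\partial q_i$. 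The normalized weights $e^{-q_i}/\sum_j e^{-q_j}$ concentrate exponentially fast on the set of minimum-margin indices, so in the limit $\bar{\mathbf{u}}^\star$ becomes a conic combination $\sum_{i\in\mathrm{active}}\lambda_i \mathbf{g}_i$ with $\lambda_i\geq 0$ and $\lambda_i(q_i-1) = 0$ after the standard rescaling - precisely the KKT conditions for $\min\tfrac{1}{2}\|\mathbf{u}\|_2^2$ subject to $q_i(\mathbf{u})\geq 1$.

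The hard part is navigating the non-smoothness: $f$ is only locally Lipschitz and the inner max further destroys differentiability, so every derivative above must live in the Clarke subdifferential. The essential technical input is a Danskin-type envelope theorem giving $\partial q_i(\mathbf{u}) = \mathrm{conv}\{y_i \nabla_{\mathbf{u}} f(\xb_i^\star;\mathbf{u}): \xb_i^\star \text{ attains the inner min}\}$, together with verification that scale invariance of the maximizer propagates $(L{-}1)$-positive homogeneity to elements of this subdifferential so that Euler's identity survives. The subdifferential chain rule through $\Lw$ and the o-minimal framework needed to make the limit argument rigorous are the most delicate pieces, but once these technicalities are in place the structure of the proof is a direct transport of the Lyu--Li / Lyu--Zhu program, with the robust margin $q_i$ substituted for $y_i f(\xb_i;\mathbf{u})$.
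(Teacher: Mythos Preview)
This theorem is not proved in the paper at all: it is explicitly stated as a paraphrase of Theorem~5 in \citep{LyZh22} and is invoked as a black box. The paper's only work surrounding it is the short verification in Corollary~\ref{cor:diag} that diagonal networks satisfy the hypotheses (2-homogeneity and scale invariance of the optimal perturbation), followed by Proposition~\ref{prop:equivalence}. So there is no ``paper's own proof'' to compare against.

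That said, your sketch is a faithful outline of how the cited result is actually obtained. The central observation---that scale invariance of the adversarial maximizer forces the robust margin $q_i(\mathbf{u}) = \min_{\|\xb_i'-\xb_i\|_p\le\epsilon} y_i f(\xb_i';\mathbf{u})$ to inherit $L$-homogeneity from $f$, thereby collapsing the robust problem to the standard Lyu--Li setting with $q_i$ in place of $y_i f(\xb_i;\mathbf{u})$---is exactly the mechanism behind the result in \citep{LyZh22}. The smoothed-margin monotonicity, the $\|\mathbf{u}\|\to\infty$ consequence, the KL/definability route to directional convergence, and the softmax-weight concentration yielding KKT stationarity are all correct ingredients. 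Your caveats about Clarke subdifferentials, the Danskin-type envelope for $\partial q_i$, and the o-minimal machinery are likewise the right technical pressure points; these are precisely what make the full proof nontrivial, and you have identified them accurately rather than glossed over them.
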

As we show, $f_{\mathrm{diag}}$ satisfies the conditions of Theorem~\ref{thm:adv_train_homog}, and, thus, we get the following description of its asymptotic behavior.
\begin{corollary}\label{cor:diag}
    Consider gradient flow on the worst-case exponential loss $\Lw(\mathbf{u}) = \frac{1}{m} \sum_{i = 1}^m \max_{\|\xb_i - \xb_i^\prime\|_p \leq \epsilon} e^{- y_i f_{\mathrm{diag}}(\xb_i^\prime; \mathbf{u})}$ and assume that $\Lw(\mathbf{u}) \to 0$. Then, $\mathbf{u}$ converges in direction to a KKT point of the following optimization problem:
    \begin{align}\label{eq:opt_l2_param}
        \begin{split}
            \min_{\mathbf{u}_+ \in \mathbb{R}^d, \mathbf{u}_- \in \mathbb{R}^d} \frac{1}{2} \left(\| \mathbf{u}_+\|_2^2 + \|\mathbf{u}_-\|_2^2\right) \;\;\;
            \text{s.t. }\min_{\|\xb^\prime_i - \xb_i\|_p \leq \epsilon} y_i\innerprod{\mathbf{u}_+^2 - \mathbf{u}_-^2}{\xb_i^\prime} \geq 1, \; \forall i \in [m].
        \end{split}
    \end{align}
\end{corollary}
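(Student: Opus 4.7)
The plan is to obtain Corollary~\ref{cor:diag} as a direct application of Theorem~\ref{thm:adv_train_homog} with the network taken to be $f = f_{\mathrm{diag}}$ and the parameter vector $\mathbf{u} = [\mathbf{u}_+,\mathbf{u}_-] \in \mathbb{R}^{2d}$. The assumption that $\Lw(\mathbf{u}) \to 0$ is already part of the corollary's hypothesis, so it remains only to check the three structural conditions required by Theorem~\ref{thm:adv_train_homog}: that $f_{\mathrm{diag}}$ is homogeneous in $\mathbf{u}$, that it is locally Lipschitz in $\mathbf{u}$, and that for each $i$ the worst-case perturbation $\argmax_{\|\xb_i'-\xb_i\|_p\leq\epsilon} e^{-y_i f_{\mathrm{diag}}(\xb_i';\mathbf{u})}$ is invariant under positive rescaling of $\mathbf{u}$.

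For homogeneity, I would observe that for any scalar $c > 0$,
\begin{equation*}
    f_{\mathrm{diag}}(\xb; c\mathbf{u}_+, c\mathbf{u}_-) = \innerprod{(c\mathbf{u}_+)^{2} - (c\mathbf{u}_-)^{2}}{\xb} = c^{2}\, f_{\mathrm{diag}}(\xb;\mathbf{u}),
\end{equation*}
so $f_{\mathrm{diag}}$ is $2$-homogeneous in $\mathbf{u}$. Local Lipschitzness is immediate since $f_{\mathrm{diag}}$ is polynomial in $\mathbf{u}$ and therefore $C^{\infty}$. For scale-invariance of the inner argmax, I would plug in the $2$-homogeneity: for any $c>0$,
\begin{equation*}
    \argmax_{\|\xb_i' - \xb_i\|_p \leq \epsilon} e^{-y_i f_{\mathrm{diag}}(\xb_i';c\mathbf{u})} = \argmax_{\|\xb_i' - \xb_i\|_p \leq \epsilon} e^{-c^{2} y_i f_{\mathrm{diag}}(\xb_i';\mathbf{u})} = \argmin_{\|\xb_i' - \xb_i\|_p \leq \epsilon} y_i f_{\mathrm{diag}}(\xb_i';\mathbf{u}),
\end{equation*}
where the last equality uses that $c^{2}>0$ and $e^{-\cdot}$ is strictly monotone. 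The final expression is independent of $c$, giving the required scale-invariance.

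With all three hypotheses verified, Theorem~\ref{thm:adv_train_homog} implies that $\mathbf{u}$ converges in direction to a KKT point of $\min_{\mathbf{u}} \tfrac{1}{2}\|\mathbf{u}\|_2^2$ subject to $\min_{\|\xb_i'-\xb_i\|_p\leq\epsilon} y_i f_{\mathrm{diag}}(\xb_i';\mathbf{u}) \geq 1$ for all $i\in[m]$. Splitting $\|\mathbf{u}\|_2^{2} = \|\mathbf{u}_+\|_2^{2} + \|\mathbf{u}_-\|_2^{2}$ yields exactly~\eqref{eq:opt_l2_param}. The derivation is essentially routine; the only point requiring care is the scale-invariance hypothesis, where it matters that $f_{\mathrm{diag}}$ is homogeneous of \emph{even} positive degree, so that the sign of $y_i f_{\mathrm{diag}}$ is preserved under rescaling. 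I do not anticipate a real technical obstacle, since the heavy lifting—gradient-flow convergence to a KKT point of the margin problem for homogeneous networks under adversarial training—has already been carried out in \citep{LyZh22}.
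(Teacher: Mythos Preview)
Your proposal is correct and follows the same approach as the paper: verify that $f_{\mathrm{diag}}$ satisfies the hypotheses of Theorem~\ref{thm:adv_train_homog} (homogeneity, local Lipschitzness, scale-invariance of the inner maximizer) and then read off the conclusion. One small difference worth noting: the paper establishes scale-invariance by explicitly computing the optimal perturbation in closed form for the $\ell_\infty$ case, namely $-\epsilon\,\mathrm{sign}(\mathbf{u}_+^2-\mathbf{u}_-^2)$, whereas your argument deduces it directly from $2$-homogeneity and the monotonicity of $e^{-\cdot}$; your route is cleaner and, unlike the paper's written proof, actually covers general $\ell_p$ as stated in the corollary. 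Your final caveat that even degree is needed is unnecessary, though: for $c>0$ one has $c^L>0$ for any degree $L>0$, so the argmin is unchanged regardless of parity.
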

However, the optimization problem of eq.~\eqref{eq:opt_l2_param}, which is over $\mathbb{R}^{2d}$ is nothing but a disguised $\ell_1$ minimization problem, when viewed in the \textit{prediction} ($\mathbb{R}^d$) space.
\begin{proposition}\label{prop:equivalence}
    Problem~\eqref{eq:opt_l2_param} has the same optimal value as the following constrained opt. problem:
    \begin{align}\label{eq:opt_l1_pred}
        \begin{split}
            \min_{\w \in \mathbb{R}^d} \| \w\|_1 \;\;\;
            \text{s.t. } \min_{\|\xb^\prime_i - \xb_i\|_p \leq \epsilon} y_i\innerprod{\w}{\xb_i^\prime} \geq 1, \; \forall i \in [m].
        \end{split}
    \end{align}
\end{proposition}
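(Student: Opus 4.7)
The plan is to reduce the claim to a purely coordinate-wise variational identity via the natural change of variables $\w = \mathbf{u}_+^2 - \mathbf{u}_-^2$. Under this identification the adversarial margin constraint
\[
\min_{\|\xb^\prime_i - \xb_i\|_p \leq \epsilon} y_i \innerprod{\mathbf{u}_+^2 - \mathbf{u}_-^2}{\xb_i^\prime} \geq 1
\]
becomes literally the constraint of \eqref{eq:opt_l1_pred}, so feasibility is preserved in both directions. All the work therefore goes into comparing the objectives once the constraints are matched.

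For the direction \eqref{eq:opt_l2_param} $\geq$ \eqref{eq:opt_l1_pred}, I take any feasible $(\mathbf{u}_+,\mathbf{u}_-)$, set $\w := \mathbf{u}_+^2 - \mathbf{u}_-^2$, which is then feasible for \eqref{eq:opt_l1_pred}, and apply the elementary coordinate-wise bound $a^2 + b^2 \geq |a^2 - b^2|$ (with $a = u_{+,i}$, $b = u_{-,i}$) to conclude $\|\mathbf{u}_+\|_2^2 + \|\mathbf{u}_-\|_2^2 \geq \|\w\|_1$. Summing and taking infima gives one inequality between the optimal values. For the reverse direction I construct an explicit lift: given any feasible $\w$, define $u_{+,i} := \sqrt{\max(w_i,0)}$ and $u_{-,i} := \sqrt{\max(-w_i,0)}$. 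Then $u_{+,i}^2 - u_{-,i}^2 = w_i$, so $(\mathbf{u}_+,\mathbf{u}_-)$ is feasible for \eqref{eq:opt_l2_param}, and $\|\mathbf{u}_+\|_2^2 + \|\mathbf{u}_-\|_2^2 = \sum_i |w_i| = \|\w\|_1$, so this lift attains equality in the coordinate-wise bound. Combining the two directions, the infima of $\|\mathbf{u}_+\|_2^2 + \|\mathbf{u}_-\|_2^2$ and $\|\w\|_1$ over their respective feasible sets coincide, and the $\tfrac{1}{2}$ factor in \eqref{eq:opt_l2_param} is absorbed naturally (the minimizing $\w$ is the same in both problems, which is what matters for the implicit-bias corollary).

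There is no real obstacle here: once the change of variables is in place, the argument is the standard Hadamard/overparameterization trick for turning an $\ell_2$ objective on $(\mathbf{u}_+,\mathbf{u}_-)$ into an $\ell_1$ objective on $\w = \mathbf{u}_+^2 - \mathbf{u}_-^2$, now carried out in the presence of the robust-margin constraint rather than the standard interpolation constraint. The only point requiring care is to verify that the per-coordinate minimum of $a^2 + b^2$ subject to $a^2 - b^2 = w_i$ equals $|w_i|$ and is attained by concentrating the mass entirely on $u_{+,i}$ when $w_i \geq 0$ and entirely on $u_{-,i}$ when $w_i < 0$; once this is established, both inequalities follow immediately and no structural property of the perturbation set is needed beyond the fact that the constraint is a function of $\w$ only.
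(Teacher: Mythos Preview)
Your proposal is correct and follows essentially the same route as the paper: the change of variables $\w = \mathbf{u}_+^2 - \mathbf{u}_-^2$ to match the constraints, the coordinate-wise inequality $u_{+,i}^2 + u_{-,i}^2 \geq |u_{+,i}^2 - u_{-,i}^2|$ for one direction, and the explicit positive/negative-part lift for the other. Regarding your parenthetical about the $\tfrac{1}{2}$: the paper's appendix actually restates problem~\eqref{eq:opt_l1_pred} with objective $\tfrac{1}{2}\|\w\|_1$ so that the optimal values literally coincide, which is the clean way to dispose of that factor.
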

The proofs appear in Appendix~\ref{ssec:diagonal_proofs}. These results suggest that the bias of gradient-descent based robust ERM over diagonal networks is towards minimum $\ell_1$ solutions, which as we argued in the previous section can have very different robust error compared to $\ell_2$ solutions, which are returned by gradient-descent based robust ERM over linear models. We verify this in the simulations of Section~\ref{ssec:expr_linear}.

\begin{remark}
    Technically, Corollary~\ref{cor:diag} only proves convergence to a first order (KKT) point, so we cannot conclude equivalence with the minimum of the $\ell_1$ problem in eq.~\ref{eq:opt_l1_pred}. 
    Yet, we believe that global optimality, under the condition of $(\epsilon, p)$ separability, can be proven by extending the techniques of \citep{Mor+20} in robust ERM.
\end{remark}

\section{Experiments}\label{sec:experiments}

In this section, we explore with simulations how the implicit bias of optimization in robust ERM is affecting the (robust) generalization of the models. Appendix~\ref{sec:expr_details} contains full experimental details. 

\subsection{Linear models}\label{ssec:expr_linear}

\paragraph{Setup} 
We compare different steepest descent methods in minimizing a worst-case loss with either linear models or diagonal neural networks on synthetic data, and study their robust generalization error.
In accordance with Section~\ref{ssec:cases}, we consider distributions that come from a ``teacher'' $\w^\star$ with $y = \mathrm{sgn}(\innerprod{\w^\star}{\xb})$ that can have \textit{sparse} or \textit{dense} $\xb, \w^\star$.
We denote  by $k_{\mathcal{W}}$ and $k_{\mathcal{X}}$ the expected number of non-zero entries of the ground truth $\w^\star$ and the samples $\xb$, respectively. We train linear models $f_{\mathrm{lin}} (\xb; \w) = \innerprod{\w}{\xb}$ with steepest descent with respect to either the $\ell_1$ (coordinate descent - \texttt{CD}) or the $\ell_2$ norm (gradient descent - \texttt{GD}), and diagonal neural networks $f_{\mathrm{diag}}(\xb; \mathbf{u}_+, \mathbf{u}_+) = \innerprod{\mathbf{u}_+^2 - \mathbf{u}_-^2}{\xb}$ with gradient descent (\texttt{diag-net-GD}).
We consider $\ell_\infty$ perturbations. 
We design the following experiment:
first, we fit the training data with \texttt{CD} for $\epsilon=0$ and we obtain the value of the $\ell_\infty$ margin of the dataset (denoted as $\epsilon^\star$) at the end of training\footnote{Running this algorithm to convergence is guaranteed to result in the largest possible $\ell_\infty$ separator of the training data \citep{Gun+18a}. Recall that the $\ell_\infty$-margin is $\max_{\w \neq 0} \min_{i \in [m]} y_i \frac{\innerprod{\xb_i}{\w}}{\|\w\|_1}$.}. This supplies us with an upper bound on the value of $\epsilon$ for our robust ERM experiments, i.e. we know there exists a linear model with $100\%$ robust train accuracy for $\epsilon$ less than or equal to this $\epsilon^\star$ margin. We then perform robust ERM with (full batch) \texttt{GD}/\texttt{CD}/\texttt{diag-net-GD} for various values of $\epsilon$ less than $\epsilon^\star$. We repeat the above for multiple values of dataset size $m$ (and draws of the dataset), and aggregate the results.

\begin{figure}
    \centering
    \includegraphics[scale=0.33]{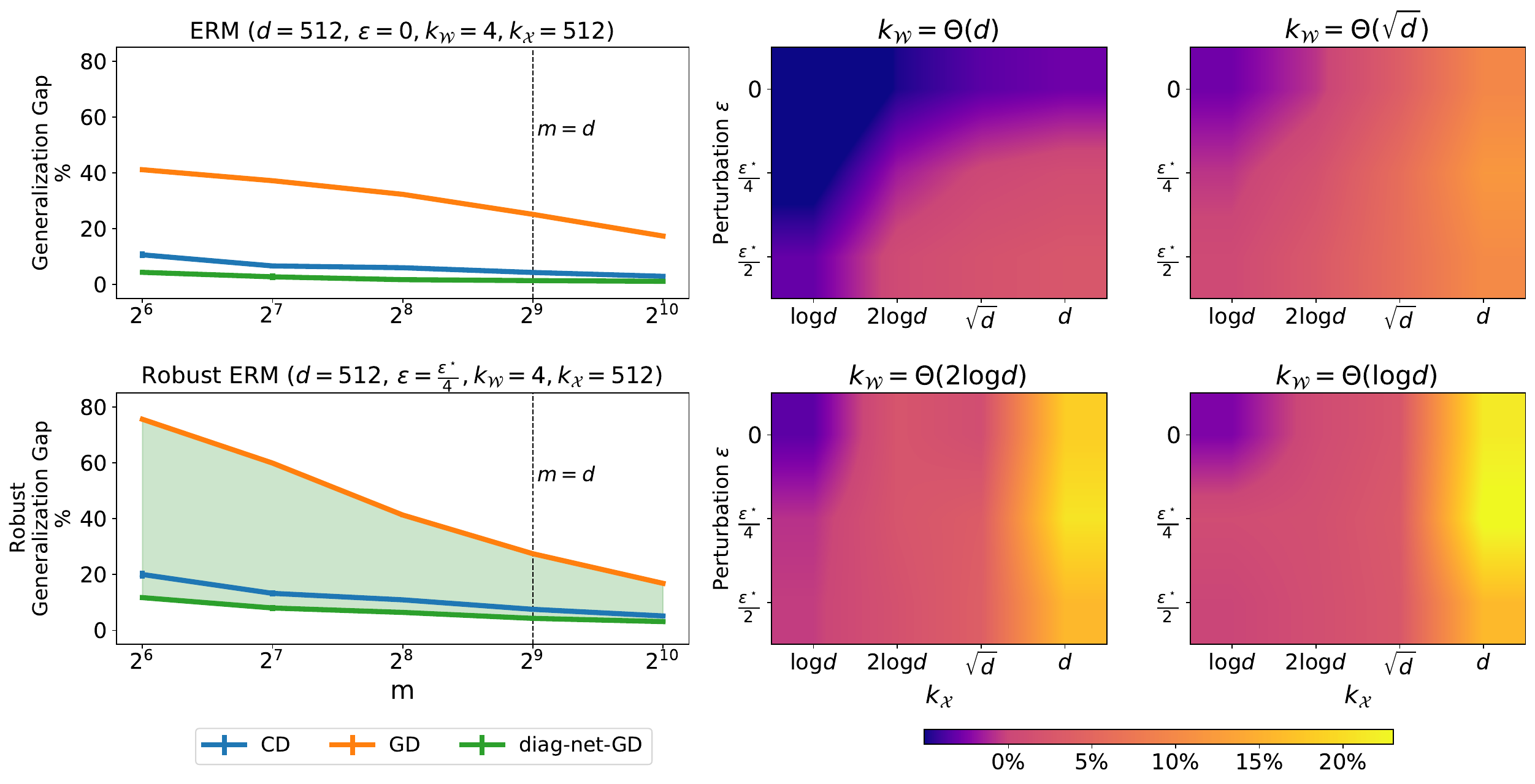}
    \caption{\textbf{Left:} Binary classification of data coming from a sparse teacher $\w^\star$ and dense $\xb$, with (\textit{bottom}) or without (\textit{top}) $\ell_\infty$ perturbations of the input in $\mathbb{R}^d$ using linear models. We plot the (robust) generalization gap, i.e., (robust) train minus (robust) test accuracy, of different learning algorithms versus the training size $m$. For robust ERM, $\epsilon$ is set to be $\frac{1}{4}$ of the largest permissible value $\epsilon^\star$. The gap between the methods grows when we pass from ERM to robust ERM. \textbf{Right:} Average benefit of \texttt{CD} over \texttt{GD} (in terms of generalization gap) for different values of teacher sparsity $k_\mathcal{W}$, data sparsity $k_\mathcal{X}$ and magnitude of $\ell_\infty$ perturbation $\epsilon$.}
    \label{fig:linear-combined}
\end{figure}

\paragraph{Results}
We plot the (robust) generalization gap of the three learning algorithms versus the dataset size for $(k_\mathcal{W},k_\mathcal{X})$=(512, 512) (\textit{Dense, Dense}) and $(k_\mathcal{W}, k_\mathcal{X})$=(4, 512) (\textit{4-Sparse, Dense}) in Figures~\ref{fig:ad-figure} (bottom) and~\ref{fig:linear-combined} (left), respectively. In each figure, we show the performance of the methods both in ERM (no perturbations during training) and robust ERM. The evaluation is w.r.t. the $\epsilon$ used in training. For both distributions, we observe a significant change in the relative performance of the methods, when we pass from ERM to robust ERM. For data with a sparse teacher (Figure~\ref{fig:linear-combined}), \texttt{CD} and \texttt{diag-net-GD} already outperform \texttt{GD} in terms of generalization when implementing ERM, as a result of their bias towards minimum $\ell_1$ (\textit{sparse} solutions). However, in agreement with the bounds of Section~\ref{ssec:cases}, the interval between the algorithms grows when performing robust ERM as a result of their different biases. In the case of \textit{Dense, Dense} data (Figure~\ref{fig:ad-figure}), the effect of robust ERM is more dramatic, as the algorithms generalize similarly when implementing ERM, yet their gap between their robust generalization in robust ERM exceeds 20\% in the case of few training data! Notice that the bounds in Section~\ref{ssec:cases} were less optimistic than the experiments show for the performance of \texttt{CD} and \texttt{diag-net-GD} in this case. Plots with other distributions appear in Figure~\ref{fig:additional-linear-gaps}. 

To get a fine-grained understanding of the interactions between the hyperparameters of the learning problem, we measure the \textit{average difference} of (robust) generalization gaps between \texttt{GD} and \texttt{CD}.
In particular, for each different combination of sparsities ($k_{\mathcal{W}}$, $k_{\mathcal{X}}$) and perturbation $\epsilon$, we summarize curves of the form of Figure~\ref{fig:linear-combined} (left) into one number, by calculating: $\frac{1}{2^{10} - 2^6}\int_{2^6}^{2^{10}} \left( \texttt{GD}(m) -  \texttt{CD}(m)\right) dm$. The results are shown in Figure~\ref{fig:linear-combined} (right). Notice that, as argued in Section~\ref{ssec:cases}, there are cases with $\epsilon>0$  where \texttt{CD} does not outperform \texttt{GD} ($k_{\mathcal{W}}=\Theta(d)$, $k_{\mathcal{X}}=\Theta(\log d)$), because the learning problem is much more ``skewed'' towards dense solutions. We also observe that when $\epsilon$ goes from $0$ to $\frac{\epsilon^\star}{4}$ the edge of \texttt{CD} over \texttt{GD} grows. Past a certain threshold of $\epsilon$, the two methods will start to perform the same, since for $\epsilon=\epsilon^\star$ the algorithms return the same solution (Lemma~\ref{lemma:all_hyperplanes_ell_p}). See also Appendix~\ref{sec:add-expr} and Figure~\ref{fig:clean-heatmap} for the average difference of ``clean'' generalization gaps between \texttt{GD} and \texttt{CD}.

\subsection{Neural networks}\label{ssec:nn}
Our discussion has focused so far on linear (with respect to the input) models, where a closed form solution for the worst-case loss allowed us to obtain precise answers for the connection between generalization and optimization bias in robust ERM. Such a characterization for general models is too optimistic at this point, because, even for a kernelized model $f(\xb; \w) = \innerprod{\w}{\phi(\xb)}$, it is not clear how to compute the right notion of margin that arises from $\min_{\|\xb^\prime - \xb\|_p \leq \epsilon} \innerprod{\w}{\phi(\xb^\prime)}$ without making further assumptions about $\phi(\cdot)$. As such, it is difficult to reason that one set of optimization choices will lead to better suited implicit bias than another. We assess, however, experimentally, what effect (if any) the choice of the optimization algorithm has on the robustness 
of a non-linear model. To this end, we train neural networks with two optimization algorithms, gradient descent (\texttt{GD}) and sign (gradient) descent (\texttt{SD}) for various values of perturbation magnitude $\epsilon$,
focusing on $\ell_\infty$ perturbations. \texttt{SD} corresponds to steepest descent with respect to the $\ell_\infty$ norm and is expected to obtain a minimum with very different properties than the one obtained with \texttt{GD} (Appendix \ref{sec:steepest_descent_details}). In practice, we found it easier to train neural networks with \texttt{SD} than with any other steepest descent algorithm (besides \texttt{GD}).
\begin{figure}
    \centering
    \includegraphics[scale=0.32]{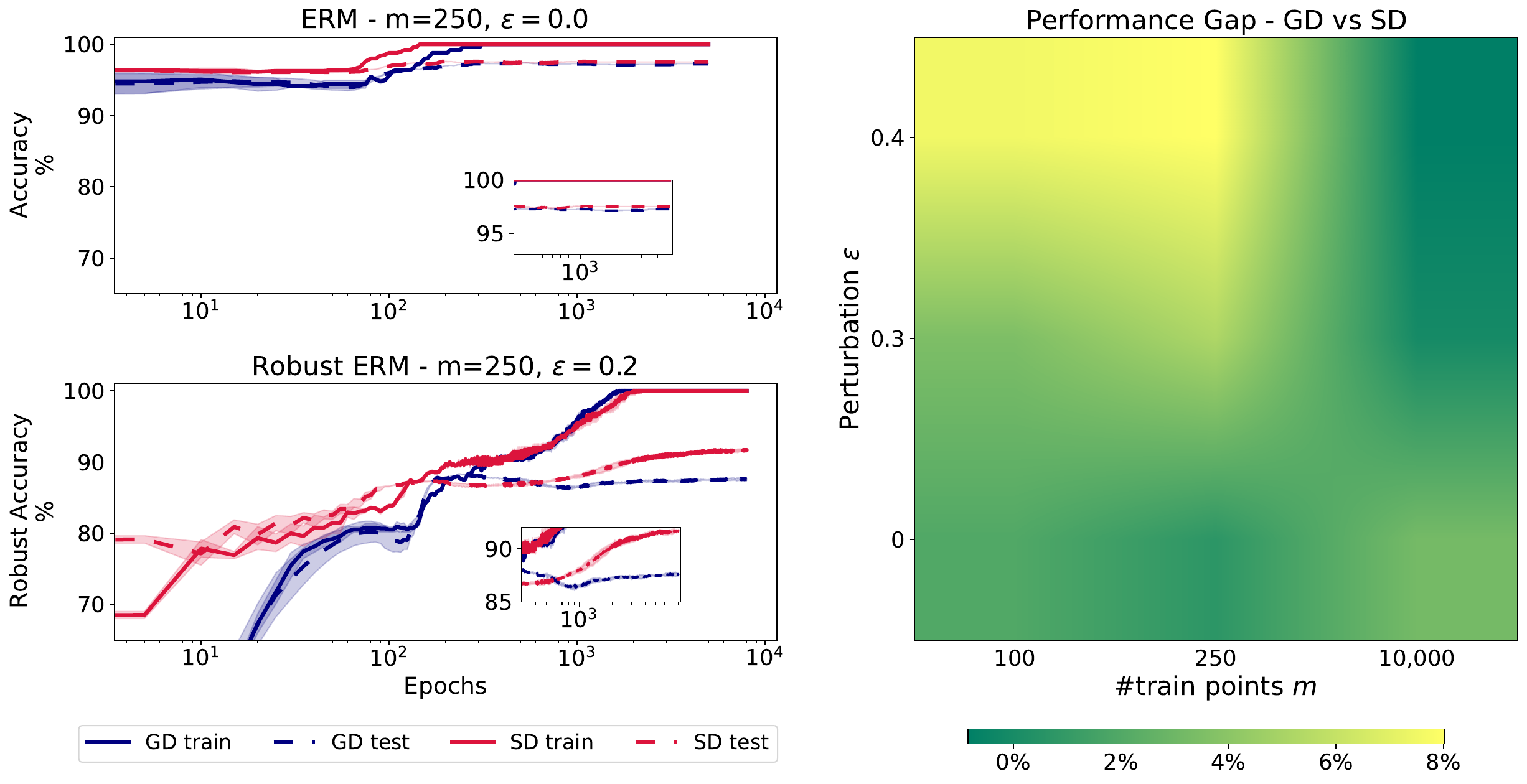}
    \caption{\textbf{Left:} Comparison of two optimization algorithms, gradient descent and sign gradient descent, in ERM and robust ERM on a subset of MNIST (digits 2 vs 7) with 1 hidden layer ReLU nets. Train and test accuracy correspond to the magnitude of perturbation $\epsilon$ used during training. We observe that in robust ERM the gap between the generalization of the two algorithms increases. \textbf{Right:} Gap in (robust) test accuracy (with respect to the $\epsilon$ used in training) of CNNs trained with \texttt{GD} and \texttt{SD} (\texttt{GD} accuracy minus \texttt{SD} accuracy) on subsets of MNIST (all classes) for various of $\epsilon$ and $m$.}
    \label{fig:nns-price}
\end{figure}

\paragraph{Fully Connected NNs}
We first focus on ReLU networks with 1 hidden layer without a bias term: $f(\xb) = \sum_{j = 1}^k \mathbf{u}_j \sigma(\mathbf{W}_j \xb)$, where $\sigma (u) = \max(0, u)$ is applied elementwise. For this class of homogeneous networks, we expect very different implicit biases when performing (robust) ERM with \texttt{GD} versus \texttt{SD} (see Appendix~\ref{sec:steepest_descent_details} for details).
In Figure~\ref{fig:nns-price}, we plot the accuracy of models trained on random subsets of MNIST \citep{LeC+98} with ``standard'' ERM ($\epsilon=0$) and robust ERM ($\epsilon=0.2$). We observe that in ERM (\textit{top}), the choice of the algorithm does not affect the generalization error much. But, for $\epsilon=0.2$ (\textit{bottom}), \texttt{SD} significantly outperforms \texttt{GD} (4.11\% mean difference over 3 random seeds),
even though both algorithms reach 100\% robust train accuracy. Notably, in this case, robust ERM with \texttt{SD} not only achieves smaller robust generalization error, but also avoids robust overfitting during training, in contrast to \texttt{GD}. It is plausible that robust overfitting, which gets observed during the late phase of training \citep{RWK20}), is due to (or attenuated by) the implicit bias of an algorithm kicking in late during robust ERM. This bias can either aid or harm the robust generalization of the model and perhaps this is why the two algorithms exhibit different behavior. It would be interesting for future work to further study this connection. See Appendix~\ref{sec:add-expr}
for plots with different values of 
$\epsilon$ and
$m$.

\paragraph{Convolutional NNs}
Departing from the homogeneous setting, where the implicit bias of robust ERM is known or can be ``guessed'', we now train convolutional neural networks (with bias terms). 
As a result, we do not have direct control over which biases our optimization choices will elicit, but changing the optimization algorithm should still yield biases towards minima with different properties. 
In Figure~\ref{fig:nns-price}, we plot the mean difference (over 3 random seeds) between the generalization of the converged models. We see that the harder the problem is (fewer samples $m$, request for larger robustness $\epsilon$), the bigger the price of implicit bias becomes. Note that for this architecture it turns out that the implicit bias of \texttt{GD} is better ``aligned'' with our learning problem and \texttt{GD} generalizes better than \texttt{SD}, despite facing the opposite situation in homogeneous networks. This should not be entirely surprising, since we saw already in linear models that a reparameterization can drastically change the induced bias of the same algorithm.



\section{Conclusion}\label{sec:discuss}

In this work, we studied from the perspective of learning theory the issue of the large generalization gap when training robust models and identified the implicit bias of optimization as a contributing factor. Our findings seem to suggest that optimizing models for robust generalization is challenging because it is tricky to do \textit{capacity control} ``right'' in robust machine learning. The experiments of Section~\ref{sec:experiments} seem to suggest searching for different first-order optimization algorithms (besides gradient descent) for robust ERM (adversarial training) as a promising avenue for future work.

\paragraph{Acknowledgments.}

NT and JK acknowledge support through the NSF under award 1922658. Supported in part by the NSF-Simons Funded Collaboration on the Mathematics of Deep Learning (https://deepfoundations.ai/), the NSF TRIPOD Institute on Data Economics Algorithms and Learning (IDEAL) and an NSF-IIS award. Part of this work was done while NT was visiting the Toyota Technological Institute of Chicago (TTIC) during the winter of 2024, and NT would like to thank everyone at TTIC for their hospitality, which enabled this work. This work was supported in part through the NYU IT High Performance Computing resources, services, and staff expertise.

\bibliography{refs}
\bibliographystyle{apalike}

\newpage
\appendix

\section{Related work}\label{sec:ext-rel-work}

\paragraph{Adversarial Robustness} Our discussion is focused on so-called white-box robustness (where an adversary has access to any information about the model) - see \citep{Pap+16} for a taxonomy of threat models. Adversarial examples in machine learning were first studied in \citep{Big+13} for simple models and in \citep{Sze+14} for deep neural networks deployed in image classification tasks. The adversarial vulnerability of neural networks is ubiquitous \citep{Pap+16} and it has been observed in many settings and several modalities (see, for instance, \citep{KGB17a,JiLi17,Zou+23}). The reasons for that still remain unclear - explanations in the past have entertained hypotheses such as high dimensionality of input space \citep{FFF18,Gil+18,Sha+19a}, presence of spurious features in natural data \citep{Ily+19,Tsi+19,TsKe22}, limited model complexity \citep{Nak19}, fundamental computational limits of learning algorithms \citep{Bub+19}, and the implicit bias of standard (non-robust) algorithms \citep{Fre+23,VYS22}. Many empirical methods for defending neural networks have been proposed, but most of them failed to conclusively solve the issue \citep{CaWa17,ACW18}. The only mechanism that can be adapted to any threat model and has passed the test of time is Adversarial Training \citep{Mad+18,GSS15,KGB17b,SYN18}, i.e., robust ERM. For neural network training, this translates to calculating at each step adversarial examples with (projected) gradient ascent (or some variant), and then updating the weights with gradient descent (using the gradient of the loss evaluated on the adversarial points). There have been many attempts on improving this method, either computationally by reducing the amount of gradient calculations \citep{Sha+19b,Zha+19a,WRK20}, or statistically by modifying the loss function \citep{Zha+19b,Awa+23}. A common pitfall of all these methods is large (robust) generalization gap \citep{Cro+21} and (robust) overfitting during training \citep{RWK20}; towards the end of training, the robust test error increases even though the robust training error continues to decrease. Vast amounts of synthetic training data have been shown to help on both accounts, alleviating the need for early stopping during training \citep{Wan+23}.

\paragraph{Margin-based Generalization bounds} The idea of (confidence) margin has been central in the development of many machine learning methods \citep{Vap98,TGK03,Rud+05} and it has been used in several contexts for justifying their empirical success \citep{CoVa95,Sch+97,KoPa02}. In linear models and kernel methods, it is closely related to the notion of geometric margin, and margin-based generalization bounds can explain the strong generalization performance in high-dimensions \citep{Vap98,MRT12,UML14}. For neural networks, they are still the object of active research \citep{BFT17,Ney+18,LoSe20,CMS21}. For these kind of bounds, the Rademacher complexity of the hypothesis class plays a central role \citep{Kol01}. Rademacher complexity-type analyses have been shown to subsume other similar frameworks \citep{KST08,Fos+19}, such as the PAC-Bayes one \citep{McA98}, and in many cases they can provide the finest known guarantees. \citet{YRB19} and \citet{AFM20} recently derived margin-based bounds for adversarially robust classification - see also \cite{MLK22} for non-additive perturbations.

\paragraph{Implicit Bias of Optimization Algorithms} The implicit bias (or regularization) of optimization algorithms refers to the tendency of gradient methods to induce properties to the solution that were not explicitly specified. It is believed to be beneficial for generalization in learning \citep{NTS15}. The implicit bias of gradient descent towards margin maximization/norm minimization has been studied in many learning setups including matrix factorization \citep{Aro+19,Gun+17}, learning with linear models \citep{Sou+18,JiTe19b}, deep linear \citep{JiTe19a} and convolutional networks \citep{Gun+18b}, and homogeneous models \citep{LyLi20,Nac+19}. \citet{Tel13} and \citet{Gun+18a} have analyzed implicit biases beyond $\ell_2$-like margin maximization for other optimization algorithms; namely Adaboost and Steepest (and Mirror) Descent, respectively. See \cite{Var23} for a comprehensive survey of the area. The importance of sparsity (min-$\ell_1$ solutions) in binary classification has been studied in \citep{Ng04}. In the context of adversarial training, \citet{Li+20} analyzed the implicit bias of optimizing a worst-case loss with gradient descent in linear models, and  \citet{LyZh22} extended these results to deep models. 

\section{Generalization Bounds for Robust Interpolators}\label{sec:gen_bounds_proof}
In this Section, we provide the proof of Proposition~\ref{prop:rob_bounds}, which we now restate for convenience.

\begin{proposition}{(Generalization bound for robust interpolators)}\label{prop:rob_bounds_app}
    Consider a distribution $\mathcal{D}$ over $\mathbb{R}^d \times \{\pm 1\}$ with $\mathbb{P}_{(\xb, y) \sim \mathcal{D}} \left[ y = \mathrm{sgn}(\innerprod{\w^\star}{\xb}), \; \forall \xb^\prime: \|\xb^\prime - \xb\|_p \leq \epsilon \right] = 1$ for some $\w^\star \in \mathbb{R}^d$. Let $S \sim \mathcal{D}^m$ be a draw of a random dataset $S = \{(\xb_1, y_1), \ldots, (\xb_m, y_m)\}$ and let $\mathcal{H}_r^\prime = \left\{ \xb \mapsto \innerprod{\w}{\xb} : \|\w\|_r \leq \| \w^\star \|_r \; \land \; \w \in \argmax_{\|\mathbf{u}\|_r \leq 1} \min_{i \in [m]} \min_{\|\xb_i^\prime - \xb\|_p \leq \epsilon} y_i \innerprod{\mathbf{u}}{\xb_i^\prime} \right\}$ be a hypothesis class of maximizers of the robust margin. Then, for any $\delta > 0$, with probability at least $1 - \delta$ over the draw of the random dataset $S$, for all $h \in \mathcal{H}_r^\prime$, it holds:
    \begin{equation}\label{eq:margin_bound_ineq_explicit_app}
        \begin{split}
            \Tilde{L}_{\mathcal{D},01}(h) \leq 
            \begin{cases}
            \frac{2}{\sqrt{m}} \left( \max_{i} \|\xb_i\|_\infty \|\w^\star\|_1 \sqrt{2 \log (2d)} + \epsilon \|\w^\star\|_1 \right) + 3 \sqrt{\frac{\log 2 / \delta}{2 m}}, \; r=1 \\
            \frac{2}{\sqrt{m}} \left( \max_{i} \|\xb_i\|_2 \|\w^\star\|_2 + \epsilon \|\w^\star\|_2 d^{\max\left(\frac{1}{p^\star} - \frac{1}{2}, 0\right)}\right) + 3 \sqrt{\frac{\log 2 / \delta}{2 m}}, \; r=2.
            \end{cases}
        \end{split}
    \end{equation}
\end{proposition}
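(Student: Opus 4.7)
The plan is to apply the margin-based bound of Theorem~\ref{thm:margin_bound} to the enclosing class $\mathcal{H}_r = \{\w : \|\w\|_r \leq \|\w^\star\|_r\}$, exploit that each $h \in \mathcal{H}_r^\prime$ is a robust max-margin classifier so as to make the empirical ramp risk vanish, and then bound $\rad(\tilde{\mathcal{H}}_r)$ via eq.~\eqref{eq:rad_dim_dependent} together with textbook Rademacher estimates for norm-constrained linear classes.

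The key preliminary step is a rescaling reduction: since the stated bound scales homogeneously with $\|\w^\star\|_r$, I may assume without loss of generality that the inner-product robust margin of the teacher is at least $1$, i.e.\ $\min_i \min_{\|\xb_i^\prime - \xb_i\|_p \leq \epsilon} y_i \innerprod{\w^\star}{\xb_i^\prime} \geq 1$, which is almost surely well-defined by realizability. Plugging $\w^\star/\|\w^\star\|_r$ into the unit-norm maximization then shows that the normalized robust margin satisfies $\gamma_r := \max_{\|\mathbf{u}\|_r \leq 1} \min_{i, \xb_i^\prime} y_i \innerprod{\mathbf{u}}{\xb_i^\prime} \geq 1/\|\w^\star\|_r$. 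Hence the element $\w = \|\w^\star\|_r \mathbf{u}^\star$, where $\mathbf{u}^\star$ is the unit-norm argmax direction, lies in $\mathcal{H}_r^\prime$ and satisfies $\min_i \min_{\xb_i^\prime} y_i \innerprod{\w}{\xb_i^\prime} \geq 1$, so its empirical ramp loss $\tilde{L}_S$ with parameter $\rho = 1$ is identically zero. Because $0$-$1$ classification is scale-invariant and every member of $\mathcal{H}_r^\prime$ is a positive multiple of $\mathbf{u}^\star$, every $h \in \mathcal{H}_r^\prime$ shares the same $\tilde{L}_{\mathcal{D},01}$ as this distinguished element, so it suffices to prove the bound for $\w$.

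The rest is plug-and-chug. Theorem~\ref{thm:margin_bound} with $\rho=1$ gives $\tilde{L}_{\mathcal{D},01}(h) \leq \tilde{L}_{\mathcal{D}}(h) \leq 2 \rad(\tilde{\mathcal{H}}_r) + 3\sqrt{\log(2/\delta)/(2m)}$, while eq.~\eqref{eq:rad_dim_dependent} upper bounds $\rad(\tilde{\mathcal{H}}_r)$ by $\rad(\mathcal{H}_r) + \epsilon \|\w^\star\|_r \max(d^{1/p^\star - 1/r}, 1)/(2\sqrt{m})$ with $\mathcal{W}_r = \|\w^\star\|_r$. For the standard Rademacher complexity, Massart's lemma combined with the duality $\innerprod{\w}{\xb} \leq \|\w\|_1 \|\xb\|_\infty$ yields $\rad(\mathcal{H}_1) \leq \|\w^\star\|_1 \max_i \|\xb_i\|_\infty \sqrt{2\log(2d)}/\sqrt{m}$, and Cauchy-Schwarz yields $\rad(\mathcal{H}_2) \leq \|\w^\star\|_2 \max_i \|\xb_i\|_2/\sqrt{m}$. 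Noting that for $r=1$ the dimension factor $\max(d^{1/p^\star - 1}, 1)$ collapses to $1$ since $p^\star \geq 1$, while for $r = 2$ it equals $d^{\max(1/p^\star - 1/2, 0)}$, assembling everything reproduces the two cases of eq.~\eqref{eq:margin_bound_ineq_explicit_app}. The only mildly delicate step is the rescaling reduction: one must verify that it preserves the structure of $\mathcal{H}_r^\prime$ and that the resulting bound holds uniformly for every element of the class, not merely for the distinguished maximum-margin one. Otherwise the argument is a routine assembly of the tools already developed in the excerpt.
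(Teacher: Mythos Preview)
Your approach is essentially the same as the paper's: apply Theorem~\ref{thm:margin_bound} with $\rho=1$ to the enclosing class $\mathcal{H}_r$ (using $\mathcal{H}_r'\subseteq\mathcal{H}_r$), argue that the empirical ramp risk vanishes for max-margin elements, and then combine eq.~\eqref{eq:rad_dim_dependent} with the standard $\ell_1$/$\ell_2$ Rademacher bounds for norm-constrained linear classes. The only real difference is framing: where you introduce a ``rescaling reduction'' to assume the teacher has robust margin at least $1$, the paper instead writes out the argmax/argmin equivalence chain (their eq.~\eqref{eq:min-norm-max-margin}) identifying the unit-ball max-margin direction with the min-norm margin-$1$ classifier and then asserts that $\w^\star$ is feasible for the latter---which is precisely the same implicit margin assumption.

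Your caution about the rescaling step being ``mildly delicate'' is warranted and worth making explicit: it is not a genuine without-loss-of-generality. If the teacher's robust margin is $\gamma<1$, replacing $\w^\star$ by $\w^\star/\gamma$ only proves the weaker bound with $\|\w^\star\|_r/\gamma$ in place of $\|\w^\star\|_r$, so the homogeneity does not let you recover the stated inequality. The paper's proof has exactly the same gap (it simply asserts ``$\w^\star$ satisfies the constraints'' of the margin-$1$ problem), so your argument is faithful to theirs; just be aware that the proposition as written tacitly requires $\w^\star$ to have unit robust margin. One minor point: your reduction to a single distinguished element via ``every member of $\mathcal{H}_r'$ is a positive multiple of $\mathbf{u}^\star$'' assumes uniqueness of the max-margin direction, which can fail for $r\in\{1,\infty\}$; the paper sidesteps this by arguing the empirical ramp loss is zero for \emph{every} $h\in\mathcal{H}_r'$ and invoking the uniform bound directly, which is slightly cleaner.
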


\begin{proof}
    First, notice that $\mathcal{H}_r^\prime \subseteq \mathcal{H}_r$, so, by the definition of the Rademacher complexity, it holds: $\rad(\mathcal{H}_r^\prime) \leq \rad(\mathcal{H}_r)$. Thus, from Theorem~\ref{thm:margin_bound}, we have for all $h \in \mathcal{H}_r^\prime$ and for $\rho>0$ with probability $1-\delta$:
    \begin{equation}
        \Tilde{L}_{\mathcal{D}}(h) \leq \Tilde{L}_{S}(h) + \frac{2}{\rho} \rad (\Tilde{\mathcal{H}}_r) + 3 \sqrt{\frac{\log 2 / \delta}{2 m}}.
    \end{equation}
    Observe that the ramp loss $l_\rho(u) = \min(1, \max(0, 1 - \frac{u}{\rho})), \rho > 0$ is an upper bound on the 0-1 loss, thus we readily get a bound for the 0-1 robust risk:
    \begin{equation}
        \Tilde{L}_{\mathcal{D}, 01}(h) \leq \Tilde{L}_{S}(h) + \frac{2}{\rho} \rad (\Tilde{\mathcal{H}}_r) + 3 \sqrt{\frac{\log 2 / \delta}{2 m}}.
    \end{equation}
    Now, let us specialize the ramp loss for $\rho=1$ (a stronger version of this Proposition can be obtained for a $\rho$ that depends on the data - see, for instance, the techniques in Theorem 5.9 in \citep{MRT12}). Then, the bound becomes:
    \begin{equation}
        \Tilde{L}_{\mathcal{D}, 01}(h) \leq \frac{1}{m} \sum_{i = 1}^m \max_{\|\xb^\prime_i - \xb_i\|_p \leq \epsilon} \min(1, \max(0, 1 - y_i \innerprod{\w}{\xb_i^\prime})) + 2 \rad (\Tilde{\mathcal{H}}_r) + 3 \sqrt{\frac{\log 2 / \delta}{2 m}}.
    \end{equation}
    
    But, notice that for all $h \in \mathcal{H}_r^\prime$ and their corresponding $\w$,  the empirical loss $\frac{1}{m} \sum_{i = 1}^m \max_{\|\xb^\prime_i - \xb_i\|_p \leq \epsilon} \min(1, \max(0, 1 - y_i \innerprod{\w}{\xb_i^\prime}))$ is 0, since:
    \begin{equation}\label{eq:min-norm-max-margin}
        \begin{split}
            \argmax_{\w: \|\w\|_r \leq 1} \min_{i \in [m]} \min_{\|\xb_i^\prime - \xb_i\|_p \leq \epsilon} y_i \innerprod{\w}{\xb_i^\prime} & = \argmax_{\substack{\w \in \mathbb{R}^d\\\w\neq \mathbf 0}} \min_{i \in [m]} \min_{\|\xb_i^\prime - \xb_i\|_p \leq \epsilon} \frac{y_i \innerprod{\w}{\xb_i^\prime}}{\|\w\|_r} \\
            & = \argmax_{\substack{\w \in \mathbb{R}^d,\w\neq \mathbf 0: \\ 
            \min_{i \in [m]} \min_{\|\xb_i^\prime - \xb_i\|_p \leq \epsilon} y_i \innerprod{\w}{\xb_i^\prime} = 1}} \frac{1}{\|\w\|_r}
            \\
            & = \argmin_{\substack{\w \in \mathbb{R}^d,\w\neq \mathbf 0: \\ 
            \min_{\|\xb_i^\prime - \xb_i\|_p \leq \epsilon} y_i \innerprod{\w}{\xb_i^\prime} \geq 1 \; \forall i \in [m]}} \|\w\|_r.
        \end{split}
    \end{equation}
    Note that the set of solutions is not empty, since $\w^\star$ satisfies the constraints with probability 1.
    As a result, for all $h \in \mathcal{H}_r^\prime$ we obtain:
    \begin{equation}
        \Tilde{L}_{\mathcal{D}, 01}(h) \leq 2 \rad (\Tilde{\mathcal{H}}_r) + 3 \sqrt{\frac{\log 2 / \delta}{2 m}}.
    \end{equation}
    Combining this with the upper bound of the adversarial Rademacher complexity of eq.~\eqref{eq:rad_dim_dependent}, together with the standard Rademacher complexity bounds for $r= 1, 2$ \citep{KST08}:
    \begin{equation}\label{eq:clean_rad}
        \rad (\mathcal{H}_r) \leq 
        \begin{cases}
            \mathcal{O} \left(\frac{\max_{i \in [m]} \|\xb_i\|_{\infty} \mathcal{W}_1 \sqrt{2\log 2d}}{\sqrt{m}}\right), \; r = 1, \\
            \mathcal{O} \left(\frac{\max_{i \in [m]} \|\xb_i\|_{2} \mathcal{W}_2}{\sqrt{m}}\right), \; r = 2,
        \end{cases}
    \end{equation}
    we obtain the result.
\end{proof}

\begin{figure}
    \centering
    \begin{tikzpicture}[scale=0.75]
          \draw[fill=blue!20,rotate=30] (0,0) ellipse (3cm and 2cm);
          \node at (30:2.6cm and 1.6cm) {$\mathcal{H}_\infty$};
          \draw[fill=red!20,rotate=60] (0,0) ellipse (2cm and 1.5cm);
          \node at (60:1.7cm and 1.6cm) {$\mathcal{H}_2$};
          \draw[fill=green!20,rotate=110] (0,0) ellipse (1.3cm and 0.8cm);
          \node at (110:1cm and 0.7cm) {$\mathcal{H}_1$};
          \filldraw[black] (3.5cm, 2.8cm) circle (2pt);
        \node at (3.6cm, 2.86cm) [anchor=west] {$h^\star$};
    \end{tikzpicture}
    \caption{An illustration of the model selection problem we are facing in Section~\ref{sec:gen_bounds}. We depict hypothesis classes which correspond to $\mathcal{H}_r = \{\xb \mapsto \innerprod{\w}{\xb} : \|\w\|_r \leq \mathcal{W}\}$ for $r=1, 2, \infty$ (notice that here, for illustration purposes, we keep $\mathcal{W}$ constant and not dependent on $r$). Increasing the order $r$ of $\mathcal{H}_r$ can decrease the approximation error of the class, but it might increase the complexity captured by the worst-case Rademacher Complexity term of eq.~\eqref{eq:rad_dim_dependent}. Furthermore, this complexity might increase significantly more than in ``standard'' classification (see 2nd term in the RHS of eq.~\eqref{eq:rad_dim_dependent}), and this is where the price of misselection comes from.}
    \label{fig:hypotheses}
\end{figure}
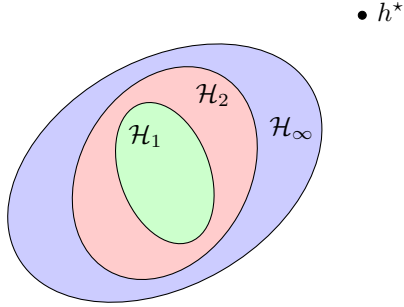

\section{Implicit Biases in Robust ERM}\label{sec:proof_thm}

\subsection{Robust ERM over Linear Models with Steepest Flow}

First, we provide the proof of Theorem \ref{thm:imp-bias-sd}. Recall that we are interested in analyzing the implicit bias of steepest descent algorithms with infinitesimal step size when minimization of a worst case loss.

The steepest flow update with respect to a norm $\left\lVert \cdot \right\rVert$ is written as follows:
\begin{equation}\label{eq:steepest_flow_direction}
    \frac{d \w}{dt} \in \left\{\mathbf{v} \in \mathbb{R}^d: \mathbf{v} \in \argmin_{\mathbf{u} \in \mathbb{R}^d: \| \mathbf{u} \| \leq \|\mathbf{g}\|_\star} \innerprod{\mathbf{u}}{\mathbf{g}}, \mathbf{g} \in \partial \Lw \right\} : = \mathcal{S},
\end{equation}
where recall $\partial \Lw$ is the set of subgradients of $\Lw$.

We restate Theorem~\ref{thm:imp-bias-sd}.
\begin{theorem}\label{thm:imp-bias-sd-app}
    For any $(\epsilon, p)$-linearly separable dataset and any initialization $\w_0$, steepest flow with respect to the $\ell_r$ norm, $r \geq 1$, on the worst-case exponential loss $\Lw(\w) = \sum_{i = 1}^m \max_{\|\xb_i^\prime - \xb_i\|_p \leq \epsilon}\exp(-y_i \innerprod{\w}{\xb_i^\prime})$ satisfies:    \begin{equation}\label{eq:thm_eq_app}
        \begin{split}
            \lim_{t\to\infty} \min_i \min_{\| \xb_i^\prime - \xb_i\|_p \leq \epsilon} & \frac{y_{i}\innerprod{\w_{t}}{\xb_i^\prime}}{\|\w_{t}\|_{r}} = \max_{\w \neq 0} \min_{i} \min_{\| \xb_i^\prime - \xb_i\|_p \leq \epsilon} \frac{{y_{i}\innerprod{\w}{\xb_i^\prime}}}{\|\w\|_{r}}.
        \end{split}
    \end{equation}
\end{theorem}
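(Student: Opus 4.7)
The plan is to reduce the claim to matching upper and lower bounds on the normalized robust margin
$$\phi_t/\|\w_t\|_r, \qquad \phi_t := \min_i\bigl(y_i\innerprod{\w_t}{\xb_i}-\epsilon\|\w_t\|_{p^\star}\bigr),$$
where the identity $\min_{\|\xb_i'-\xb_i\|_p\leq\epsilon} y_i\innerprod{\w}{\xb_i'} = y_i\innerprod{\w}{\xb_i} - \epsilon\|\w\|_{p^\star}$ (Lemma~\ref{lemma:calculate_margin}) recasts the LHS of eq.~\eqref{eq:thm_eq_app} as $\lim_t \phi_t/\|\w_t\|_r$ and the RHS as $\gamma := \max_{\w\neq 0}\phi(\w)/\|\w\|_r$. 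The upper bound $\phi_t/\|\w_t\|_r\leq\gamma$ is automatic because $\w_t$ is one competitor in the max-margin problem; the real work is the matching $\liminf_t \phi_t/\|\w_t\|_r\geq\gamma$.

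The cornerstone is a duality lemma, the one place where geometry enters. Fix any unit-norm maximizer $\bar\w$ with $\|\bar\w\|_r=1$ and $y_i\innerprod{\bar\w}{\xb_i}-\epsilon\|\bar\w\|_{p^\star}\geq\gamma$ for all $i$. Every subgradient of $\Lw$ decomposes as
$$\mathbf{g} = \sum_i \alpha_i(-y_i\xb_i) + \epsilon\Bigl(\sum_i\alpha_i\Bigr)\mathbf{z}, \qquad \alpha_i := e^{-y_i\innerprod{\w}{\xb_i}+\epsilon\|\w\|_{p^\star}},\ \mathbf{z}\in\partial\|\w\|_{p^\star},$$
with $\|\mathbf{z}\|_p\leq 1$. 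Plugging in gives $\innerprod{\bar\w}{-\mathbf{g}}\geq\gamma\sum_i\alpha_i = \gamma\Lw(\w)$, and Hölder then yields the single analytic input $\|\mathbf{g}\|_{r^\star}\geq\gamma\Lw(\w)$. This generalizes the duality step of \citet{Gun+18a} (unperturbed loss, general $r$) and of \citet{Li+20} (perturbed loss, only $r=2$).

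The rest is a short ODE calculation. Steepest flow has $\|d\w_t/dt\|_r=\|\mathbf{g}\|_{r^\star}$ and $\innerprod{d\w_t/dt}{\mathbf{g}}=-\|\mathbf{g}\|_{r^\star}^2$, so $d\Lw/dt=-\|\mathbf{g}\|_{r^\star}^2\leq -\gamma^2\Lw^2$, which drives $\Lw(\w_t)\to 0$ at rate $1/t$. Next, $d(-\ln\Lw)/dt = \|\mathbf{g}\|_{r^\star}^2/\Lw \geq \gamma\|\mathbf{g}\|_{r^\star}$, while the subgradient chain rule combined with the fact that every $\mathbf{a}\in\partial\|\w\|_r$ has $\|\mathbf{a}\|_{r^\star}\leq 1$ gives $d\|\w_t\|_r/dt\leq\|d\w_t/dt\|_r=\|\mathbf{g}\|_{r^\star}$. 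Combining, $d\|\w_t\|_r/dt\leq (1/\gamma)\,d(-\ln\Lw)/dt$; integrating from $0$ to $t$ yields $\gamma\|\w_t\|_r\leq -\ln\Lw(\w_t)+C$ for a constant $C$ depending only on $\w_0$. Since $\Lw\geq e^{-\phi_t}$ forces $-\ln\Lw\leq\phi_t$, we conclude $\gamma\|\w_t\|_r\leq\phi_t+C$, and dividing by $\|\w_t\|_r$ (which diverges because $\phi_t\geq -\ln\Lw-\ln m\to\infty$) produces the desired lower bound.

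The main obstacle is the non-smoothness of both $\Lw$ (through the $\|\cdot\|_{p^\star}$ term) and $\|\cdot\|_r$, compounded by the fact that steepest flow is a differential inclusion rather than an ODE. I would handle this by working exclusively with one-sided subgradient chain-rule inequalities, valid by convexity for absolutely continuous trajectories, so that the loss-decrease and norm-growth estimates above apply to the same subgradient $\mathbf{g}(t)$ that parameterizes the chosen velocity $d\w_t/dt$. Pinning down a measurable selection $t\mapsto\mathbf{g}(t)$ so that all the chain rules can be anchored to a common $\mathbf{g}(t)$ is the bookkeeping one trades for discrete-time stepsize tuning, which is exactly the simplification the paper invokes when it restricts attention to continuous time.
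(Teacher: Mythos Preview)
Your proposal is correct and follows essentially the same route as the paper: the identical duality lemma $\|\mathbf{g}\|_{r^\star}\geq\gamma\Lw$ is the key step, and the subsequent ODE manipulation $d(-\ln\Lw)/dt\geq\gamma\|d\w/dt\|_r$ matches the paper's eq.~\eqref{eq:der_bound}. The only cosmetic difference is that you bound $d\|\w_t\|_r/dt\leq\|d\w_t/dt\|_r$ and integrate, whereas the paper integrates $\|d\w/ds\|_r$ first and applies the triangle inequality $\int_0^t\|d\w/ds\|_r\,ds\geq\|\w_t-\w_0\|_r$; your explicit $1/t$ rate for $\Lw\to 0$ is in fact slightly more careful than the paper's appeal to monotonicity.
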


\begin{proof}
Throughout the proof, we suppress the dependence of $\w$ on time. Let us define the maximum, worst-case, margin:
    \begin{equation}
        \Tilde{\gamma}_{r^\star} = \max_{\w \neq 0} \min_{i \in [m]} \min_{\|\xb_i^\prime - \|_p \leq \epsilon} \frac{y_i \innerprod{\w}{\xb_i^\prime}}{\|\w\|_r}.
    \end{equation}
    We use the subscript $r^\star$, because it maximizes distance with respect to the $\ell_{r^\star}$ norm. Recall that the loss function is given as:
    \begin{equation}
        \Lw(\w) = \sum_{i = 1}^m \max_{\|\xb_i^\prime - \xb_i\|_p \leq \epsilon}\exp(-y_i \innerprod{\w}{\xb_i^\prime}) = \sum_{i = 1}^m \exp(-y_i \innerprod{\w}{\xb_i} + \epsilon \|\w\|_{p^\star}).
    \end{equation}
    By the definition of the loss function, we have for any $t>0$:
    \begin{equation}
        \Lw(\w) = \sum_{i = 1}^m \exp(-y_i \innerprod{\w}{\xb_i} + \epsilon \|\w\|_{p^\star}) \geq \max_{i \in [m]} \exp(-y_i \innerprod{\w}{\xb_i} + \epsilon \|\w\|_{p^\star}).
    \end{equation}
    Thus, we obtain the following relation between the loss and the current margin: 
    \begin{equation}\label{eq:smooth_margin_ineq}
        \min_{i \in [m]} y_i\innerprod{\w}{\xb_i} - \epsilon \|\w\|_{p^\star} \geq \log \frac{1}{\Lw (\w)},
    \end{equation}
    so the goal will be to lower bound the RHS. For that we need the following Lemma, which consists of the core of the proof and quantifies the relation between maximum margin and loss (sub) gradients. This Lemma generalizes Lemma C.1 in \citep{Li+20} that applies to gradient descent. We will overload notation and denote by $\partial f$ any subgradient of $f$.
    \begin{lemma}\label{lem:duality}
        For any $\w \in \mathbb{R}^d$, it holds:
        \begin{equation}
            \Tilde{\gamma}_{r^\star} \leq \frac{\|\partial \Lw(\w)\|_{r^\star}}{\Lw(\w)}.
        \end{equation}
    \end{lemma}
    \begin{proof}
        Let $\Tilde{\mathbf{u}}_{r^\star}$ be a vector that attains $\Tilde{\gamma}_{r^\star}$, i.e. $\Tilde{\mathbf{u}}_{r^\star}$ is a worst-case $\ell_{r^\star}$ maximum margin separator:
        \begin{equation}
            \Tilde{\mathbf{u}}_{r^\star} \in \argmax_{\w \neq 0} \min_{i \in [m]} \min_{\xb_i^\prime \in \Bpe} \frac{y_i \innerprod{\w}{\xb_i^\prime}}{\|\w\|_r}.
        \end{equation}
        Then, we have (since $\Lw$ is convex, the ``chain rule'' holds):  \begin{equation}\label{eq:duality_first}
            \begin{split}
            \innerprod{\Tilde{\mathbf{u}}_{r^\star}}{- \partial \Lw (\w)} & = \sum_{i = 1}^m \innerprod{\Tilde{\mathbf{u}}_{r^\star}}{y_i \xb_i - \epsilon \partial \| \w \|_{p^\star}} e^{- y_i \innerprod{\w}{\xb_i} + \epsilon \| \w\|_{p^\star}} \\
            & = \sum_{i = 1}^m \| \Tilde{\mathbf{u}}_{r^\star} \|_{r} \frac{\innerprod{\Tilde{\mathbf{u}}_{r^\star}}{y_i \xb_i} - \epsilon \innerprod{\Tilde{\mathbf{u}}_{r^\star}}{\partial \| \w \|_{p^\star}}}{\|\Tilde{\mathbf{u}}_{r^\star}\|_r} e^{- y_i \innerprod{\w}{\xb_i} + \epsilon \| \w\|_{p^\star}}.
            \end{split}
        \end{equation}
        But, by the definition of the dual norm and of the subgradient, we have $\innerprod{\Tilde{\mathbf{u}}_{r^\star}}{\partial \| \w \|_{p^\star}} \leq \|\Tilde{\mathbf{u}}_{r^\star}\|_{p^\star} \| \partial \| \w \|_{p^\star} \|_p =  \|\Tilde{\mathbf{u}}_{r^\star}\|_{p^\star}$, so eq.~\eqref{eq:duality_first} becomes:
        \begin{equation}
            \begin{split}
                \innerprod{\Tilde{\mathbf{u}}_{r^\star}}{- \partial \Lw (\w)} & \geq \sum_{i = 1}^m \| \Tilde{\mathbf{u}}_{r^\star} \|_r \Tilde{\gamma}_{r^\star} e^{- y_i \innerprod{\w}{\xb_i} + \epsilon \| \w\|_{p^\star}},
            \end{split}
        \end{equation}
        which by rearranging can be written as:
        \begin{equation}
            \innerprod{\frac{\Tilde{\mathbf{u}}_{r^\star}}{\| \Tilde{\mathbf{u}}_{r^\star} \|_r}}{- \partial \Lw (\w)} \geq \Tilde{\gamma}_{r^\star} \Lw (\w).
        \end{equation}
        Finally, again, by the definition of the dual norm, we get the desired result:
        \begin{equation}
            \| \partial \Lw (\w) \|_{r^\star} \geq \Tilde{\gamma}_{r^\star} \Lw (\w).
        \end{equation}
    \end{proof}
    In light of this Lemma, we can lower bound the derivative of the RHS of eq.~\eqref{eq:smooth_margin_ineq} as follows: 
    \begin{equation}\label{eq:der_bound}
        \begin{split}
            \frac{d \log \frac{1}{\Lw}}{dt} & = -\frac{1}{\Lw} \frac{d \Lw}{d t} \\
            & = -\frac{1}{\Lw} \innerprod{\partial \Lw}{\frac{d \w}{dt}} \;\;\; (\text{Chain rule}) \\
            & = \frac{\| \partial \Lw \|_{r^\star} \left\lVert \frac{d \w}{d t} \right\rVert_r}{\Lw} \;\;\;\;\;\;\;\;\;\;\;\;\;\;\; (\text{Def. of steepest flow}) \\
            & \geq \Tilde{\gamma}_{r^\star} \left\lVert \frac{d \w}{d t} \right\rVert_r \;\;\;\;\;\;\;\;\; (\text{Lemma~\ref{lem:duality}}).
        \end{split}
    \end{equation}
    Thus, eq.~\eqref{eq:smooth_margin_ineq} becomes: 
    \begin{equation}
        \begin{split}
            \min_{i \in [m]} \frac{y_i\innerprod{\w}{\xb_i} - \epsilon \|\w\|_{p^\star}}{\|\w\|_r} & \geq \Tilde{\gamma}_{r^\star} \frac{\int_{0}^t \left\lVert \frac{d \w}{d s} \right\rVert_r ds}{\| \w \|_r} \;\;\;\;\;\; (\text{from Eq.~\eqref{eq:der_bound}}) \\
            & \geq \Tilde{\gamma}_{r^\star} \frac{\left\lVert \int_{0}^t \frac{d \w}{d s} ds \right\rVert_r}{\| \w \|_r} \\
            & = \Tilde{\gamma}_{r^\star} \frac{\|\w - \w_0\|}{\| \w \|_r} \\
            & = \Tilde{\gamma}_{r^\star} \left\lVert\frac{\w}{\|\w\|_r} - \frac{\w_0}{\|\w\|_r}\right\rVert_r \to \Tilde{\gamma}_{r^\star},
        \end{split}
    \end{equation}
    since $\Lw \to 0$ ($\frac{d \Lw}{d t} \leq 0$ - see Lemma~\ref{lem:descent} - and $\Lw$ is bounded from below) and hence it must be $\|\w\| \to \infty$.
\end{proof}

\begin{lemma}\label{lem:descent}
    For any convex $L$, for the steepest flow updates of eq.~\eqref{eq:steepest_flow_direction} it holds:
    \begin{equation}
        \frac{d L}{dt} \leq 0 \; \mathrm{ and } \; \frac{d \w}{dt} = \argmin_{\mathbf{v} \in \mathcal{S}} \| \mathbf{v} \| \;\;\; \forall t > 0.
    \end{equation}
\end{lemma}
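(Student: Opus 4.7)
The plan is to derive both claims directly from the variational definition of the steepest flow in equation~\eqref{eq:steepest_flow_direction}, together with elementary convex-analytic facts about the primal/dual-norm pairing. I will take for granted the standard existence theory for this differential inclusion, so that a trajectory $\w(t)$ exists and is absolutely continuous, and the usual chain rule applies almost everywhere.

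For the descent property $\frac{dL}{dt} \leq 0$, I would first use the convexity of $L$ and absolute continuity of $\w(t)$ to argue that $t \mapsto L(\w(t))$ is absolutely continuous and differentiable almost everywhere, with $\frac{dL}{dt} = \innerprod{\mathbf{g}(t)}{\dot{\w}(t)}$, where $\mathbf{g}(t) \in \partial L(\w(t))$ is the subgradient determining the flow direction in~\eqref{eq:steepest_flow_direction}. By the very definition of that flow direction, $\dot{\w}(t)$ minimizes $\mathbf{u} \mapsto \innerprod{\mathbf{u}}{\mathbf{g}(t)}$ over the constraint set $\{\mathbf{u}: \|\mathbf{u}\| \leq \|\mathbf{g}(t)\|_\star\}$. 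Since $\mathbf{u} = \mathbf{0}$ is feasible and yields value $0$, the minimum is non-positive, and a direct dual-norm computation shows that it equals $-\|\mathbf{g}(t)\|_\star^2 \leq 0$. This immediately gives $\frac{dL}{dt} \leq 0$.

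For the second claim, the key is an explicit description of the set $\mathcal{S}$. Fix any $\mathbf{g} \in \partial L(\w)$. The inequality $\innerprod{\mathbf{u}}{\mathbf{g}} \geq -\|\mathbf{u}\|\|\mathbf{g}\|_\star$ from the definition of the dual norm shows that the minimum of $\innerprod{\mathbf{u}}{\mathbf{g}}$ over $\|\mathbf{u}\| \leq \|\mathbf{g}\|_\star$ is exactly $-\|\mathbf{g}\|_\star^2$, and (assuming $\mathbf{g} \neq \mathbf{0}$) every minimizer must lie on the sphere $\|\mathbf{u}\| = \|\mathbf{g}\|_\star$, since the inequality is strict in the interior. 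Hence
\begin{equation*}
    \mathcal{S} = \bigcup_{\mathbf{g} \in \partial L(\w)} \left\{\mathbf{v}: \|\mathbf{v}\| = \|\mathbf{g}\|_\star \text{ and } \innerprod{\mathbf{v}}{\mathbf{g}} = -\|\mathbf{g}\|_\star^2 \right\},
\end{equation*}
so every $\mathbf{v} \in \mathcal{S}$ satisfies $\|\mathbf{v}\| = \|\mathbf{g}\|_\star$ for some $\mathbf{g} \in \partial L(\w)$. Consequently, minimizing $\|\mathbf{v}\|$ over $\mathcal{S}$ is the same as minimizing $\|\mathbf{g}\|_\star$ over $\partial L(\w)$, and this minimum is attained because $\partial L(\w)$ is a non-empty closed convex set and $\|\cdot\|_\star$ is continuous and convex. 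The lemma's assertion that $\dot{\w} = \argmin_{\mathbf{v} \in \mathcal{S}}\|\mathbf{v}\|$ then records the canonical ``lazy'' selection of the steepest flow, which matches the convention used in the proof of Theorem~\ref{thm:imp-bias-sd}.

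The main obstacle is making rigorous the chain-rule identity $\frac{dL}{dt} = \innerprod{\mathbf{g}}{\dot{\w}}$ along the flow, for the specific subgradient chosen by the dynamics, given that $L$ is only subdifferentiable. This is a standard fact (e.g., via the one-sided directional derivative $L'(\w;\dot\w) = \max_{\mathbf{g}' \in \partial L(\w)}\innerprod{\mathbf{g}'}{\dot\w}$ for convex $L$, combined with the optimality characterization of the minimum-norm flow direction), but writing it out carefully is where essentially all of the technical work lies; the two dual-norm computations sketched above are elementary.
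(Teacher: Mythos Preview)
Your argument for $\frac{dL}{dt}\leq 0$ matches the paper's: apply the chain rule to the selected subgradient and the dual-norm identity to obtain $\frac{dL}{dt}=-\|\mathbf{g}\|_\star^2\le 0$.

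The gap is in the second claim. You correctly describe $\mathcal{S}$ and note that $\min_{\mathbf{v}\in\mathcal{S}}\|\mathbf{v}\|$ is attained, but then you dismiss the assertion $\dot\w=\argmin_{\mathbf{v}\in\mathcal{S}}\|\mathbf{v}\|$ as merely ``recording the canonical lazy selection.'' That is not how the paper treats it: this is a \emph{conclusion}, not a convention. The key point you miss is that for convex $L$ and an absolutely continuous trajectory, the chain rule $\frac{dL}{dt}=\innerprod{\mathbf{g}}{\dot\w}$ holds (a.e.) for \emph{every} $\mathbf{g}\in\partial L(\w)$, not only for the subgradient ``selected'' by the inclusion. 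The paper applies the chain rule twice: once with the selected subgradient, giving $\frac{dL}{dt}=-\|\dot\w\|^2$; and once with the minimal-dual-norm subgradient $\mathbf{g}^\star=\argmin_{\mathbf{g}\in\partial L}\|\mathbf{g}\|_\star$, which with H\"older yields $\big|\frac{dL}{dt}\big|\le\|\mathbf{g}^\star\|_\star\|\dot\w\|$. Combining the two forces $\|\dot\w\|\le\|\mathbf{g}^\star\|_\star=\min_{\mathbf{v}\in\mathcal{S}}\|\mathbf{v}\|$, so \emph{any} admissible velocity is already a norm-minimizer in $\mathcal{S}$. Without this step your argument does not prove the second assertion. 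Ironically, the directional-derivative formula $L'(\w;\dot\w)=\max_{\mathbf{g}'\in\partial L(\w)}\innerprod{\mathbf{g}'}{\dot\w}$ you mention in your last paragraph is exactly the ingredient that gives ``chain rule for all subgradients''; you just never deploy it.
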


\begin{proof}
    Since $L$ is convex, the ``chain rule holds'', that is, for all $\mathbf{g} \in \partial L$ we have:
    \begin{equation}
        \frac{d L}{dt} = \innerprod{\mathbf{g}}{\frac{d \w}{d t}}.
    \end{equation}
    First, apply this to the element of $\partial L$, $\mathbf{g}$, that corresponds to $\frac{d \w}{d t}$, then, by the definition of $\mathcal{S}$ and that of a dual norm, we have:
    \begin{equation}\label{eq:descent_ineq_1}
        \frac{d L}{dt} = - \left \lVert \frac{d \w}{d t} \right \rVert^2.
    \end{equation}
    Now, apply the ``chain rule'' for $\mathbf{g}^\star = \argmin_{\mathbf{g} \in \partial L} \| \mathbf{g} \|_\star$:
    \begin{equation}\label{eq:descent_ineq_2}
        \frac{d L}{d t} = \innerprod{\mathbf{g}^\star}{\frac{d \w}{d t}} \leq \|\mathbf{g}^\star\|_\star \left \lVert \frac{d \w}{d t} \right \rVert.
    \end{equation}
    By combining~\eqref{eq:descent_ineq_1},\eqref{eq:descent_ineq_2}, we get:
    \begin{equation}
         \|\mathbf{g}^\star\|_\star \left \lVert \frac{d \w}{d t} \right \rVert \geq \left \vert \frac{d L}{dt} \right\vert = \left \lVert \frac{d \w}{d t} \right \rVert^2,
    \end{equation}
    which implies $\left \lVert \frac{d \w}{d t} \right \rVert \leq \|\mathbf{g}^\star\|_\star$.
\end{proof}

\subsection{Equivalence of Max-Margin Solutions}\label{ssec:all_hyper}

\begin{lemma}\label{lemma:all_hyperplanes_ell_p}
    Let $\{(\xb_i,y_i)\}_{i=1}^m$ be a dataset with $\ell_p$ margin equal to $\epsilon^\star$. Any hyperplane that separates $\left\{\left\{ \xb_i^\prime \in \mathbb{R}^d: \|\xb_i^\prime - \xb_i \|_p \leq \epsilon^\star \right\}, y_i)\right\}_{i=1}^m$ is an $\ell_r$ max-margin separator for any $r$. 
\end{lemma}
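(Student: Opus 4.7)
My plan is to rewrite ``$\w$ separates the $\epsilon^\star$-inflated dataset'' in closed form via the dual norm, combine this with the hypothesis that the $\ell_p$-margin equals $\epsilon^\star$ to force a saturating equality, and then observe that this equality makes the robust $\ell_r$-margin vanish under every normalization $\|\w\|_r$ simultaneously.

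First, I would apply the dual-norm identity $\max_{\|\delta\|_p \leq \epsilon^\star} \innerprod{-y_i\w}{\delta} = \epsilon^\star \|\w\|_{p^\star}$ together with $|y_i|=1$ to compute the inner minimization in closed form:
\begin{equation*}
    \min_{\|\xb_i^\prime - \xb_i\|_p \leq \epsilon^\star} y_i \innerprod{\w}{\xb_i^\prime} = y_i \innerprod{\w}{\xb_i} - \epsilon^\star \|\w\|_{p^\star}.
\end{equation*}
Therefore $\w$ separates the $\epsilon^\star$-inflated dataset if and only if $\min_i y_i \innerprod{\w}{\xb_i} \geq \epsilon^\star \|\w\|_{p^\star}$.

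Next, the assumption that the $\ell_p$-margin equals $\epsilon^\star$ says $\max_{\w \neq 0}\frac{\min_i y_i \innerprod{\w}{\xb_i}}{\|\w\|_{p^\star}} = \epsilon^\star$, which yields the opposite inequality $\min_i y_i \innerprod{\w}{\xb_i} \leq \epsilon^\star \|\w\|_{p^\star}$ for every $\w \neq 0$. Combining with the characterization above, any separator $\w$ of the $\epsilon^\star$-inflated dataset in fact saturates this at equality: $\min_i y_i \innerprod{\w}{\xb_i} = \epsilon^\star \|\w\|_{p^\star}$.

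Finally, writing the $\ell_r$ max-margin of the inflated dataset, in the sense used in Theorem~\ref{thm:imp-bias-sd}, as
\begin{equation*}
    \gamma_r(\epsilon^\star) \;=\; \max_{\w \neq 0} \min_i \min_{\|\xb_i^\prime - \xb_i\|_p \leq \epsilon^\star} \frac{y_i \innerprod{\w}{\xb_i^\prime}}{\|\w\|_r} \;=\; \max_{\w \neq 0} \frac{\min_i y_i \innerprod{\w}{\xb_i} - \epsilon^\star \|\w\|_{p^\star}}{\|\w\|_r},
\end{equation*}
the numerator is $\leq 0$ for every $\w$ (by the previous paragraph), so $\gamma_r(\epsilon^\star) \leq 0$; and for any separator the numerator vanishes, so the value $0$ is attained. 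Hence every separator achieves the supremum of the $\ell_r$ max-margin problem, and since the vanishing of the numerator does not involve the normalizing denominator, the conclusion holds for every $r \geq 1$. I do not anticipate any real obstacle: the entire argument reduces to noting that at the critical radius $\epsilon^\star$ the robust margin is forced to be exactly $0$ regardless of which $\|\w\|_r$ is used to normalize, so ``$\ell_r$ max-margin separator'' collapses to ``any separator of the inflated data'' for every $r$.
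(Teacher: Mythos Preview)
Your proposal is correct and follows essentially the same route as the paper: both arguments reduce the robust $\ell_r$-margin of the $\epsilon^\star$-inflated data to $\frac{\min_i y_i\innerprod{\w}{\xb_i}-\epsilon^\star\|\w\|_{p^\star}}{\|\w\|_r}$, use the hypothesis that the $\ell_p$-margin equals $\epsilon^\star$ to force this numerator to be $\le 0$ for every $\w$ and exactly $0$ for any separator, and conclude that the max-margin value is $0$ and is attained by every separator, independently of $r$. The only cosmetic difference is that the paper packages the dual-norm computation into two auxiliary lemmas (one for the margin of a point, one for the margin of an $\ell_p$-ball), whereas you invoke the identity $\max_{\|\delta\|_p\le\epsilon}\innerprod{-y\w}{\delta}=\epsilon\|\w\|_{p^\star}$ directly.
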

This result informs us that any choice of algorithm in robust ERM (w.r.t to $\ell_p$ perturbations) for $\epsilon$ equal to the $\ell_p$ margin of the dataset will produce the same solutions.

We provide the proof of Lemma~\ref{lemma:all_hyperplanes_ell_p}. First, one can calculate the margin of a point $\mathbf{x}$ with respect to a 
linear separator $\mathbf{w}$:
\begin{lemma}\label{lemma:calculate_margin}
		The $\ell_p$ margin of a linear hyperplane $\mathbf{w}$ at a datapoint $(\mathbf{x},y)$ is $y\langle \mathbf{w},\mathbf{x} \rangle/ \|\mathbf{w}\|_{p^*}$.
	\end{lemma}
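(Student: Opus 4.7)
The plan is to reduce the statement to a standard duality calculation for the distance from a point to a hyperplane in a normed space. By definition, the $\ell_p$ margin of $\mathbf{w}$ at $(\mathbf{x},y)$ is the signed $\ell_p$ distance from $\mathbf{x}$ to the decision boundary $H = \{\mathbf{z} \in \mathbb{R}^d : \innerprod{\mathbf{w}}{\mathbf{z}} = 0\}$, with the sign determined by whether $\mathbf{x}$ lies on the side dictated by $y$, i.e., whether $y \innerprod{\mathbf{w}}{\mathbf{x}} \geq 0$. So I will first compute the unsigned distance $\mathrm{dist}_p(\mathbf{x}, H) = \min_{\mathbf{z} \in H} \|\mathbf{x}-\mathbf{z}\|_p$ and then reintroduce the sign factor $y$.

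Writing $\mathbf{v} = \mathbf{x}-\mathbf{z}$, the constraint $\innerprod{\mathbf{w}}{\mathbf{z}}=0$ becomes $\innerprod{\mathbf{w}}{\mathbf{v}} = \innerprod{\mathbf{w}}{\mathbf{x}}$, so
\begin{equation*}
\mathrm{dist}_p(\mathbf{x},H) \;=\; \min_{\mathbf{v} \in \mathbb{R}^d,\ \innerprod{\mathbf{w}}{\mathbf{v}} = \innerprod{\mathbf{w}}{\mathbf{x}}} \|\mathbf{v}\|_p.
\end{equation*}
For the lower bound, I invoke H\"older's inequality (equivalently, the definition of the dual norm): for any feasible $\mathbf{v}$,
\begin{equation*}
|\innerprod{\mathbf{w}}{\mathbf{x}}| = |\innerprod{\mathbf{w}}{\mathbf{v}}| \leq \|\mathbf{w}\|_{p^\star}\|\mathbf{v}\|_p,
\end{equation*}
hence $\|\mathbf{v}\|_p \geq |\innerprod{\mathbf{w}}{\mathbf{x}}|/\|\mathbf{w}\|_{p^\star}$.

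For matching tightness, I need to exhibit a feasible $\mathbf{v}^\star$ achieving equality in H\"older. By the extremal characterization of the dual norm there exists $\mathbf{u}$ with $\|\mathbf{u}\|_p = 1$ and $\innerprod{\mathbf{w}}{\mathbf{u}} = \|\mathbf{w}\|_{p^\star}$ (for $1<p<\infty$ one can take $u_i \propto \mathrm{sgn}(w_i)|w_i|^{p^\star/p}$; the boundary cases $p \in \{1,\infty\}$ are handled similarly by choosing $\mathbf{u}$ supported on a coordinate of maximum $|w_i|$ or by $u_i = \mathrm{sgn}(w_i)$, respectively). Scaling, $\mathbf{v}^\star = \bigl(\innerprod{\mathbf{w}}{\mathbf{x}}/\|\mathbf{w}\|_{p^\star}\bigr)\mathbf{u}$ is feasible and has $\|\mathbf{v}^\star\|_p = |\innerprod{\mathbf{w}}{\mathbf{x}}|/\|\mathbf{w}\|_{p^\star}$. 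Therefore $\mathrm{dist}_p(\mathbf{x},H) = |\innerprod{\mathbf{w}}{\mathbf{x}}|/\|\mathbf{w}\|_{p^\star}$, and attaching the sign gives the signed margin $y\innerprod{\mathbf{w}}{\mathbf{x}}/\|\mathbf{w}\|_{p^\star}$ as claimed.

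The only subtle point (not really an obstacle, but worth care) is the dual norm attainment in the endpoint cases $p=1$ and $p=\infty$, where the extremizer $\mathbf{u}$ is not unique; this is standard but should be stated explicitly to cover all $p \in [1,\infty]$. Everything else is a direct consequence of H\"older's inequality and the definition of margin.
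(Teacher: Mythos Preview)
Your proof is correct and rests on the same dual-norm identity as the paper's. The only difference is framing: the paper defines the margin as the largest $c$ for which $\langle \mathbf{w}, \mathbf{x}+c\mathbf{h}\rangle\geq 0$ for all $\|\mathbf{h}\|_p\leq 1$ and takes an infimum over $\mathbf{h}$ (immediately yielding $\langle \mathbf{w},\mathbf{x}\rangle - c\|\mathbf{w}\|_{p^\star}\geq 0$), whereas you compute the $\ell_p$ distance from $\mathbf{x}$ to the hyperplane via H\"older plus an explicit extremizer for tightness---these are dual formulations of the same one-line computation.
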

	\begin{proof}
		We want to find the largest $c$ for which  $\langle \mathbf{w}, \mathbf{x}+c\mathbf{h}\rangle\geq 0$ for all $\|\mathbf{h}\|_p\leq 1$.
		
		for all $\mathbf{h}$ with $\|\mathbf{h}\|_p\leq 1$
		Taking an infimum over $h$ results in
		\[\langle \mathbf{w},\mathbf{x}\rangle -c\|\mathbf{w}\|_{p^*}\geq 0\]
		and thus $c= \langle \mathbf{w} ,\mathbf{x} \rangle / \|\mathbf{w}\|_{p^*}$.
	\end{proof}

Next, this lemma allows one to calculate the margin of a ball around a point. We denote by $\Bpe$ the $\ell_p$ ball around $\xb$, i.e. $\Bpe = \{ \xb^\prime: \|\xb^\prime - \xb\|_p \leq \epsilon \}$.
\begin{lemma}
\label{lemma:calculate_margin_balls}
        The $\ell_r$-margin of the set $B_\epsilon^p(\mathbf{x})$ with label $y$ is 
        \[\frac{y\mathbf{w}\cdot \mathbf{x}-\epsilon\|\mathbf{w}\|_{p^*}}{\|\mathbf{w}\|_{r^*}}\]
    \end{lemma}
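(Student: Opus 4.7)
The plan is to reduce the set margin to a pointwise margin by taking an infimum, and then evaluate that infimum using duality. More concretely, by definition the $\ell_r$-margin of the set $B_\epsilon^p(\mathbf{x})$ with label $y$ with respect to the hyperplane $\mathbf{w}$ is the infimum over $\mathbf{x}' \in B_\epsilon^p(\mathbf{x})$ of the individual $\ell_r$-margin of $(\mathbf{x}',y)$. Applying Lemma~\ref{lemma:calculate_margin} at the point $\mathbf{x}'$ (with $r$ in place of $p$, so the dual norm $r^*$ appears in the denominator), this infimum equals
\begin{equation*}
\inf_{\|\mathbf{x}'-\mathbf{x}\|_p\le \epsilon}\frac{y\langle \mathbf{w},\mathbf{x}'\rangle}{\|\mathbf{w}\|_{r^*}} \;=\; \frac{1}{\|\mathbf{w}\|_{r^*}}\Bigl(y\langle \mathbf{w},\mathbf{x}\rangle \;+\; \inf_{\|\mathbf{h}\|_p\le \epsilon} y\langle \mathbf{w},\mathbf{h}\rangle\Bigr),
\end{equation*}
where I have written $\mathbf{x}' = \mathbf{x}+\mathbf{h}$ and used linearity of the inner product in the second factor.

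The second step is to evaluate $\inf_{\|\mathbf{h}\|_p\le \epsilon} y\langle \mathbf{w},\mathbf{h}\rangle$. Since $y\in\{\pm 1\}$, the map $\mathbf{h}\mapsto y\mathbf{h}$ is a bijection of $\{\|\mathbf{h}\|_p\le \epsilon\}$ onto itself, so this infimum is the same as $\inf_{\|\mathbf{h}\|_p \le \epsilon}\langle \mathbf{w},\mathbf{h}\rangle$. By positive homogeneity and the definition of the dual norm, this equals $-\epsilon\,\sup_{\|\mathbf{h}\|_p\le 1}\langle \mathbf{w},\mathbf{h}\rangle = -\epsilon\|\mathbf{w}\|_{p^*}$, with the infimum attained at some $\mathbf{h}^*$ lying on the sphere of radius $\epsilon$ in the $\ell_p$-norm. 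Substituting back yields exactly
\begin{equation*}
\frac{y\mathbf{w}\cdot\mathbf{x} - \epsilon\|\mathbf{w}\|_{p^*}}{\|\mathbf{w}\|_{r^*}},
\end{equation*}
which is the claim.

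There is essentially no hard step here; the argument is a direct consequence of the previous lemma plus one application of the dual norm identity, and this is why the authors present it without much fanfare. The only mild subtlety is the appearance of two different norms: the perturbation norm $p$, whose dual $p^*$ controls the worst-case drop in the linear form across $B_\epsilon^p(\mathbf{x})$, and the margin norm $r$, whose dual $r^*$ converts the unnormalized score into a geometric distance. Keeping these two dualities separate (and being careful that Lemma~\ref{lemma:calculate_margin} is applied with norm $r$, so $r^*$ appears in the denominator) is the only place where a slip could occur.
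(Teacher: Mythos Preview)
Your proof is correct and follows essentially the same approach as the paper: both arguments boil down to two applications of the dual-norm identity, one for the $\ell_p$ perturbation (yielding the $\epsilon\|\mathbf{w}\|_{p^*}$ term) and one for the $\ell_r$ margin (yielding the $\|\mathbf{w}\|_{r^*}$ denominator). The only cosmetic difference is that you first invoke Lemma~\ref{lemma:calculate_margin} to get the single-point margin and then minimize over $\mathbf{h}\in B_\epsilon^p(\mathbf{0})$, whereas the paper introduces two perturbation vectors $\mathbf{h}_1\in B_\epsilon^p(\mathbf{0})$ and $\mathbf{h}_2\in B_c^r(\mathbf{0})$ simultaneously and takes the infimum over both at once; the computations are otherwise identical.
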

    \begin{proof}
        We want to find the largest constant $c$ for which 
        \[y(\mathbf{w} \cdot (\mathbf{x}+\mathbf{h}_1+\mathbf{h}_2))\geq 0\]
        for all $\mathbf{h}_1\in B_\epsilon^p(\mathbf{0})$ and $\mathbf{h}_2\in B_c^r(\mathbf{0})$. Taking an infimum over all possible $\mathbf{h}_1$ and $\mathbf{h}_2$ results in 
        \[y\mathbf{w}\cdot \mathbf{x}-\epsilon\|\mathbf{w}\|_{p^*}-c\|\mathbf{w}\|_{r^*}\geq 0\]
        Therefore, the largest such possible $c$ is
        \[c=\frac{y\mathbf{w}\cdot \mathbf{x}-\epsilon\|\mathbf{w}\|_{r^*}}{\|\mathbf{w}\|_{r^*}}\]
    \end{proof}

    This result immediately implies Lemma~\ref{lemma:all_hyperplanes_ell_p}:
    \begin{proof}[Proof of Lemma~\ref{lemma:all_hyperplanes_ell_p}]
        Let $\mathbf{w}^*$ be the $\ell_r$ max-margin hyperplane separating the $\{({B_\epsilon^p(\mathbf{x}_i)},y_i)\}_{i=1}^m$. If the $\ell_p$-margin of the dataset is $\epsilon$, then Lemma~\ref{lemma:calculate_margin} implies that 
        \[\min_{i\in [1,m]} \frac{y\mathbf{w}^*\cdot \mathbf{x}-\epsilon\|\mathbf{w}\|_{p^*}}{\|\mathbf{w}\|_{r^*}}=0\]
        and therefore Lemma~\ref{lemma:calculate_margin_balls} implies that the $\ell_r$ max-margin hyperplane has margin $0$. 
        On the other hand, any separating hyperplane for $\{({B_\epsilon^p(\mathbf{x}_i)},y_i)\}_{i=1}^m$ has a separation margin that is at worst zero. Therefore, any separating hyperplane is an $\ell_r$ max-margin hyperplane.
        
    \end{proof}

\subsection{Robust ERM over Diagonal Networks}\label{ssec:diagonal_proofs}

We first show Corollary~\ref{cor:diag}.
\begin{proof}
    A diagonal neural network $f_{\mathrm{diag}}(\xb; \mathbf{u})$ is 2-homogeneous, since for any $c > 0$, it holds:
    \begin{equation}
        f_{\mathrm{diag}}(\xb; c\mathbf{u}) = \innerprod{(c\mathbf{u}_+)^2 - (c\mathbf{u}_-)^2}{\xb} = c^2 \innerprod{\mathbf{u}_+^2 - \mathbf{u}_-^2}{\xb}.
    \end{equation}
    Furthermore, for any $\xb$, the optimal perturbation is scale invariant as it is: $\argmin_{\|\delta\|_\infty \leq \epsilon} \innerprod{\mathbf{u}_+^2 - \mathbf{u}_-^2}{\xb + \delta} = -\epsilon \mathrm{sign}(\mathbf{u}_+^2 - \mathbf{u}_-^2)$, which is scale invariant, i.e., $\mathrm{sign}((\alpha\mathbf{u}_+)^2 - (\alpha\mathbf{u}_-)^2) = \mathrm{sign}(\mathbf{u}_+^2 - \mathbf{u}_-^2)$ for any $\alpha>0$.
    Thus, $f_{\mathrm{diag}}$ satisfies the conditions of Theorem~\ref{thm:adv_train_homog}.
\end{proof}

Now, we provide the proof of Proposition~\ref{prop:equivalence}, which states that $\ell_2$ minimization in parameter space is equivalent to $\ell_1$ minimization in predictor space for robust ERM in diagonal networks.

\begin{proof}
    Let us recall the two optimization problems:
    \begin{align}\label{eq:opt_l2_param-app}
        \begin{split}
            &\min_{\mathbf{u}_+ \in \mathbb{R}^d, \mathbf{u}_- \in \mathbb{R}^d} \frac{1}{2} \left(\| \mathbf{u}_+\|_2^2 + \|\mathbf{u}_-\|_2^2\right) \\
            \text{s.t. }& \min_{\|\xb^\prime_i - \xb_i\|_p \leq \epsilon} y_i\innerprod{\mathbf{u}_+^2 - \mathbf{u}_-^2}{\xb_i^\prime} \geq 1, \; \forall i \in [m],
        \end{split}
    \end{align}
    and
    \begin{align}\label{eq:opt_l1_pred-app}
        \begin{split}
            &\min_{\w} \frac{1}{2} \|\mathbf{w}\|_1\\
            \text{s.t. }& \min_{\|\xb^\prime_i - \xb_i\|_p \leq \epsilon} y_i \innerprod{\w}{\xb_i^\prime} \geq 1, \; \forall i \in [m].
        \end{split}
    \end{align}
    We will show that the two problems share the same optimal value $OPT$. Let $(\Tilde{\mathbf{u}}_+, \Tilde{\mathbf{u}}_-), \w^\star$ be optimal solutions of~\eqref{eq:opt_l2_param-app} and~\eqref{eq:opt_l1_pred-app}, respectively, with corresponding values $OPT_A, OPT_B$.
    \begin{itemize}
        \item First, we show that $OPT_B \leq OPT_A$. Let $\hat{\w} = \Tilde{\mathbf{u}}_+^2 - \Tilde{\mathbf{u}}_-^2$, then $\hat{\w}$ satisfy the constraints of~\eqref{eq:opt_l1_pred-app} and:

        \begin{equation}
            \begin{split}
                \|\hat{\w}\|_1 = \|\Tilde{\mathbf{u}}_+^2 - \Tilde{\mathbf{u}}_-^2\|_1 & = \sum_{j = 1}^d \left \vert \Tilde{u}_{+,j} - \Tilde{u}_{-,j} \right \vert \left \vert \Tilde{u}_{+,j} + \Tilde{u}_{-,j} \right \vert \\ & \leq \frac{1}{4} \sum_{j = 1}^d \left ( \Tilde{u}_{+,j} - \Tilde{u}_{-,j} \right )^2 + \left ( \Tilde{u}_{+,j} + \Tilde{u}_{-,j} \right )^2 \\ & = \frac{1}{2} \left(\| \Tilde{\mathbf{u}}_+\|_2^2 + \|\Tilde{\mathbf{u}}_-\|_2^2\right) = OPT_A.
            \end{split}
        \end{equation}
        As $\hat{\w}$ is a feasible point of~\eqref{eq:opt_l1_pred-app}, it is $OPT_B \leq \|\hat{\w}\|_1$ and we deduce $OPT_B \leq OPT_A$.
        \item Now, we prove the reverse relation. We decompose $\w^\star$ to its positive and negative part, i.e $\w^\star = \hat{\mathbf{u}}_+^2 - \hat{\mathbf{u}}_-^2$, where, observe, the supports (set of indices with non-zero values) of $\hat{\mathbf{u}}_+, \hat{\mathbf{u}}_-$ do not overlap. Then, $(\hat{\mathbf{u}}_+, \hat{\mathbf{u}}_-)$ satisfy the constraints of~\eqref{eq:opt_l2_param-app} and, furthermore:
        \begin{equation}
            \frac{1}{2}\left( \|\hat{\mathbf{u}}_+\|_2^2 + \|\hat{\mathbf{u}}_-\|_2^2 \right) = \frac{1}{2} \|\w^\star\|_1 \leq OPT_B.
        \end{equation}
        Since $(\hat{\mathbf{u}}_+, \hat{\mathbf{u}}_-)$ is a feasible point of~\eqref{eq:opt_l2_param-app}, we deduce that $OPT_A \leq OPT_B$.
    \end{itemize}
\end{proof}

\section{Cases of Steepest Descent \& Implicit Bias}\label{sec:steepest_descent_details}

\paragraph{Coordinate Descent}
In coordinate descent, at each step we only update the coordinate with the largest absolute value of the gradient. Formally, its update is given by:
\begin{equation}
    \Delta \w_{t} \in \text{conv} \left\{ - \frac{\partial \mathcal{L} (\w_t)}{\partial \w_t[i]} \mathbf{e}_i: i = \argmax_{j \in [d]} \left\vert \frac{\partial \mathcal{L} (\w_t)}{\partial \w_t[j]} \right\vert \right\},
\end{equation}
where $\mathbf{e}_i, i \in [d]$, denotes the standard basis and $\text{conv} \left( \cdot \right)$ stands for the convex hull. Coordinate descent has long been studied for its connection with the $\ell_1$ regularized exponential loss and Adaboost. It corresponds to Steepest Descent with respect to the $\ell_1$ norm, and at each step it holds: $\|\Delta \w\|_1 = \|\nabla \mathcal{L}\|_\infty$. In our experiments, we found it difficult to run robust ERM with coordinate descent for large values of perturbation $\epsilon$, both in linear models and neural networks. Also, it is computationally challenging to scale coordinate descent to large models, since only one coordinate gets updated at a time. These are the main reasons why we chose to experiment with Sign (Gradient) Descent in Section~\ref{ssec:nn}.

\paragraph{Sign (Gradient) Descent}
In Sign (Gradient) Descent, we only use the sign of the gradient to update the iterates, i.e. 
\begin{equation}
    \Delta\w_t = -\mathrm{sign}(\nabla \mathcal{L}(\w_t)).
\end{equation}
It corresponds to steepest descent with respect to the $\ell_\infty$ norm. Its connection with popular adaptive optimizers has made it an interesting algorithm to study for deep learning applications.

\paragraph{Implicit bias in homogeneous networks}

From the results of \citep{LyLi20}, we know that, for homogeneous networks, gradient descent converges in direction to a KKT point of a maximum margin optimization problem defined by the $\ell_2$ norm. For steepest descent, on the other hand, there is no such characterization, yet we expect a similar result to hold; namely, we expect running ERM with steepest descent to converge in direction to a point that has some relation to the maximum margin optimization problem defined by the norm of the algorithm. By making a leap of faith, we expect something similar to hold for robust ERM. Since the promotion of a margin in one norm can have very different properties from the promotion of a margin in a different norm, we expect robust ERM with gradient descent and sign descent to yield solutions with different properties.

\section{Additional Experiments}\label{sec:add-expr}

\paragraph{Linear models}
\begin{figure}
    \centering
    \includegraphics[scale=0.47]{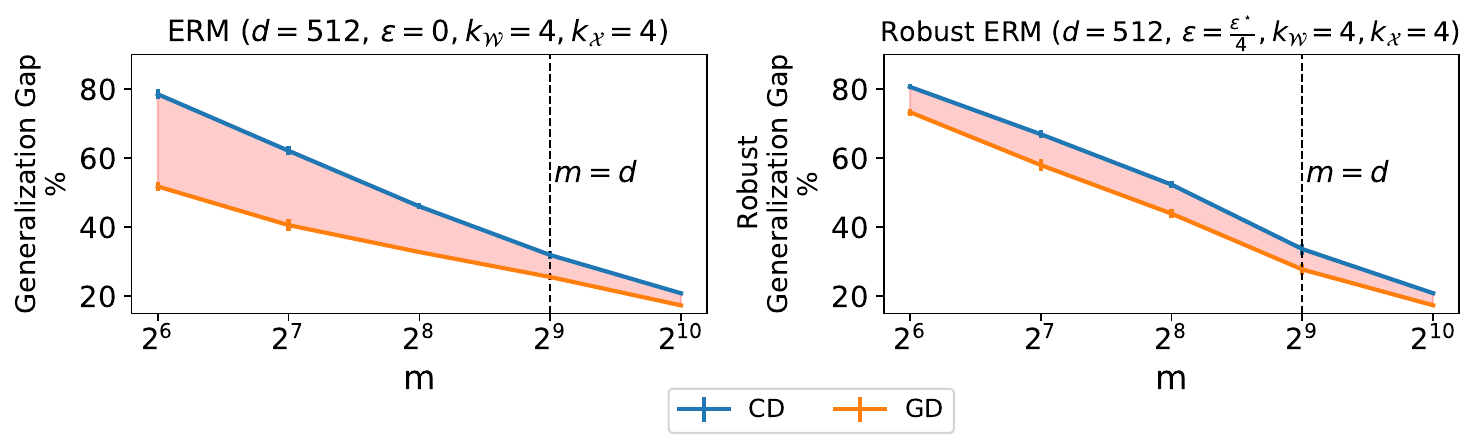}
    \includegraphics[scale=0.47]{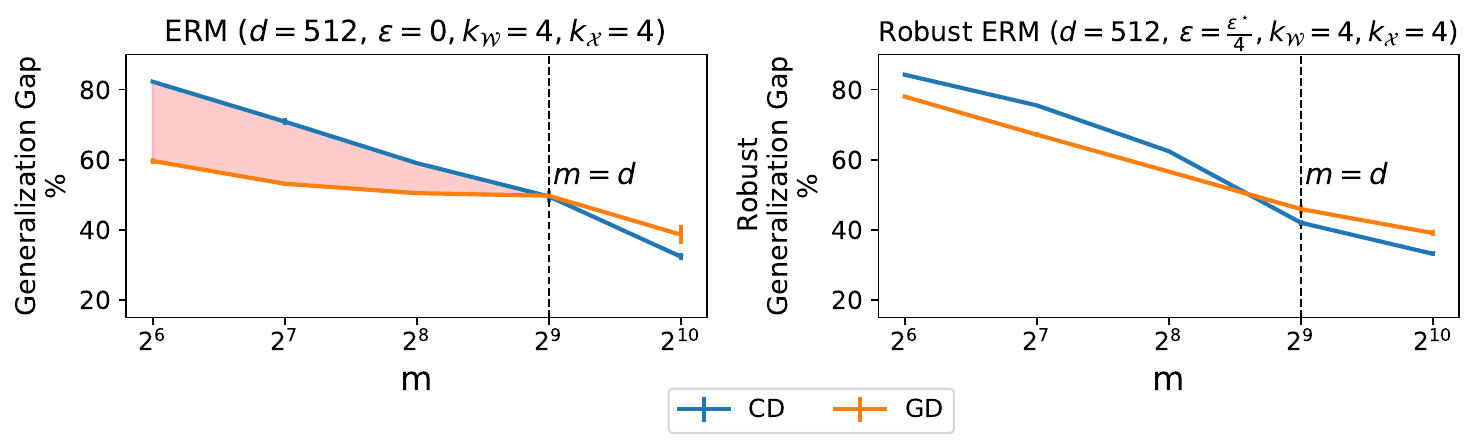}
    \caption{Binary classification of data coming from a dense teacher $\w^\star$ and sparse data $\xb$ (top) and from a sparse $\w^\star$ and sparse data $\xb$ (bottom). We compare performance of different algorithms with (\textit{right}) or without (\textit{left}) $\ell_\infty$ perturbations of the input in $\mathbb{R}^d$ using linear models. We plot the (robust) generalization gap, i.e., (robust) train minus (robust) test accuracy, of different learning algorithms versus the training size $m$. For robust ERM, $\epsilon$ is set to be $\frac{1}{4}$ of the largest permissible value $\epsilon^\star$. In accordance to the bounds of Section~\ref{ssec:cases}, it can still be the case that $\ell_2$ solutions will generalize better in robust ERM, due to the significant advantage of them in ERM.}
    \label{fig:additional-linear-gaps}
\end{figure}

We plot (robust) generalization gaps vs dataset size $m$ for distributions with $(k_{\mathcal{W}}, k_{\mathcal{X}})$ equal to (512, 4) (\textit{Dense, Sparse}) and $(k_{\mathcal{W}}, k_{\mathcal{X}})$ equal to (4, 4) (\textit{Sparse, Sparse}) on the top and the bottom of Figure~\ref{fig:additional-linear-gaps}, respectively. In accordance to the bounds of Section~\ref{ssec:cases}, it can still be the case that $\ell_2$ solutions will generalize better in robust ERM, due to the significant advantage of them in ERM. In Figure~\ref{fig:clean-heatmap}, we produce heatmaps similar to those of Figure~\ref{fig:linear-combined}, but the benefit of \texttt{CD} over \texttt{GD} is measured with respect to ``clean'' generalization ($\epsilon=0$), no matter what value of $\epsilon$ was used during training. In particular, for each combination of data/weight sparsity and perturbation $\epsilon$ used at training, we compute clean generalization gaps of \texttt{CD} and \texttt{GD} solutions for various values of dataset size $m$. We then aggregate the results over $m$ and we compute $\frac{1}{2^{10} - 2^6}\int_{2^6}^{2^{10}} \left( \texttt{GD}(m) -  \texttt{CD}(m)\right) dm$, whereas in Section~\ref{ssec:expr_linear} the curves $\texttt{GD}(m), \texttt{CD}(m)$ referred to the robust error (w.r.t. the value of $\epsilon$ used during training). We observe that there are cases such as the \textit{Dense, Dense} one with $k_\mathcal{W} = k_\mathcal{X} = d = 512$ that for $\epsilon > 0$ (in particular equal to $\epsilon^\star/2$ - bottom right corner in top left subplot), \texttt{GD} generalizes better than \texttt{CD} in terms of clean error, even though it was the other way around for robustness - see Figure~\ref{fig:linear-combined} and Figure~\ref{fig:ad-figure} (bottom right). This suggests that even if we nail the optimization bias in robust ERM, we might still incur a tradeoff between robustness and accuracy~\citep{Tsi+19}.
\begin{figure}
    \centering
    \includegraphics[scale=0.33]{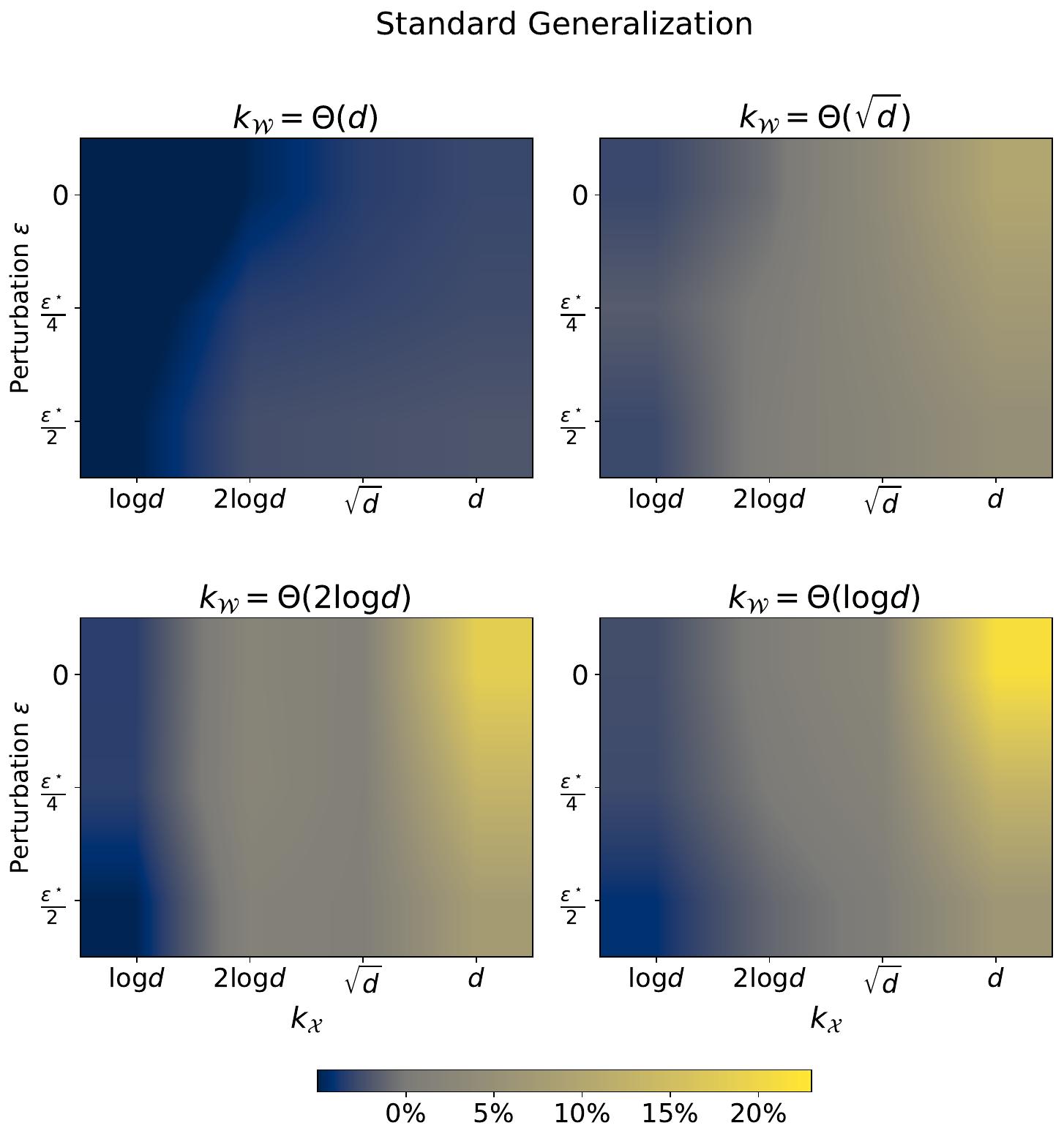}
    \caption{Average benefit, i.e. $\frac{1}{2^{10} - 2^6}\int_{2^6}^{2^{10}} \left( \texttt{GD}(m) -  \texttt{CD}(m)\right) dm$, of \texttt{CD} over \texttt{GD} (in terms of ``clean'' generalization gap) for different values of teacher sparsity $k_\mathcal{W}$, data sparsity $k_\mathcal{X}$ and magnitude of perturbation $\epsilon$ used during training (evaluation here is always with respect to $\epsilon=0$ - ``standard'' generalization). The dimension $d$ is fixed to be 512.}
    \label{fig:clean-heatmap}
\end{figure}

\paragraph{Fully-Connected Neural Networks}

\begin{wrapfigure}{R}{0.5\textwidth}
    \centering
    \includegraphics[scale=0.265]{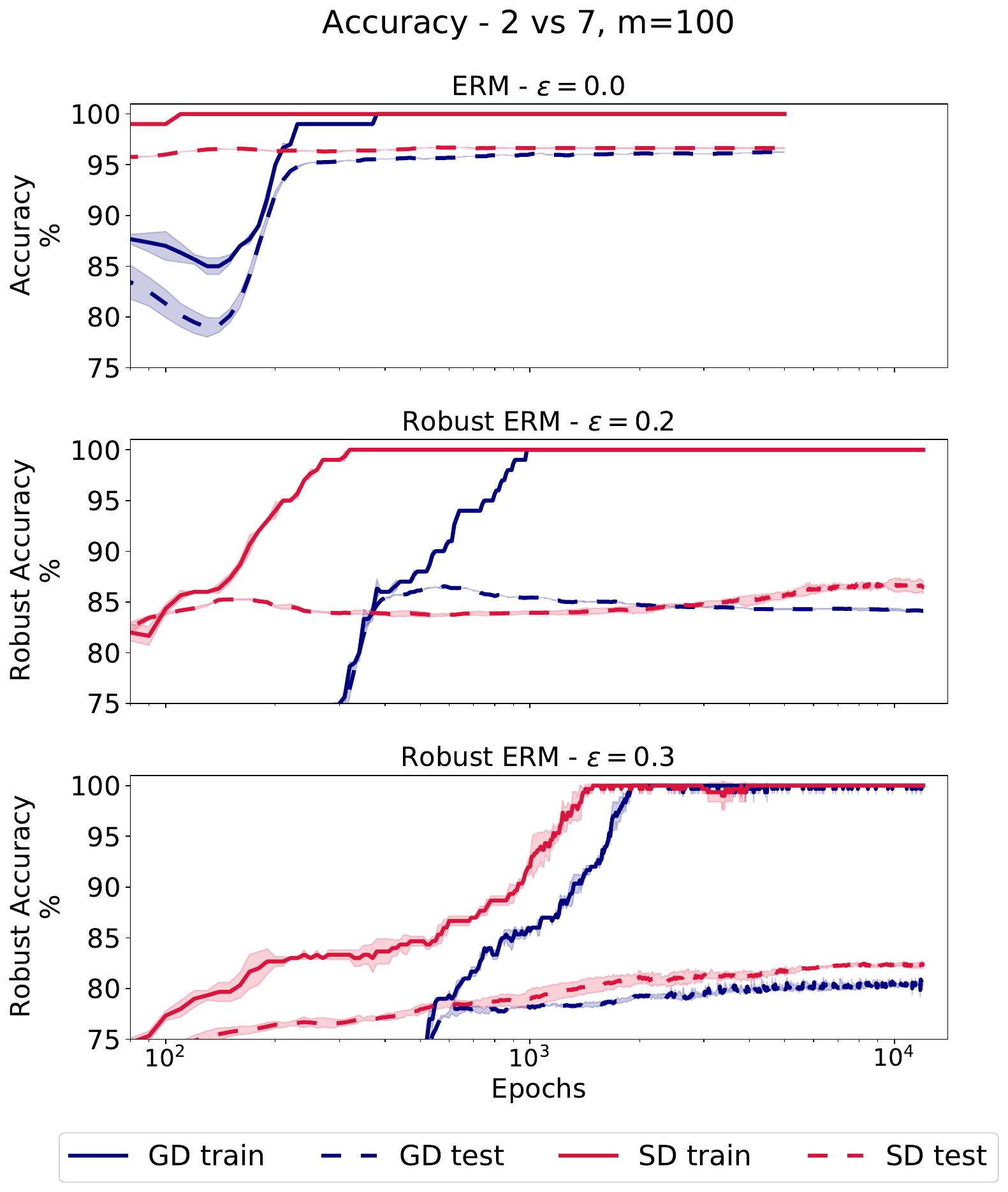}
    \vspace{-0.2cm}
    \caption{Accuracy during training in ERM (\textit{top}) and robust ERM for $\epsilon=0.2$ (\textit{center}) and $\epsilon=0.3$ (\textit{bottom}). Setting: 1 hidden layer ReLU networks trained on a subset of MNIST. Mean over 3 random seeds - randomness affects initialization and draw of random dataset. The gap between the generalization of the algorithms is more significant in robust ERM.}
    \label{fig:homog_m100}
    \vspace{-1.5cm}
\end{wrapfigure}
In Figure~\ref{fig:homog_m100}, we plot (robust) accuracy during training in ERM and robust ERM ($\epsilon=0.2, 0.3$) for 1 hidden layer ReLU networks trained on a subset of 100 images (randomly drawn in each seed) of digits 2 and 7 from the MNIST dataset. We observe that the gap between the performance of gradient descent and steepest descent in larger in robust ERM than in ERM.

\paragraph{Convolutional Neural Networks}

In Figure~\ref{fig:cnn-all}, we plot the (robust) train and test accuracy during training for various combinations of dataset size and perturbation magnitude. The main observation, summarized also in Figure~\ref{fig:nns-price} (left) in the main text, is that when there are little available data $m$, the implicit bias in robust ERM affects generalization more than in ``standard'' ERM (row-wise comparison in the Figure). Notice that the artifact of the light-ish bottom right corner in Figure~\ref{fig:nns-price} (non-trivial gap between \texttt{GD} and \texttt{SD} for $m=10,000$ and $\epsilon=0$) is due to the fact that \texttt{SD} becomes unstable at that time near convergence. Reducing the learning rate would have allievated this ``anomaly''.

\section{Experimental details}\label{sec:expr_details}
In this Section, we provide more details about our experimental setup. All experiments are implemented in PyTorch and were run either on multiple CPUs (experiments with linear models) or GPUs. Estimated GPU hours: 200.

\subsection{Experiments with synthetic data}

We consider the following distributions:
\begin{enumerate}
    \item \textit{Dense, Dense}: We sample points $\mathbf{x}_i \sim \mathcal{N}(0, I_d), i \in [m],$ and a ground truth vector $\w^\star \sim \mathcal{N}(0, I_d)$ that labels each of the $m$ points with $y_i = \mathrm{sgn}(\innerprod{\w^\star}{\xb_i})$.
    \item $k$-\textit{Sparse, Dense}: We sample points $\xb_i \sim \left\{-1, 0, +1\right\}, i \in [m]$, with corresponding probabilities $\{\frac{k}{2d}, 1 - \frac{k}{d}, \frac{k}{2d}\}$ (so expected number of non-zero entries is $k$) and a ground truth vector $\w^\star \sim \mathcal{N}(0, I_d)$ that labels each of the $m$ points with $y_i = \mathrm{sgn}(\innerprod{\w^\star}{\xb_i})$.
    \item \textit{Dense, $k$-Sparse}: Same as before, but now $\xb$ is dense and $\w^\star$ is $k$-sparse (with high probability).
    \item \textit{$k$-Sparse, $k$-Sparse}: 
    We sample points $\mathbf{x}_i \sim \mathcal{N}(0, I_d), i \in [m],$ and a ground truth vector $\w^\star \sim \left\{-1, 0, +1\right\}$ with corresponding probabilities $\{\frac{k}{2d}, 1 - \frac{k}{d}, \frac{k}{2d}\}$ (so expected number of non-zero entries is $k$) that labels each of the $m$ points with $y_i = \mathrm{sgn}(\innerprod{\w^\star}{\xb_i})$.
\end{enumerate}


\sloppy
\paragraph{Linear models}
For the experiments with linear models $f(\xb;\w) = \innerprod{\w}{\xb}$, we train with the exponential loss and we use an (adaptive) learning rate schedule $\eta_t = \min\{\eta_+, \frac{1}{(B + \epsilon)^2\Tilde{\mathcal{L}}(w_t)}\}$, where $\eta_+$ is a finite upper bound ($10^5$ in our experiments) and $B$ is the largest $\ell_\infty$ norm of the train data. This type of learning rate schedule can be derived by a discrete-time analysis of robust ERM over linear models (the direct analogue of Theorem~\ref{thm:imp-bias-sd}). A similar learning rate schedule appears in the works of~\citet{Gun+18a,LyLi20}. To allow a fair comparison between the two algorithms, we stop their execution when they reach the same training loss value\footnote{If the algorithm has not reached this value after $2 \times 10^5$ iterations, we stop at that epoch.} ($10^{-3}$). We start from the all-zero, $\w = 0$, initialization. 
\begin{figure}
    \centering
    \includegraphics[scale=0.3]{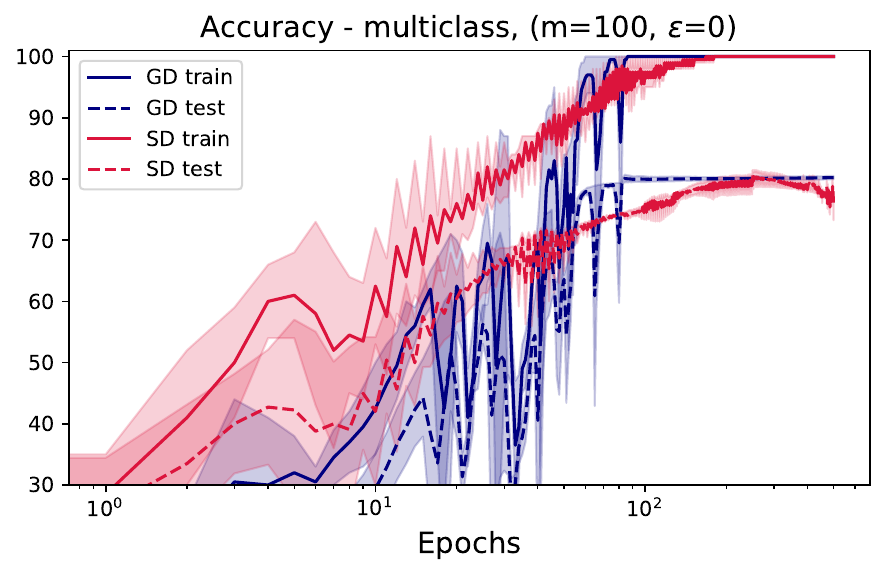}
    \includegraphics[scale=0.3]{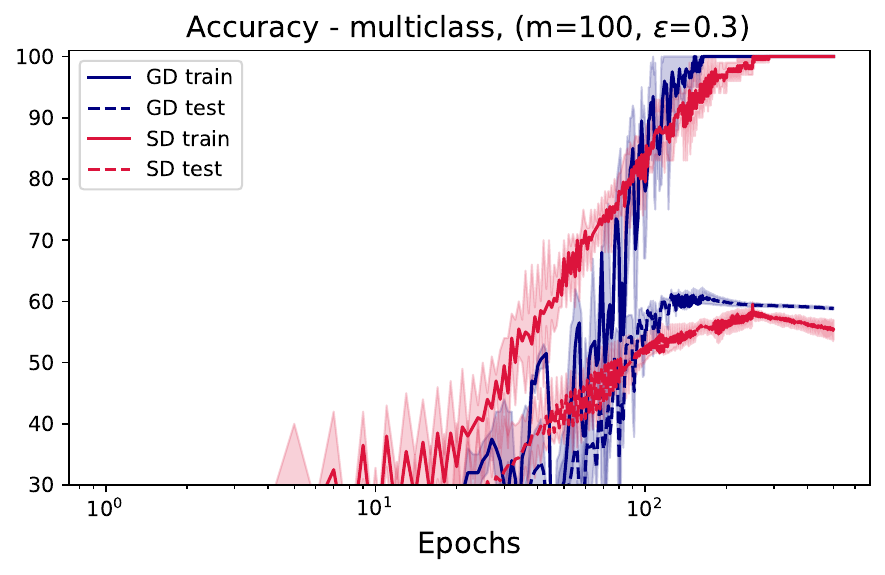}
    \includegraphics[scale=0.3]{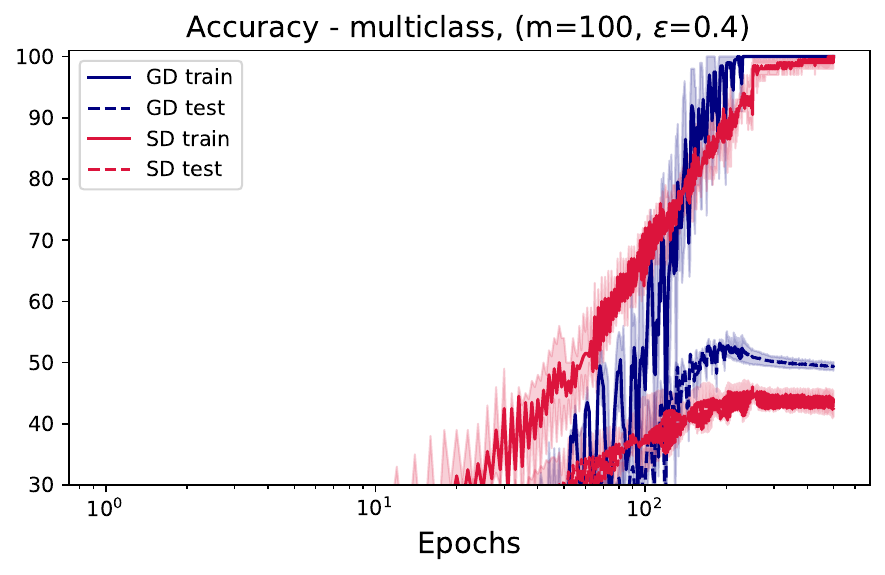}
    \includegraphics[scale=0.3]{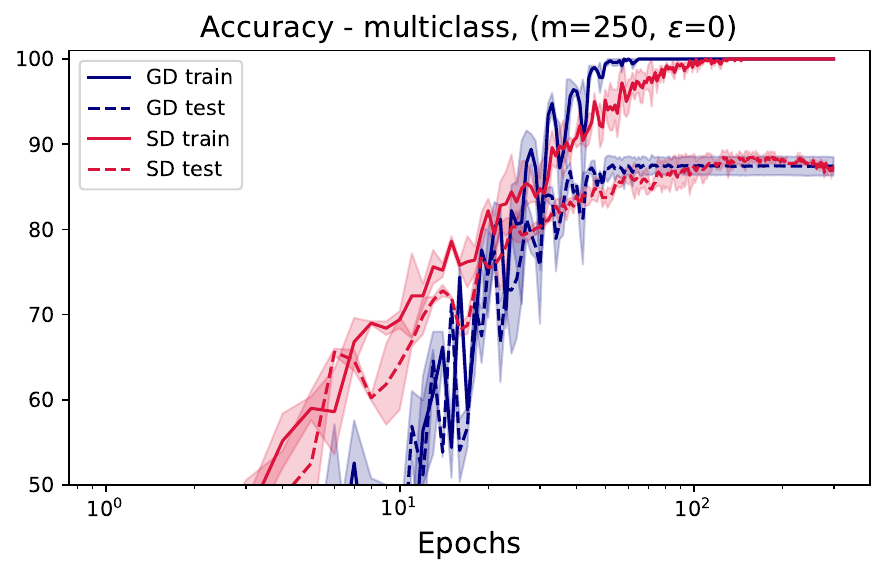}
    \includegraphics[scale=0.3]{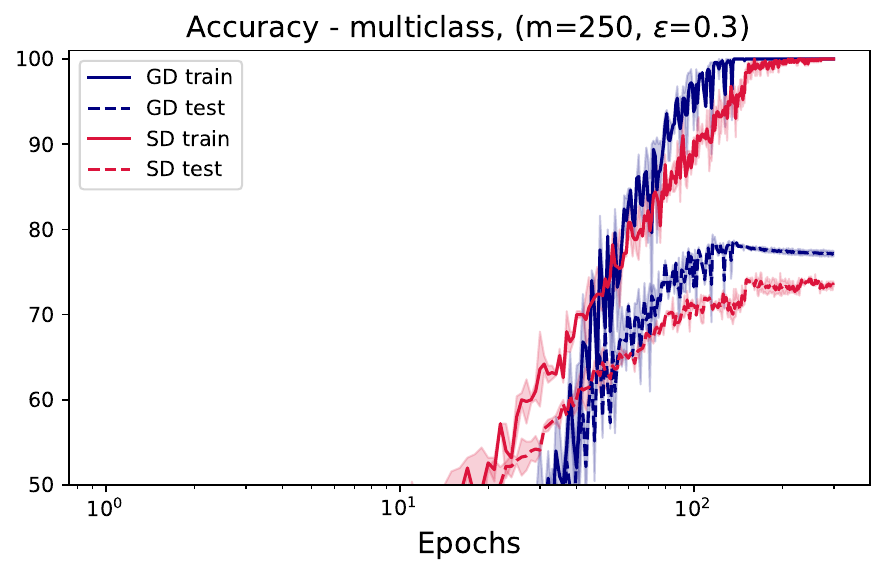}
    \includegraphics[scale=0.3]{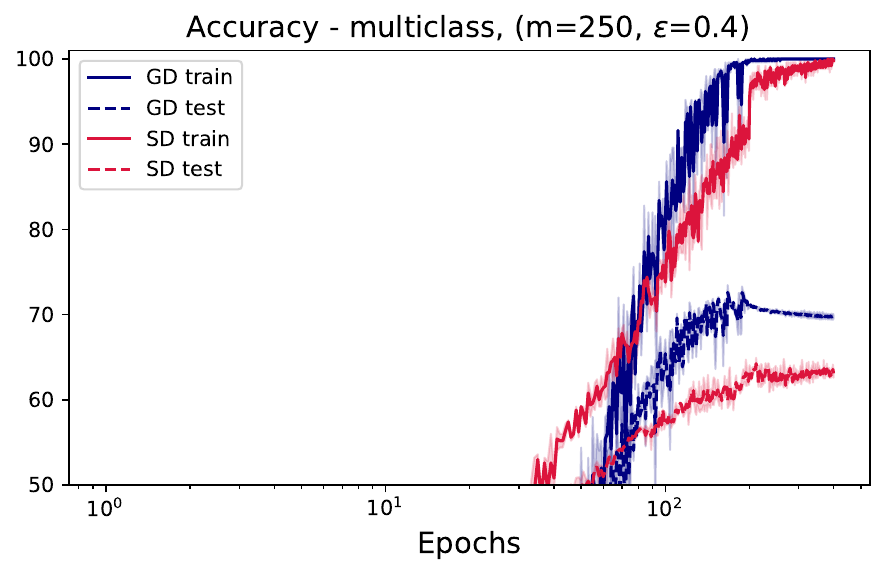}
    \includegraphics[scale=0.3]{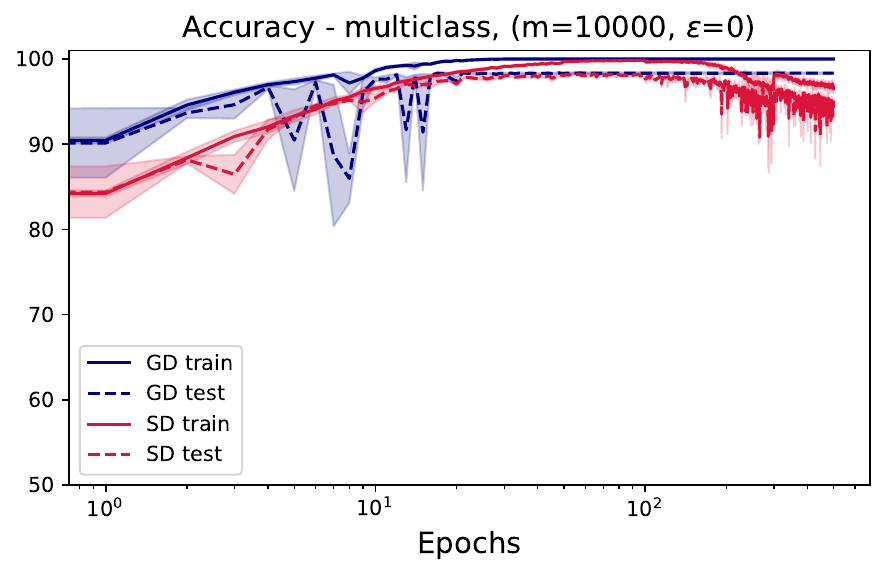}
    \includegraphics[scale=0.3]{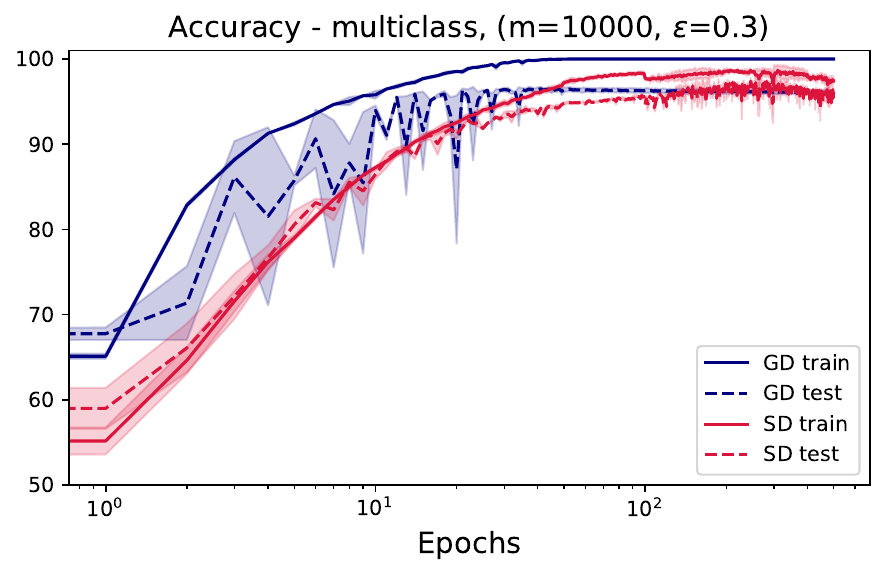}
    \includegraphics[scale=0.3]{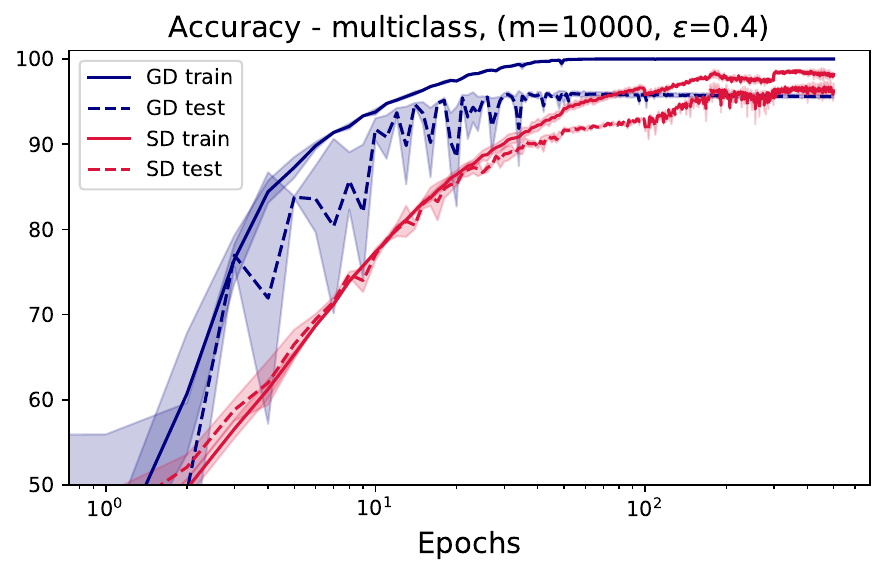}
    \caption{Training curves of CNNs trained on subsets of MNIST for various combinations of dataset size $m$ and perturbation magnitude $\epsilon$.}
    \label{fig:cnn-all}
\end{figure}

\paragraph{Diagonal neural networks}
For the experiments with the diagonal linear network $f(\xb;\w) = \innerprod{\w}{\xb}$, with $\w = \mathbf{u}_{+}^2 - \mathbf{u}_{-}^2$, we use a constant, small, learning rate $2 \times 10^{-3}$ and we adopt the same stopping criterion as with the ``vanilla'' linear models. We initialize $\mathbf{u}_{+}, \mathbf{u}_{-}$ with a constant value of $\frac{\alpha}{\sqrt{2d}}$ (where $\alpha$ is the initialization scale and $d$ the input dimension) which has been standard in prior works with diagonal networks. We set $\alpha=10^{-3}$ to promote ``feature'' learning, i.e. to induce the implicit bias ``faster'' - see \citep{Woo+20}.

For all the runs with the various models/algorithms, we sample $d^2$ independent points and use them as a test set (one draw per dimension, i.e. same test dataset across the different values of $m$ and $\epsilon$). The maximum value of perturbation, $\epsilon^\star$, is estimated by running ERM with coordinate descent for $10^5$ iterations. Notice that this results to a different $\epsilon^\star$ for each different draw of the dataset. The robust test accuracy is efficiently calculated, since the adversarial points can be calculated in closed form for linear models.  Our experimental protocol tried to ensure that we reach 100\% (robust) train accuracy in all runs. This is true in all cases but the distributions with sparse data, where we found that for training datasets of cardinality $m = 2d$, the margin of the dataset was small and, thus, convergence was very slow. In these cases, the robust train accuracy approached 100\%, but it was not exactly equal to it.

\subsection{Experiments with neural networks}

In all the experiments with the various values of $\epsilon$ and $m$, we train with 3 random seeds which correspond to a different set of samples drawn from the full dataset and a different initialization.

\paragraph{Fully Connected Neural Networks}

We run \texttt{GD} and \texttt{SD} with the same set of hyperparameters; constant learning rate equal to $10^{-5}$, batch size equal to the whole available data (the amount varies across different experiments), and, when $\epsilon > 0$, we estimate robustness by calculating perturbations with (projected) gradient descent (PGD). We use 10 iterations of PGD with step size $\alpha=\frac{\epsilon}{5}$. The initialization of the networks is scaled down by a factor of $10^{-2}$ to promote feature learning and faster margin maximization. We use the exponential loss.

\paragraph{Convolutional Neural Networks}

We use a standard architecture from \citep{Mad+18} consisting of convolutional, max-pooling and fully connected layers. The fully connected layers contain biases, so the network is not homogeneous. We start from PyTorch's default initialization. We used a constant learning rate equal to $0.1$ for \texttt{GD} and equal to $0.0001$ for \texttt{SD} (\texttt{SD} would diverge during training with larger learning rates that we tried). In Figure~\ref{fig:nns-price} (right) in the main text, we report the difference between accuracies obtained after convergence of train error to 0. In order to standardize the evaluation of the two algorithms, the reporting time corresponds to the first epoch hitting a certain train loss threshold, i.e. $10^{-3}$. For more challenging training regimes (i.e. large $\epsilon$), this threshold was set larger ($10^{-1}$), as we found it difficult to optimize to very small train loss values. In the experiments with CNNs, we used the cross entropy loss during training. When $\epsilon > 0$, we use 10 iterations of PGD with step size $\alpha=\frac{\epsilon}{5}$.
\end{document}